\definecolor{mypink1}{rgb}{0.858, 0.188, 0.478}
\definecolor{mypink2}{RGB}{219, 48, 122}
\definecolor{mypink3}{cmyk}{0, 0.7808, 0.4429, 0.1412}
\definecolor{mygray}{gray}{0.6}
\definecolor{mycolor8}{rgb}{0, 0, 1}
\definecolor{mycolor7}{rgb}{0.15, 0.15, 0.9}
\definecolor{mycolor6}{rgb}{0.3, 0.3, 0.75}
\definecolor{mycolor5}{rgb}{0.45, 0.45, 0.6}
\definecolor{mycolor4}{rgb}{0.6, 0.6, 0.45}
\definecolor{mycolor3}{rgb}{0.75, 0.75, 0.3}
\definecolor{mycolor2}{rgb}{0.9, 0.9, 0.15}
\definecolor{mycolor1}{rgb}{1, 1, 0}
\definecolor{mycolor8_}{rgb}{1, 0, 0}
\definecolor{mycolor7_}{rgb}{0.9, 0.15, 0.15}
\definecolor{mycolor6_}{rgb}{0.75, 0.3, 0.3}
\definecolor{mycolor5_}{rgb}{0.6, 0.45, 0.45}
\definecolor{mycolor4_}{rgb}{0.45, 0.6, 0.6}
\definecolor{mycolor3_}{rgb}{0.3, 0.75, 0.75}
\definecolor{mycolor2_}{rgb}{0.15, 0.9, 0.9}
\definecolor{mycolor1_}{rgb}{0, 1, 1}
\definecolor{mycolor8__}{rgb}{0, 1, 0}
\definecolor{mycolor7__}{rgb}{0.15, 0.9, 0.15}
\definecolor{mycolor6__}{rgb}{0.3, 0.75, 0.3}
\definecolor{mycolor5__}{rgb}{0.45, 0.6, 0.45}
\definecolor{mycolor4__}{rgb}{0.6, 0.45, 0.6}
\definecolor{mycolor3__}{rgb}{0.75, 0.3, 0.75}
\definecolor{mycolor2__}{rgb}{0.9, 0.15, 0.9}
\definecolor{mycolor1__}{rgb}{1, 0, 1}
\newcommand{\R}{\mathbb{R}}
\newcommand{\N}{\mathbb{N}}
\newcommand{\E}{\mathbb{E}}
\newcommand{\Ell}{\mathcal{L}}
\newcommand{\Prob}{\mathbb{P}}
\newcommand{\Id}{\mathcal{I}}
\newcommand{\limiting}[1]{\overset{\scriptscriptstyle\infty}{#1}}
\DeclareMathOperator{\tr}{tr}
\DeclareMathOperator{\sgn}{sgn}
\begin{document}

\title{MLPs at the EOC: Concentration of the NTK}

\author{
  \name D\'avid Terj\'ek\thanks{Corresponding author.}
  \email dterjek@renyi.hu \\
  \addr Alfr\'ed R\'enyi Institute of Mathematics \\ Budapest, Hungary
  \AND
  \name Diego Gonz\'alez-S\'anchez 
  \email diegogs@renyi.hu \\
  \addr Alfr\'ed R\'enyi Institute of Mathematics \\ Budapest, Hungary
}

\editor{}

\maketitle

\begin{abstract}
We study the concentration of the Neural Tangent Kernel (NTK) $K_\theta : \R^{m_0} \times \R^{m_0} \to \R^{m_l \times m_l}$ of $l$-layer Multilayer Perceptrons (MLPs) $N : \R^{m_0} \times \Theta \to \R^{m_l}$ equipped with activation functions $\phi(s) = a s + b \vert s \vert$ for some $a,b \in \R$ with the parameter $\theta \in \Theta$ being initialized at the Edge Of Chaos (EOC). Without relying on the gradient independence assumption that has only been shown to hold asymptotically in the infinitely wide limit, we prove that an approximate version of gradient independence holds at finite width. Showing that the NTK entries $K_\theta(x_{i_1},x_{i_2})$ for $i_1,i_2 \in [1:n]$ over a dataset $\{x_1,\cdots,x_n\} \subset \R^{m_0}$ concentrate simultaneously via maximal inequalities, we prove that the NTK matrix $K(\theta) = [\frac{1}{n} K_\theta(x_{i_1},x_{i_2}) : i_1,i_2 \in [1:n]] \in \R^{nm_l \times nm_l}$ concentrates around its infinitely wide limit $\limiting{K} \in \R^{nm_l \times nm_l}$ without the need for linear overparameterization. Our results imply that in order to accurately approximate the limit, hidden layer widths have to grow quadratically as $m_k = k^2 m$ for some $m \in \N+1$ for sufficient concentration. For such MLPs, we obtain the concentration bound $\Prob( \Vert K(\theta) - \limiting{K} \Vert \leq O((\Delta_\phi^{-2} + m_l^{\frac{1}{2}} l) \kappa_\phi^2 m^{-\frac{1}{2}})) \geq 1-O(m^{-1})$ modulo logarithmic terms, where we denoted $\Delta_\phi = \frac{b^2}{a^2+b^2}$ and $\kappa_\phi = \frac{\vert a \vert + \vert b \vert}{\sqrt{a^2 + b^2}}$. This reveals in particular that the absolute value ($\Delta_\phi=1$, $\kappa_\phi=1$) beats the ReLU ($\Delta_\phi=\frac{1}{2}$, $\kappa_\phi=\sqrt{2}$) in terms of the concentration of the NTK.
\end{abstract}


\section{Introduction} \label{introduction}
Formally introduced in the celebrated work of \citet{Jacotetal2018}, the NTK has been widely employed to analyze the problem of overparameterized learning. Given a neural network $N : \R^{m_0} \times \Theta \to \R^{m_l}$ that maps an input $x \in \R^{m_0}$ and a parameter $\theta \in \Theta$ to an output $N(x,\theta) \in \R^{m_l}$, the corresponding NTK at some parameter $\theta$ is the matrix-valued kernel $K_\theta : \R^{m_0} \times \R^{m_0} \to \R^{m_l \times m_l}$ defined as $K_\theta(x_1,x_2) = \partial_\theta N(x_1,\theta) {\partial_\theta N(x_2,\theta)}^*$ (the product of the Jacobian of $N(x_1,\cdot) : \Theta \to \R^{m_l}$ and the adjoint of the Jacobian of $N(x_2,\cdot) : \Theta \to \R^{m_l}$) for all input pairs $x_1,x_2 \in \R^{m_0}$. \citet{Jacotetal2018} showed that for MLPs using the Neural Tangent Parameterization (referred to as the NTP by \citet{Yangetal2021}), as width grows to infinity, $K_\theta$ at initialization (with $\theta$ drawn from the initial parameter distribution) converges in probability to a limiting NTK $\limiting{K}: \R^{m_0} \times \R^{m_0} \to \R^{m_l \times m_l}$. Later, \citet{Yang2020} proved almost sure convergence for a wide range of architectures while also giving theoretical justification to the gradient independence assumption (GIA) that was used heuristically by \citet{Jacotetal2018} to calculate $\limiting{K}$. Recently, \citet{Xuetal2024} proved that for the NTP, $K_\theta$ converges uniformly to $\limiting{K}$ when restricted to the hypersphere $\{ x_1, x_2 \in \R^k : \Vert x_1 \Vert = \Vert x_2 \Vert = 1 \}$, quantifying the convergence rate as well.

\citet{Jacotetal2018} proved that, in the infinitely wide limit, the NTK stays constant during gradient flow, which converges to a global minimum if the limiting NTK matrix $\limiting{K} = [\frac{1}{n} K_\infty(x_{i_1},x_{i_2}) : i_1,i_2 \in [1:n]]$ is positive definite. Then \citet{Duetal2018, Suetal2019, Oymaketal2019a, Aroraetal2019, Oymaketal2020, Songetal2021, Duetal2019, Zouetal2019, Nguyenetal2020, Nguyen2021, Liuetal2022} used similar ideas to prove that training finite width MLPs with gradient descent on a dataset $\{x_1,\cdots,x_n\} \in \R^{m_0}$ converges globally as long as the NTK matrix $K(\theta)$ stays positive during training. Using the NTP, the so-called lazy training phenomenon \citep{Chizatetal2019} can be exploited to show that even though $K(\theta)$ does not stay constant, as the width increases, it changes less and less during gradient descent, so that as long as the smallest eigenvalue of $K(\theta)$ is positive at initialization, it stays positive during training with sufficient overparameterization. Inspired by this, many works \citep{Montanarietal2020, Nguyenetal2021, Wangetal2021, Bombarietal2022, Banerjeeetal2023} started studying the concentration of the smallest eigenvalue of the NTK at initialization.

\citet{Woodworthetal2020} identified the so-called kernel and rich regimes of neural networks, with the NTP being a prime example of an MLP belonging to the kernel regime. In the kernel regime, lazy training makes wide models behave as random feature models, while in the rich regime, this phenomenon is absent. \citet{Yangetal2021} showed that in the kernel regime, feature learning does not happen in the sense that hidden layer activations are almost constant during training. \citet{Yangetal2021} proposed an MLP parameterization called the Maximal Update Parameterization ($\mu$P) that, being in the rich regime, does admit feature learning, even in the infinitely wide limit. Unfortunately, while the convergence of gradient descent in overparameterized learning in the kernel regime is well understood, much less is known in the rich regime, where the NTK evolves during training in a nontrivial manner. Nevertheless, in both the rich and kernel regimes, the behavior of $K(\theta)$ at initialization seems to play an important role in understanding gradient descent.

Parallel to these developments, the study of infinitely deep neural networks led \citet{Pooleetal2016} to the discovery of the so-called Edge of Chaos (EOC). \citet{Schoenholzetal2017} showed that the EOC is the regime where infinitely deep MLPs avoid both exploding and vanishing gradients. In this regime, \citet{Hayouetal2019} described the asymptotic behavior of the cosines (correlations) of the activations in the infinitely wide limit, \citet{Xiaoetal2020} characterized the spectrum of $K_\infty$ by sending first the width and then depth to infinity, \citet{Hayouetal2022} quantified the entries of $K_\infty$ and \citet{Seleznovaetal22} studied the entries of both $K_\theta$ and $K_\infty$ when width and depth grow with a constant ratio. Additionally, using the NTP as width and depth tend to infinity together, \citet{Haninetal2020} proved that the NTK does not become constant in the limit. Recently, \citet{Yangetal2024} extended $\mu$P to infinitely deep residual networks, identifying feature diversity (measuring the difference of activations that are in close proximity across depth) as an essential factor in deep neural networks (similar to feature learning in wide ones), showing in particular that the absolute value $\vert\cdot\vert$ maximizes feature diversity among homogeneous activation functions.

The motivation for our work was to study the concentration of the NTK matrix $K(\theta)$ around the limiting NTK matrix $\limiting{K}$ at initialization with an MLP parameterization that can exemplify both the kernel and rich regimes, equipped with $(a,b)$-ReLU activations $\phi(s) = a s + b \vert s \vert$ and varying layer widths, quantifying the effects of such hyperparameters.

We start with introducing a general MLP parameterization whose hyperparameters include varying layer widths, scaling coefficients (controlling kernel and rich regime behavior) and vector-valued output. Then we show that $K_\theta$ concentrates around its expectation with respect to the last layer matrix, which decomposes as a layerwise sum of products of inner products of activations and Frobenius inner products of backpropagation matrices. The terms in the sum are weighted based on the scaling coefficients, leading to an optimal choice of scaling coefficients \eqref{eq:optimal_qs} ensuring that none of the terms will vanish or blow up, with the hyperparameter $q \in \R$ interpolating between the kernel regime at $q=0$ and the rich regime at $q=1$. We then focus on the layerwise concentration of the components. Instead of treating the activation inner products directly, we study the concentration of the activation norms and of proxies of the cosine distances of activations, which by the law of cosines will yield the optimal concentration error of the activation cosines. Computing the expectation of the backpropagation inner products is usually done by heuristically relying on the GIA, which has only been rigorously justified asymptotically in the infinitely wide limit by \citep{Yang2020}. Avoiding the GIA heuristic, we prove that an approximate form of gradient independence holds for finite width, quantifying the rate at which the gradient dependence error term vanishes. In particular, we find that the strength of gradient dependence depends on the activation cosines, the propagation of which is quantified exactly in the infinitely wide limit at the EOC by \citet{mlpsateoc1}. Employing these results, we show that the components concentrate simultaneously for all layers in the MLP over a dataset, with the concentration increasing only logarithmically in terms of depth provided that hidden layer sizes grow quadratically \eqref{eq:optimal_gammas} as $m_k = k^2 m$ for a hyperparameter $m \in \N+1$. Note that we restrict to this setting only in our results concerning simultaneous concentration, enabling the reader to prove analogous concentration bounds for other layer width patterns. We argue that we argue that this quadratic growth is not only sufficient but necessary in order to accurately approximate the infinitely wide limit. With these in hand, we prove our main result about the concentration of the NTK matrix around its limit, stated below in a slightly simplified form.

\begin{theorem}[Limiting concentration of $K(\theta)$ (simplified)]~\\
Given the MLP $N: \R^{m_0} \times \Theta \to \R^{m_l}$ defined in \S~\ref{mlp}, a dataset $\{x_1,\cdots,x_n\} \subset \R^{m_0}$ of size $n \in \N+2$ with no parallel data points and setting \eqref{eq:optimal_qs} and \eqref{eq:optimal_gammas}, we have that
\[
\Prob\left( \left\Vert K(\theta) - \limiting{K} \right\Vert \leq O\left( \overline{\tau}^2 \left( \Delta_\phi^{-2} + \left( \log(l) + m_l^{\frac{1}{2}} \right) l \right) \sqrt{\log(ln) \log(m)} \kappa_\phi^2 m^{-\frac{1}{2}} \right) \right)
\]
is at least $1-O(m^{-1})$ with $\overline{\tau} = \max_{i \in [1:n]}\left\{ \Vert x_i \Vert \right\}$.
\end{theorem}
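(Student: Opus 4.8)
The plan is to pass from the $nm_l\times nm_l$ block matrix to a uniform control of the scalar entry kernels, and then to exploit the layerwise decomposition of the NTK together with the activation and backpropagation concentration estimates established in the body. First I would reduce: since $K(\theta)-\limiting{K}$ is symmetric with $(i_1,i_2)$ block equal to $\tfrac1n\big(K_\theta(x_{i_1},x_{i_2})-K_\infty(x_{i_1},x_{i_2})\big)$, and the operator norm of a symmetric block matrix is at most that of the symmetric scalar matrix of block operator norms (hence at most its maximum row sum), one gets
\[
\big\Vert K(\theta)-\limiting{K}\big\Vert\ \le\ \max_{i_1,i_2\in[1:n]}\big\Vert K_\theta(x_{i_1},x_{i_2})-K_\infty(x_{i_1},x_{i_2})\big\Vert .
\]
It then suffices to bound this maximum with probability $1-O(m^{-1})$; controlling the $n^2$ pairs simultaneously, together with the $l$ layers appearing below, is what contributes the $\sqrt{\log(ln)}$ factor through the maximal inequalities of the body (a slightly sharper reduction is available but unnecessary).

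Fix a pair $(x_1,x_2)$. By the last-layer concentration result, $K_\theta(x_1,x_2)$ concentrates around its conditional expectation given the first $l-1$ layers, which is an explicit scalar multiple of $I_{m_l}$,
\[
\E\big[ K_\theta(x_1,x_2) \mid \text{layers } 1,\dots,l-1 \big] = \Big( \sum_{k} c_k \langle \alpha_k(x_1),\alpha_k(x_2)\rangle \langle J_k(x_1),J_k(x_2)\rangle_F \Big) I_{m_l},
\]
where the $c_k$ are the scaling coefficients of \eqref{eq:optimal_qs}, $\alpha_k$ the layer-$k$ activations and $J_k$ the Jacobians of the output with respect to the layer-$k$ preactivations, while the limit is the matching sum $K_\infty(x_1,x_2)=\big(\sum_k c_k\,\Sigma_k\,\overline G_k\big)I_{m_l}$ with $\Sigma_k,\overline G_k$ the infinitely wide limits. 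Writing each summand as $(A_k-\Sigma_k)B_k+\Sigma_k(B_k-\overline G_k)$ with $A_k=\langle\alpha_k(x_1),\alpha_k(x_2)\rangle$ and $B_k=\langle J_k(x_1),J_k(x_2)\rangle_F$, the per-pair error is bounded by the last-layer concentration error plus $\sum_k c_k\big(\vert A_k-\Sigma_k\vert\,\vert B_k\vert+\vert\Sigma_k\vert\,\vert B_k-\overline G_k\vert\big)$, reducing the problem to the layerwise concentration of the activation inner products and of the backpropagation Frobenius inner products.

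For the activation inner products I would not attack $A_k-\Sigma_k$ directly but split it through $A_k=\Vert\alpha_k(x_1)\Vert\,\Vert\alpha_k(x_2)\Vert\,\rho_k$: concentrate each norm around its limit and the cosine $\rho_k$ via the law of cosines, feeding in the exact EOC propagation of the cosines from \citet{mlpsateoc1}. This is where $\overline\tau^2$ (the activation scale is set by the input norms), a factor $\kappa_\phi^2$, and the factor $\Delta_\phi^{-2}$ enter, the last because the EOC cosine map degenerates as $\Delta_\phi\to0$, which both slows convergence to the limit and inflates the relative fluctuation of $\rho_k$. For the backpropagation terms $B_k-\overline G_k$ the decisive point is that $\overline G_k$ is exactly the value predicted by gradient independence, which is only valid in the limit; here I would invoke the approximate-gradient-independence estimate of the body, which quantifies the gradient-dependence error in terms of the activation cosines and the widths and which, together with the genuinely matrix-valued last-layer fluctuation, accounts for the $m_l^{1/2}$ and a factor $l$ coming from the accumulation of errors across the layers between $k$ and $l$. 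Summing over $k$ and inserting the quadratic growth $m_k=k^2 m$ of \eqref{eq:optimal_gammas} turns $\sum_k m_k^{-1/2}$, of order $l\,m^{-1/2}$ for constant width, into order $\log(l)\,m^{-1/2}$, producing the $(\log(l)+m_l^{1/2})\,l$ shape; finally $\sqrt{\log(m)}$ and the probability $1-O(m^{-1})$ come from the tails of the concentration inequalities, calibrated so that the union bound over layers and data pairs still closes.

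I expect the main obstacle to be the approximate gradient independence, i.e.\ bounding $B_k-\overline G_k$ without appealing to the GIA: this requires disentangling the statistical dependence between the forward weights entering $\alpha_k$ and the same weights reappearing in $J_k$, showing the dependence is $O(m^{-1/2})$ with a constant governed by the EOC cosine dynamics, and doing so uniformly enough in $k$ and over the dataset that the layerwise sum and the maximal inequalities can be closed at the stated rate.
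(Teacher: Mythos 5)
Your proposal follows essentially the same route as the paper's proof: reduce the block operator norm to a maximum of block norms (the paper uses the row-sum bound of Tretter), concentrate each block around its conditional expectation given $\theta_{1:l-1}$ via the Hanson--Wright-based last-layer estimate (whence the $m_l^{1/2}$), telescope the products $X_k X'_{k+1,l}$ against their limits, control the forward terms through norms and cosines with the law of cosines, control the backward terms through the quantitative approximate gradient independence, and close the union bound over layers and data pairs with the maximal inequalities and the choice $p=\log_m(\log m)$. The decomposition, the role of $\Delta_\phi^{-2}$, $\kappa_\phi^2$, $\overline{\tau}^2$ and of the quadratic width growth are all identified as in the paper, so the plan is correct and not a genuinely different argument.
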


Note that any dataset $\{x_1,\cdots,x_n\}$ with no repeated data points can be turned into one with no parallel data points by replacing $x_i$ for all $i \in [1:n]$ with $[x_i,\beta]$ for some $\beta>0$, which is equivalent to having a bias in the first layer. The above result can be combined with \citet[Theorem~18]{mlpsateoc1} to obtain spectral bounds for $K(\theta)$ at initialization. Denoting the iterates $\theta_t$ for $t \in \N+1$ obtained from the initial parameter $\theta$ by performing gradient descent on some loss function over the dataset, these spectral bounds should be sufficient to prove the convergence of gradient descent in the kernel regime where $\Vert K(\theta_t) - K(\theta) \Vert$ can be shown to vanish in terms of $m$. Unfortunately, more is needed in the rich regime, where $K(\theta_t)$ deviates significantly from the initial $K(\theta)$ in the absence of lazy training. Understanding the nature of these deviations can be the key to understanding the global convergence of gradient descent in the presence of feature learning. Note that the hyperparameter $q$ interpolating kernel ($q=0$) and rich regime ($q=1$) behavior does not appear in the theorem above, as these options result in identical NTKs at initialization. 

The organization of the rest of the paper is as follows. We conclude \S~\ref{introduction} by discussing related works in \S~\ref{related} and listing our contributions in \S~\ref{contributions} and introduce some notation in \S~\ref{preliminaries}. In \S~\ref{ntk}, we propose our general MLP formulation in \S~\ref{mlp} and derive its Jacobian, study layerwise concentration of the NTK components in \S~\ref{layerwise} and then prove the simultaneous concentration of all components and the NTK matrix itself over a dataset in \S~\ref{limiting}. We conclude by discussing the limitations of our work in \S~\ref{limitations} along with future directions.

\subsection{Related work}\label{related}

\citet{Duetal2018} and \citet{Suetal2019} proved that the term of the NTK matrix corresponding to the first layer matrix concentrates around its limit for shallow ($l=2$) ReLU MLPs using the NTP. \citet{Duetal2019} proved that the term of the NTK matrix corresponding to the second-to-last layer matrix concentrates around its limit for deep MLPs with hidden layers of the same size and smooth activation functions using the NTP. Recently, for deep ReLU MLPs with hidden layers of the same size using the NTP, \citet{Xuetal2024} proved that all terms of the NTK except the one corresponding to the last layer matrix uniformly concentrate around those in the limiting NTK for data from the unit sphere, i.e., all terms except the last in $K_\theta(x_1,x_2)$ concentrate around those in $\limiting{K}(x_1,x_2)$ for all $x_1,x_2 \in \R^{m_0}$ with $\Vert x_1 \Vert = \Vert x_2 \Vert = 1$. While this concentration bound can turn into a bound for $\Vert K(\theta)-\limiting{K} \Vert$ for spherical datasets of any size, their proof relies heavily on the fact that the number of possible activation patterns for the ReLU is finite, making it unlikely to generalize to nonhomogeneous activations. Additionally, the amount of overparameterization required in terms of the number of hidden layers grows much faster than ours as \citet{Xuetal2024} need $m = \Omega(e^{(l-1)^2})$. These works do not treat the last NTK term because they keep the output layer matrix fixed, making the last term absent in their formulation. In contrast, we consider the realistic setting with all layer matrices including the last one being random. On top of this, while \citet{Duetal2018, Suetal2019, Duetal2019, Xuetal2024} use the NTP, we study a general parameterization that covers both the kernel and the rich regimes.

Many works, including \citet{Jacotetal2018}, made implicit use of the GIA heuristic before it was justified on a theoretical basis by \citet{Yang2020}, extended in \citet{Yang2021} to cover a wider range of scenarios using free probability. These works show that gradient independence holds with very general assumptions for a wide range of architectures asymptotically in the infinite width limit, retroactively validating the calculations of \citet{Jacotetal2018} that led to the limiting NTK. Since we consider MLPs of finite width, we cannot rely on the asymptotic theory of \citet{Yang2020, Yang2021}. Instead, we quantify the error resulting from gradient dependence at finite width, showing that it vanishes at the rate $O(m^{-1})$.

\citet{Yangetal2021} proposed $\mu$P focusing on neural networks with constant hidden layer sizes and later extended it to varying layer widths by \citet{Yangetal2023} in what is known as the Spectral Parameterization (SP). While our MLP parameterization in \S~\ref{mlp} is another such extension of $\mu$P, it does not cover SP. One property of the latter is that the norms of hidden layer activations scale as the square roots of hidden layers by \citet[Desideratum~1]{Yangetal2023}, which means that SP is not at the EOC, where the activation norms across depth are approximately equal to the norm of the input for homogeneous activations at the EOC by \citet[\S~3.1]{Hayouetal2019}. This makes the corresponding limiting NTK dependent on the relative sizes of hidden layers. On the contrary, in our parameterization there is no such dependence, with the hidden layer sizes serving only to control the strength of concentration in the individual layers.

\subsection{Contributions} \label{contributions}
We propose
\begin{itemize}
\item an MLP parameterization with $(a,b)$-ReLUs at the EOC exemplifying both the kernel and rich regimes,
\item a width pattern enabling the accurate approximation of the infinitely wide limit and
\item a fully quantitative bound for the concentration of the NTK matrix around its limit.
\end{itemize}

\section{Preliminaries}\label{preliminaries}

Given $i, j \in \N$, we define the tuple $[i:j] = (i,i+1,\cdots,j-1,j)$ (which is the empty tuple $()$ if $i > j$). For any $m, n \in \N$, we denote by $m\N+n$ the set $\{mr+n : r \in \N\}$. We denote by $\Vert \cdot \Vert$ the Euclidean and by $\Vert \cdot \Vert_\infty$ the max norm on $\R^n$. Let $G,H$ be Hilbert spaces. The space of bounded linear operators from $G$ to $H$ is denoted $\Ell(G,H)$ and we equip it with the operator norm $\Vert\cdot\Vert$. The adjoint of a linear operator $A \in \Ell(G,H)$ is the unique linear operator $A^* \in \Ell(H,G)$ such that $\langle A x_1, x_2 \rangle = \langle x_1, A^* x_2 \rangle$ for all $x_1 \in G$ and $x_2 \in H$. For Euclidean spaces $G=\R^m$, $H=\R^n$, we denote the space of $n \times m$ matrices $\R^{n \times m} = \Ell(H,G)$. For such matrices, we denote the Frobenius norm by $\Vert \cdot \Vert_F$ and the infinity norm by $\Vert \cdot \Vert_\infty$ (with the latter defined as $\Vert A \Vert_\infty = \max_{i \in [1:n]}\{ \sum_{j \in [1:m]} \vert A_{i,j} \vert \}$). We denote the set of $n \times n$ symmetric matrices by $\mathbb{S}^n = \{ A \in \R^{n \times n} : A = A^*\}$ and the set of $n \times n$ symmetric positive semidefinite matrices by $\mathbb{S}^n_+ = \{ A \in \mathbb{S}^n : \langle x, A x \rangle \geq 0 \text{ for } \forall x \in \R^n \}$. For $A \in \mathbb{S}^n$, we denote the $i$th eigenvalue by $\lambda_i(A)$ with the order being descending as $\lambda_1(A) \geq \cdots \geq \lambda_n(A)$ and the smallest and largest eigenvalues by $\lambda_{\min}(A)=\lambda_n(A)$ and $\lambda_{\max}(A)=\lambda_1(A)=\Vert A \Vert$, respectively. Note that by the Gershgorin circle theorem we have $\|A\| \le \|A\|_\infty$ for any $A \in \mathbb{S}^n$. We denote by $\Id_n \in \R^{n \times n}$ the identity matrix on $\R^n$. We denote the tensor product of a pair of vectors $x,y \in \R^n$ by $x \otimes y = [ x_{i_1} y_{i_2} : i_1,i_2 \in [1:n]] \in \R^{n\times n}$ and the second tensor power of a vector $x \in \R^n$ by $x^{\otimes 2} = x \otimes x\in \mathbb{S}^n_+$. For $n \in \N+1$, we denote the $n$-dimensional constant $1$ vector by $\mathbbm{1}_n = [ 1 : i \in [1:n]] \in \R^n$. For matrices $A_1 \in \R^{n_1 \times m_1}$ and $A_2 \in \R^{n_2 \times m_2}$, we denote their Kronecker product $A_1 \boxtimes A_2 = [ {A_1}_{i_1,i_2} A_2 : i_1 \in [1:n_1], i_2 \in [1:m_1]] \in \R^{n_1 n_2 \times m_1 m_2}$. Given $x \in \R^n$, we define the corresponding diagonal matrix $D_x \in \mathbb{S}^n$ as ${D_x}_{i_1,i_2} = x_i$ if $i_1 = i_2 = i$ and $0$ otherwise for all $i_1,i_2 \in [1:n]$. Given $m, n \in \N+1$ and $x \in \R^m$, we define the right multiplier operator $M_{x,n} \in \Ell(\R^{n \times m}, \R^n)$ as $M_{x,n} A = A x$ for all $A \in \R^{n \times m}$. Note that $\Vert M_{x,n} \Vert \leq \Vert x \Vert$ (i.e., the operator norm of $M_{x,n}$ is bounded by the Euclidean norm of $x$) and the adjoint $M_{x_1,n}^* \in \Ell(\R^n, \R^{n \times m})$ is given as $M_{x_1,n}^* x_2 = x_2 \otimes x_1$ for all $x_1 \in \R^m$ and $x_2 \in \R^n$, implying in particular that $M_{x_1,n} M_{x_2,n}^* = \langle x_1, x_2 \rangle \Id_n$ for all $x_1,x_2 \in \R^m$.

The infinity and Lipschitz norms of real-valued functions are denoted by $\Vert \cdot \Vert_\infty$ and $\Vert \cdot \Vert_L$, respectively. Given a function $F:G \to H$, we say that it is differentiable if it is Fr\'echet differentiable, i.e., if there exists a bounded linear operator $\partial F(x) \in \Ell(G,H)$, which we refer to as the Jacobian of $F$ at $x$, satisfying $\lim_{y \to x }\frac{\Vert F(y) - F(x) - \partial F(x) (y - x) \Vert}{\Vert y - x \Vert}=0$. For a function $f$ with the same domain and codomain, we denote by $f^{\circ n}$ the nested composition of $f$ with itself $n \in \N$ times, with $f^{\circ 0}$ being the identity. We use the $O(\cdot)$ and $\Omega(\cdot)$ asymptotic notation in the sense that for functions $f,g : \N \to \R_+$, we say that $f(m) = O(g(m))$ (resp. $f(m) = \Omega(g(m))$) if there exists implicit constants $C \in \R_+$ and $m_0 \in \N$ such that $f(m) \leq C g(m)$ (resp. $f(m) \geq C g(m)$) for all $m \geq m_0$. The notation $f = \Theta(g)$ means that both $f=O(g)$ and $f=\Omega(g)$ hold.

A real-valued random variable $X$ is $K$-sub-gaussian if its sub-gaussian norm $\Vert X \Vert_{\psi_2} = \inf\left\{ t > 0 : \E e^{\frac{X^2}{t^2}} \leq 2 \right\}$ satisfies the bound $\Vert X \Vert_{\psi_2} \leq K$ and $K$-sub-exponential if its sub-exponential norm $\Vert X \Vert_{\psi_1} = \inf\left\{ t > 0 : \E e^{\frac{\vert X \vert}{t}} \leq 2 \right\}$ satisfies the bound $\Vert X \Vert_{\psi_1} \leq K$. An $\R^n$-valued random vector $X$ is $K$-sub-gaussian if the real-valued random variable $\langle X, x \rangle$ is $K$-sub-gaussian for all vectors $x \in \R^n$ such that $\Vert x \Vert=1$. A $K$-sub-gaussian $X$ concentrates as $\Prob(\vert X \vert \geq t) \leq 2e^{-\frac{t^2}{O(K)^2}}$ for all $t \geq 0$ and a $K$-sub-exponential $X$ concentrates as $\Prob(\vert X \vert \geq t) \leq 2e^{-\frac{t}{O(K)}}$ for all $t \geq 0$. More details on this subject can be found in \citet{Vershynin2018}, which is our main reference in this work.

Given $\mu \in \R^n$ and $\Sigma \in \mathbb{S}^n_+$, we denote by $\mathcal{N}(\mu,\Sigma)$ the multivariate Gaussian distribution with mean $\mu$ and covariance $\Sigma$. In particular, $\mathcal{N}(0,1)$ is the standard Gaussian distribution. By $X \sim \mathcal{N}(\mu,\Sigma)$ we mean that the random vector $X$ is distributed according to $\mathcal{N}(\mu,\Sigma)$. We use the same notation to denote the corresponding probability measure, i.e., $\E_{X \sim \mathcal{N}(\mu,\Sigma)} f(X) = \int f d\mathcal{N}(\mu,\Sigma) = \int f(x) d\mathcal{N}(x \vert \mu,\Sigma)$. We denote the norm of the Hilbert space $L^2(\mathcal{N}(0,1))$ by $\Vert f \Vert_{\mathcal{N}(0,1)} = \sqrt{\int f^2 d\mathcal{N}(0,1)}$ for $f \in L^2(\mathcal{N}(0,1))$.

\section{NTK at the EOC}\label{ntk}

In the following subsections, we first introduce our MLP parameterization and derive its Jacobian, then analyze the layerwise concentration of the components of its NTK and finally prove the simultaneous concentration of the NTK components and the NTK matrix itself over a dataset.

\subsection{Multilayer Perceptron}\label{mlp}

We introduce the MLP formulation which will be the focus of our analysis. Let $l \in \N+2$ be the depth, $\R^{m_0}$ the input space and $\Theta = \Theta_{1:l} = \prod_{k=1}^l \Theta_k$ the parameter space with parameter subspaces $\Theta_k = \R^{m_k \times m_{k-1}}$, input dimension $m_0 \in \N+1$, hidden layer widths $m_k = \gamma_k m$ for $k \in [1:l-1]$ for width parameters $m \in \N+1$ and $\gamma_k \in \N+1$ for $k \in [1:l-1]$ and output dimension $m_l \in \N+1$. We denote parameters as $\theta = \theta_{1:l} = [A_k : k \in [1:l]] \in \Theta$ with layer matrices $A_k \in \Theta_k$. Let $q_k \in \R$ for $k \in [1:l]$ be the scaling coefficients. Finally, let $a,b \in \R$ and $\phi : \R \to \R$ be the $(a,b)$-ReLU as defined below, which is going to be the activation function. We initialize the matrices $A_k \sim \mathcal{N}( 0,\sigma^2 m^{-q_k} \Id_{m_k \times m_{k-1}} )$ for $k \in [1:l]$ with $\sigma = (a^2 + b^2)^{-\frac{1}{2}}$ to ensure that the MLP is at the EOC by \citet[Lemma~3]{Hayouetal2019}. The corresponding probability space is the triple $(\Theta, \mathcal{B}(\Theta), \Prob) = (\Theta_1 \times \cdots \times \Theta_l, \mathcal{B}(\Theta_1) \otimes \cdots \otimes \mathcal{B}(\Theta_l), \Prob_1 \otimes \cdots \otimes \Prob_l) = (\Theta_1, \mathcal{B}(\Theta_1), \Prob_1) \otimes \cdots \otimes (\Theta_l, \mathcal{B}(\Theta_l),\Prob_l)$, which is the product of the individual probability spaces corresponding to each layer. The individual expectations are denoted as $\E_{A_k} X(\theta_{1:k-1},A_k) = \int X(\theta_{1:k-1},A_k) d\Prob_k(A_k)$ for any random variable $X : \Theta_{1:k} \to \R$ and (sub)parameter $\theta_{1:k-1} \in \Theta_{1:k-1}$.

\begin{definition}[$(a,b)$-ReLU]~\\
Given $a,b \in \R$, define the $(a,b)$-ReLU $\phi : \R \to \R$ for all $s \in \R$ as $\phi(s) = as + b\vert s \vert$, so that $\phi'(s) = a + b \sgn(s)$ for all $s \in \R \setminus \{ 0 \}$ and $\Vert \phi \Vert_L = \vert a \vert + \vert b \vert$.
\end{definition}
Unless $b=0$, $\phi$ is not differentiable at $s=0$ in the usual sense, but any function $\psi : \R \to \R$ such that $\psi(s) = a + b \sgn(s)$ for all $s \in \R \setminus \{ 0 \}$ and $\psi(0) \in [a-b,a+b]$ can serve as its derivative in some suitable generalized sense. By abuse of notation, we define $\phi': \R \to \R$ as $\phi'(s) = a + b \sgn(s)$ for all $s \in \R$, so that $\phi'(0) = a$.

Define an $l$-layer MLP $N : \R^{m_0} \times \Theta \to \R^{m_l}$ for any $x \in \R^{m_0}$ and $\theta \in \Theta$ recursively as 
\[
N(x,\theta) = A_l m_{l-1}^{-\frac{1}{2}} \phi(N_{l-1}(x,\theta_{1 : l-1}))
\]
with the input layer $N_1 : \R^{m_0} \times \Theta_1 \to \R^{m_1}$ defined as $N_1(x,\theta_1) = m^{\frac{q_1}{2}} A_1 x$ and the hidden layers $N_k : \R^{m_0} \times \Theta_{1:k} \to \R^{m_k}$ for $k \in [2:l-1]$ defined as
\[
N_k(x,\theta_{1:k}) = m^{\frac{q_k}{2}} A_k m_{k-1}^{-\frac{1}{2}} \phi(N_{k-1}(x,\theta_{1 : k-1})).
\]
For an input $x$ and a parameter $\theta$, denote the activations by $x_1(x) = x \in \R^{m_0}$ and $x_k(x, \theta_{1 : k-1}) = m_{k-1}^{-\frac{1}{2}} \phi(N_{k-1}(x, \theta_{1 : k-1})) \in \R^{m_{k-1}}$ for $k \in [2:l]$ and the derivatives of the activations\footnote{Note that the naming is informal, but we do have that the vector $x_k'(x,\theta_{1 : k-1})$ is the diagonal of the Jacobian matrix $\partial_{N_{k-1}(x,\theta_{1 : k-1})} x_k(x, \theta_{1 : k-1})$.} by $x_k'(x,\theta_{1 : k-1}) = m_{k-1}^{-\frac{1}{2}} \phi'(N_{k-1}(x,\theta_{1 : k-1})) \in \R^{m_{k-1}}$ for $k \in [2:l]$. We can then write the forward pass in a compact manner as $N_k(x,\theta_{1:k}) = m^{\frac{q_k}{2}} A_k x_k(x,\theta_{1:k-1})$ for $k \in [1:l-1]$ and $N(x,\theta) = A_l x_l(x,\theta_{1:l-1})$.

\begin{remark}[Relation to other parameterizations]~\\
The NTK paramerization (NTP) of \citet{Jacotetal2018} is recovered by setting $q_1=\cdots=q_l=0$, while the Maximal Update Parameterization ($\mu$P) of \citet{Yangetal2022} corresponds to the case $q_1=\cdots=q_l=1$ and $\gamma_1 = \cdots = \gamma_{l-1} = 1$.
\end{remark}

We will describe the Jacobian of the neural network mapping inductively as follows. Note that for the first layer, as $N_1(x,\theta_1)=m^{\frac{q_1}{2}} A_1 x_1(x)$ is linear in $\theta_1 = A_1$ its Jacobian is itself, meaning that if $\theta_1' = A_1' \in \Theta_1$, then $\partial_{\theta_1} N_1(x,\theta_1) \theta_1' = m^{\frac{q_1}{2}} A_1' x_1(x)$. For convenience, we will write that $\partial_{\theta_1} N_1(x,\theta_1) = m^{\frac{q_1}{2}} M_{x_1(x),m_1} \in \Ell(\Theta_1, \R^{m_1})$. Via the chain rule, it follows that the Jacobian $\partial_{\theta_{1:k}} N_k(x,\theta_{1:k}) \in \Ell(\Theta_{1:k}, \R^{m_k})$ for the $k$th layer is
\[
\partial_{\theta_{1:k}} N_k(x,\theta_{1:k}) = m^{\frac{q_k}{2}} \left[ \begin{array}{cc} A_k D_{x_k'(x, \theta_{1:k-1})} \partial_{\theta_{1 : k-1}} N_{k-1}(x,\theta_{1 : k-1}) & M_{x_k(x, \theta_{1:k-1}),m_k} \end{array} \right],
\]
understood as a block matrix to be multiplied by a block vector of the form $\left[ \begin{array}{c} \theta_{1:k-1}' \\ A_k' \end{array} \right] \in \Theta_{1:k}$. The full Jacobian $\partial_\theta N(x,\theta) \in \Ell(\Theta, \R^{m_l})$ equals
\begin{equation}\label{eq:jacobian}
\partial_\theta N(x,\theta) = \left[ \begin{array}{cc} A_l D_{x_l'(x, \theta_{1:l-1})} \partial_{\theta_{1 : l-1}} N_{l-1}(x,\theta_{1 : l-1}) & M_{x_l(x, \theta_{1:l-1}),m_l} \end{array} \right].
\end{equation}

\subsection{Layerwise Concentration of the NTK}\label{layerwise}

In this section, we decompose the NTK of the MLP introduced in \S~\ref{mlp} and analyze the concentration of its components with respect to the individual layer matrices.

\begin{definition}[Neural Tangent Kernel]\label{def:ntk_kernel}~\\
Given the MLP $N: \R^{m_0} \times \Theta \to \R^{m_l}$ defined in \S~\ref{mlp} and a parameter $\theta \in \Theta$, the corresponding NTK $K_\theta : \R^{m_0} \times \R^{m_0} \to \R^{m_l \times m_l}$ is the matrix-valued kernel defined as 
\[
K_\theta(x_1,x_2) = \partial_\theta N(x_1,\theta) {\partial_\theta N(x_2,\theta)}^*
\]
for all $x_1, x_2 \in \R^{m_0}$.
\end{definition}

For convenience, we denote the norms of the activations as $\tau_k(x,\theta_{1:k-1}) = \Vert x_k(x, \theta_{1 : k-1}) \Vert$ for $x \in \R^{m_0}$, $k \in [1:l]$ and $\theta_{1 : k-1} \in \Theta_{1 : k-1}$, the inner products of the activations as $X_k(x_1,x_2,\theta_{1:k-1}) = \langle x_k(x_1, \theta_{1 : k-1}), x_k(x_2, \theta_{1 : k-1}) \rangle$ for $x_1,x_2 \in \R^{m_0}$, $k \in [1:l]$ and $\theta_{1 : k-1} \in \Theta_{1 : k-1}$ and the cosines of the activations as
\[
\rho_k(x_1,x_2,\theta_{1:k-1}) = \left\langle \frac{x_k(x_1, \theta_{1 : k-1})}{\Vert x_k(x_1, \theta_{1 : k-1}) \Vert}, \frac{x_k(x_2, \theta_{1 : k-1})}{\Vert x_k(x_2, \theta_{1 : k-1}) \Vert} \right\rangle \in [-1,1]
\]
for $x_1,x_2 \in \R^{m_0}$, $k \in [1:l]$ and $\theta_{1 : k-1} \in \Theta_{1 : k-1}$.

\begin{definition}[Backpropagation matrices]\label{def:backprop_matrix}~\\
Given $x \in \R^{m_0}$, $k_1 \leq k_2 \in [2:l]$ and $\theta \in \Theta$, define the backpropagation matrix
\[
B_{k_1, k_2}(x, \theta_{1 : k_2-1})
= \sigma D_{x_{k_2}'(x,\theta_{1 : k_2-1})} m^{\frac{q_{k_2-1}}{2}} A_{k_2-1} \cdots m^{\frac{q_{k_1}}{2}} A_{k_1} D_{x_{k_1}'(x,\theta_{1 : k_1-1})} \in \R^{m_{k_2-1} \times m_{k_1-1}}.
\]
\end{definition}
The $k_1=k_2=k$ case is $B_{k, k}(x, \theta_{1 : k-1}) = \sigma D_{x_k'(x,\theta_{1 : k-1})} \in \R^{m_{k-1} \times m_{k-1}}$.

We denote the Frobenius inner products of the backpropagation matrices as
\[
X'_{k_1,k_2}(x_1,x_2,\theta_{1:k_2-1}) = \tr(B_{k_1, k_2}(x_1, \theta_{1 : k_2-1}) B_{k_1, k_2}(x_2, \theta_{1 : k_2-1})^*)
\]
for $x_1,x_2 \in \R^{m_0}$, $k_1 \leq k_2 \in [2:l]$ and $\theta_{1 : k_2-1} \in \Theta_{1 : k_2-1}$. Note that on the diagonal, we have the Frobenius norms $X'_{k_1,k_2}(x,x,\theta_{1:k_2-1}) = \Vert B_{k_1, k_2}(x, \theta_{1 : k_2-1}) \Vert_F^2$.

\begin{proposition}[Formula for $K_\theta(x_1,x_2)$]\label{prop:ntk_kernel_expression}~\\
Given $x_1, x_2 \in \R^{m_0}$, the entry $K_\theta(x_1,x_2) \in \R^{m_l \times m_l}$ equals
\begin{multline}\label{eq:ntk_kernel_value}
\sigma^{-2} \sum_{k=1}^{l-1} m^{q_k} X_k(x_1,x_2, \theta_{1:k-1}) A_l B_{k+1,l}(x_1, \theta_{1 : l-1}) {B_{k+1,l}(x_2, \theta_{1 : l-1})}^* A_l^* \\
+ X_l(x_1,x_2,\theta_{1:l-1}) \Id_{m_l}.
\end{multline}
\end{proposition}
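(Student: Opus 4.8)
The plan is to turn the recursive description of the Jacobian from \S~\ref{mlp} into an explicit block form and then multiply two such Jacobians block by block. Writing $\partial_\theta N(x,\theta)$ as a block row over the factors $\Theta_1,\dots,\Theta_l$ of $\Theta$, I would first prove by induction on $k$ that for $k \in [1:l-1]$ its $\Theta_k$-block equals
\[
\partial_\theta N(x,\theta)\big|_{\Theta_k} = \sigma^{-1} m^{\frac{q_k}{2}} A_l B_{k+1,l}(x,\theta_{1:l-1}) M_{x_k(x,\theta_{1:k-1}),m_k},
\]
while the $\Theta_l$-block is $M_{x_l(x,\theta_{1:l-1}),m_l}$. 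The base case is $\partial_{\theta_1} N_1(x,\theta_1) = m^{\frac{q_1}{2}} M_{x_1(x),m_1}$, and the inductive step combines the layerwise recursion $\partial_{\theta_{1:k}} N_k = m^{\frac{q_k}{2}}\left[A_k D_{x_k'(x,\theta_{1:k-1})} \partial_{\theta_{1:k-1}} N_{k-1}\ \ M_{x_k(x,\theta_{1:k-1}),m_k}\right]$ with the top-level formula \eqref{eq:jacobian}: unrolling produces the factor $A_l D_{x_l'(x)} m^{\frac{q_{l-1}}{2}} A_{l-1} D_{x_{l-1}'(x)} \cdots m^{\frac{q_{k+1}}{2}} A_{k+1} D_{x_{k+1}'(x)}$, which, after pulling out $A_l$, is precisely $\sigma^{-1} A_l B_{k+1,l}(x,\theta_{1:l-1})$ by Definition~\ref{def:backprop_matrix}, the factors $m^{\frac{q_{k+1}}{2}},\dots,m^{\frac{q_{l-1}}{2}}$ being exactly the ones absorbed into $B_{k+1,l}$, leaving the single $m^{\frac{q_k}{2}}$ in front.

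Since $\Theta = \Theta_1\times\cdots\times\Theta_l$ is an orthogonal decomposition, the adjoint $\partial_\theta N(x_2,\theta)^*$ is the block column of the blockwise adjoints, so
\[
K_\theta(x_1,x_2) = \partial_\theta N(x_1,\theta)\partial_\theta N(x_2,\theta)^* = \sum_{k=1}^{l} \left(\partial_\theta N(x_1,\theta)\big|_{\Theta_k}\right)\left(\partial_\theta N(x_2,\theta)\big|_{\Theta_k}\right)^*.
\]
For $k \in [1:l-1]$ I would substitute the block formula above and invoke the identity $M_{x_k(x_1,\theta_{1:k-1}),m_k} M_{x_k(x_2,\theta_{1:k-1}),m_k}^* = \langle x_k(x_1,\theta_{1:k-1}), x_k(x_2,\theta_{1:k-1})\rangle \Id_{m_k} = X_k(x_1,x_2,\theta_{1:k-1})\Id_{m_k}$ from \S~\ref{preliminaries} to collapse the middle of the product, obtaining the $k$-th summand $\sigma^{-2} m^{q_k} X_k(x_1,x_2,\theta_{1:k-1}) A_l B_{k+1,l}(x_1,\theta_{1:l-1}) B_{k+1,l}(x_2,\theta_{1:l-1})^* A_l^*$; the same identity with $k=l$ gives $M_{x_l(x_1,\theta_{1:l-1}),m_l} M_{x_l(x_2,\theta_{1:l-1}),m_l}^* = X_l(x_1,x_2,\theta_{1:l-1})\Id_{m_l}$. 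Summing the $l$ contributions and factoring the common $\sigma^{-2}$ out of the first $l-1$ of them yields \eqref{eq:ntk_kernel_value}.

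The argument is entirely bookkeeping, so I do not expect a genuine obstacle; the points demanding care are that the order of matrix factors reverses under the adjoint (so that $B_{k+1,l}(x_1,\theta_{1:l-1})$ precedes $B_{k+1,l}(x_2,\theta_{1:l-1})^*$ in the statement) and that the powers of $m$ and the factors of $\sigma$ be tracked precisely — in particular verifying that the $m^{\frac{q_k}{2}}$ produced at layer $k$, squared when the two Jacobians meet, reproduces the lone factor $m^{q_k}$ in \eqref{eq:ntk_kernel_value}, with all remaining powers of $m$ living inside $B_{k+1,l}(x_1,\theta_{1:l-1}) B_{k+1,l}(x_2,\theta_{1:l-1})^*$.
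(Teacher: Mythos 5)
Your proposal is correct and is essentially the paper's own argument, which simply cites the recursive Jacobian formula \eqref{eq:jacobian} together with the identity $M_{x_1,n}M_{x_2,n}^* = \langle x_1,x_2\rangle \Id_n$; you have merely made the unrolling of the $\Theta_k$-blocks and the bookkeeping of the $\sigma$ and $m^{q_k/2}$ factors explicit, and these check out against Definition~\ref{def:backprop_matrix}.
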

\begin{proof}
Equation \eqref{eq:ntk_kernel_value} follows by applying recursively the formula \eqref{eq:jacobian} and noting that $M_{x_k(x_1, \theta_{1:k-1}),m_k} {M_{x_k(x_2, \theta_{1:k-1})},m_k}^* = \langle x_k(x_1, \theta_{1:k-1}), x_k(x_2, \theta_{1:k-1}) \rangle \Id_{m_k}$.
\end{proof}

\begin{proposition}[Expectation of $K_\theta(x_1,x_2)$]\label{prop:readout_expectation}~\\
Given $x_1, x_2 \in \R^{m_0}$ and $\theta_{1:l-1} \in \Theta_{1:l-1}$, we have that $\E_{A_l} K_\theta(x_1,x_2) \in \R^{m_l \times m_l}$ equals
\[
\left( \sum_{k=1}^{l-1} m^{q_k-q_l} X_k(x_1,x_2,\theta_{1:k-1}) X'_{k+1,l}(x_1,x_2,\theta_{1 : l-1}) + X_l(x_1,x_2,\theta_{1:l-1}) \right) \Id_{m_l}.
\]
\end{proposition}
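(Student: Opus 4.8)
The plan is to condition on the subparameter $\theta_{1:l-1}$ and observe that in the formula \eqref{eq:ntk_kernel_value} from Proposition~\ref{prop:ntk_kernel_expression} the only $A_l$-dependence sits in the quadratic factors $A_l B_{k+1,l}(x_1,\theta_{1:l-1}) {B_{k+1,l}(x_2,\theta_{1:l-1})}^* A_l^*$: the scalars $X_k(x_1,x_2,\theta_{1:k-1})$ for $k \in [1:l-1]$, the scalar $X_l(x_1,x_2,\theta_{1:l-1})$, and the backpropagation matrices $B_{k+1,l}(x_1,\theta_{1:l-1})$ and $B_{k+1,l}(x_2,\theta_{1:l-1})$ are all deterministic functions of $\theta_{1:l-1}$ and hence constants under $\E_{A_l}$. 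So by linearity of expectation the whole computation reduces to evaluating $\E_{A_l} A_l M A_l^*$ for a fixed (deterministic) matrix $M \in \R^{m_{l-1}\times m_{l-1}}$.

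First I would record the elementary identity $\E_{A_l} A_l M A_l^* = \sigma^2 m^{-q_l}\tr(M)\Id_{m_l}$ for $A_l \sim \mathcal{N}(0,\sigma^2 m^{-q_l}\Id_{m_l \times m_{l-1}})$ and any deterministic $M$: writing the $(p,q)$ entry as $\sum_{i,j}(A_l)_{p,i} M_{i,j} (A_l)_{q,j}$ and using $\E (A_l)_{p,i}(A_l)_{q,j} = \sigma^2 m^{-q_l}\delta_{p,q}\delta_{i,j}$ collapses the sum to $\sigma^2 m^{-q_l}\delta_{p,q}\tr(M)$. Only the diagonal of $M$ enters, so $M$ need not be symmetric.

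Then I would apply this with $M = B_{k+1,l}(x_1,\theta_{1:l-1}){B_{k+1,l}(x_2,\theta_{1:l-1})}^*$, whose trace equals $X'_{k+1,l}(x_1,x_2,\theta_{1:l-1})$ by the definition of the Frobenius inner product of the backpropagation matrices, obtaining
\[
\E_{A_l}\!\left( A_l B_{k+1,l}(x_1,\theta_{1:l-1}){B_{k+1,l}(x_2,\theta_{1:l-1})}^* A_l^* \right) = \sigma^2 m^{-q_l} X'_{k+1,l}(x_1,x_2,\theta_{1:l-1})\Id_{m_l}.
\]
Substituting into \eqref{eq:ntk_kernel_value}, the prefactor $\sigma^{-2}m^{q_k}$ of the $k$th summand cancels against $\sigma^2 m^{-q_l}$ to leave $m^{q_k-q_l}$, while the additive term $X_l(x_1,x_2,\theta_{1:l-1})\Id_{m_l}$ is deterministic given $\theta_{1:l-1}$ and so passes through $\E_{A_l}$ unchanged, yielding exactly the claimed expression. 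There is no genuine obstacle here; the only points requiring care are the entrywise bookkeeping in the Gaussian quadratic-form identity and the measurability remark that the $B_{k+1,l}$'s and the $X_k$'s depend on $\theta_{1:l-1}$ alone, which is what licenses treating them as constants under $\E_{A_l}$.
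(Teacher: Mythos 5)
Your proposal is correct and follows essentially the same route as the paper: the paper also conditions on $\theta_{1:l-1}$, reduces to the expectation of the Gaussian quadratic form row by row (off-diagonal entries vanish by independence and zero mean, diagonal entries give $\sigma^2 m^{-q_l}\tr(M)$ by the trace trick), and cancels the prefactor $\sigma^{-2} m^{q_k}$ to obtain $m^{q_k-q_l} X_k X'_{k+1,l}$. Packaging this as the single identity $\E_{A_l} A_l M A_l^* = \sigma^2 m^{-q_l}\tr(M)\Id_{m_l}$ is just a cleaner statement of the same computation.
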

\begin{proof}
For any $j_1,j_2 \in [1:m_l]$, $(K_\theta(x_1,x_2) - X_l(x_1,x_2,\theta_{1:l-1}) \Id_{m_l})_{j_1,j_2}$ can be written as a sum of inner products via \eqref{eq:ntk_kernel_value} as
\[
\sigma^{-2} \sum_{k=1}^{l-1} m^{q_k} X_k(x_1,x_2,\theta_{1:k-1}) \left\langle {A_l}_{j_1}, B_{k+1,l}(x_1, \theta_{1 : l-1}) {B_{k+1,l}(x_{i_2}, \theta_{1 : l-1})}^* {A_l}_{j_2} \right\rangle.
\]
If $j_1 \neq j_2$, since ${A_l}_{j_1}$ and ${A_l}_{j_2}$ are independent, the expectation of each term above is $0$. Otherwise, if $j_1 = j_2 = j$, by the trace trick we have that
\begin{multline*}
\E_{{A_l}_j} \sigma^{-2} m^{q_k} X_k(x_1,x_2,\theta_{1:k-1}) \left\langle {A_l}_j, B_{k+1,l}(x_1, \theta_{1 : l-1}) {B_{k+1,l}(x_2, \theta_{1 : l-1})}^* {A_l}_j \right\rangle \\
= m^{-q_l} m^{q_k} X_k(x_1,x_2,\theta_{1:k-1}) \tr\left( B_{k+1,l}(x_1, \theta_{1 : l-1}) {B_{k+1,l}(x_2, \theta_{1 : l-1})}^* \right) \\
= m^{q_k-q_l} X_k(x_1,x_2,\theta_{1:k-1}) X'_{k+1,l}(x_1,x_2,\theta_{1 : l-1}),
\end{multline*}
giving the claim.
\end{proof}

\begin{remark}[Optimal $q_1,\cdots,q_l$]\label{rem:optimal_q}~\\
In order for the terms in the above expectation not to blow up or vanish, we need $m^{q_k-q_l} = 1$ to hold for all $k \in [1:l-1]$. This is achieved precisely by letting $q \in \R$ and setting
\begin{equation}\label{eq:optimal_qs}
q_k = q \text{ for } k \in [1:l].
\end{equation}
This setting interpolates between the kernel regime ($q=0$) and the rich regime ($q=1$). Letting $q=0$ leads to the NTP of \citet{Jacotetal2018}, but $q=1$ gives the $\mu$P of \citet{Yangetal2022} only if $\gamma_1=\cdots=\gamma_{l-1}=1$. Using this scheme with $q=1$ can be seen as a principled extension of $\mu$P to the case of varying hidden layer sizes.
\end{remark}

\begin{proposition}[Concentration of $K_\theta(x_1,x_2)$] \label{prop:readout_concentration}~\\
Given $x_1, x_2 \in \R^{m_0}$ and $\theta_{1:l-1} \in \Theta_{1:l-1}$, for all $t \geq 0$ we have
\[
\Prob_l\left( \Vert K_\theta(x_1,x_2) - \E_{A_l} K_\theta(x_1,x_2) \Vert \geq t \right)
\leq 2e^{-\frac{t^2}{O\left( \Vert J(x_1,x_2,\theta_{1 : l-1}) \Vert_F \sqrt{m_l} \right)^2 + O\left( \Vert J(x_1,x_2,\theta_{1 : l-1}) \Vert m_l \right) t}}
\]
with $J(x_1,x_2,\theta_{1 : l-1}) \in \R^{m_{l-1} \times m_{l-1}}$ defined as
\[
J(x_1,x_2,\theta_{1 : l-1})
= \sum_{k=1}^{l-1} m^{q_k-q_l} X_k(x_1,x_2,\theta_{1:k-1}) B_{k+1, l}(x_1, \theta_{1 : l-1}) B_{k+1, l}(x_2, \theta_{1 : l-1})^*.
\]
\end{proposition}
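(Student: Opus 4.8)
The plan is to condition on $\theta_{1:l-1}$ (so that $J := J(x_1,x_2,\theta_{1:l-1})$ is a fixed matrix), reduce the statement to a concentration bound for a Gaussian chaos of the form $GJG^* - \tr(J)\Id_{m_l}$, and then dispatch the operator norm by an $\epsilon$-net argument combined with the Hanson--Wright inequality. Concretely, writing $A_l = \sigma m^{-\frac{q_l}{2}} G$ with $G \in \R^{m_l \times m_{l-1}}$ having i.i.d.\ standard Gaussian entries, the formula \eqref{eq:ntk_kernel_value} together with the definition of $J$ collapses to $K_\theta(x_1,x_2) = GJG^* + X_l(x_1,x_2,\theta_{1:l-1})\Id_{m_l}$, while $\E_{A_l}[GJG^*] = \tr(J)\Id_{m_l}$ (this is exactly Proposition~\ref{prop:readout_expectation} rewritten, since $\tr(J) = \sum_{k=1}^{l-1} m^{q_k-q_l} X_k X'_{k+1,l}$). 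Hence it suffices to bound $\Prob_l( \Vert GJG^* - \tr(J)\Id_{m_l} \Vert \geq t )$.

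For that, I would fix a $\frac{1}{4}$-net $\mathcal{C}$ of the unit sphere $S^{m_l-1}$ with $\vert \mathcal{C} \vert \leq 9^{m_l}$, so that $\Vert GJG^* - \tr(J)\Id_{m_l} \Vert \leq 2 \max_{u,v \in \mathcal{C}} \vert (G^*u)^* J (G^*v) - \langle u, v \rangle \tr(J) \vert$. For fixed unit vectors $u,v$, set $g = G^*u \sim \mathcal{N}(0,\Id_{m_{l-1}})$ and $w = G^*(v - \langle u,v\rangle u)$, which is Gaussian, independent of $g$, with covariance $(1 - \langle u,v\rangle^2)\Id_{m_{l-1}}$; then, using $g^* J g = g^* S g$ with $S = \frac{1}{2}(J + J^*)$ and $\tr(S) = \tr(J)$, one gets
\[
(G^*u)^* J (G^*v) - \langle u, v \rangle \tr(J) = \langle u,v\rangle\bigl( g^* S g - \tr(S) \bigr) + g^* J w .
\]
The first summand is a Gaussian quadratic form, controlled by the Hanson--Wright inequality of \citet{Vershynin2018}; the second is a decoupled Gaussian bilinear form, controlled by its decoupled counterpart (or, conditioning on $g$, by Gaussian concentration followed by a $\chi^2$-type bound for $\Vert J^* g \Vert$). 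Since $\Vert S \Vert_F \leq \Vert J \Vert_F$, $\Vert S \Vert \leq \Vert J \Vert$ and $\vert\langle u,v\rangle\vert \leq 1$, both summands are sub-gamma with sub-gaussian parameter $O(\Vert J \Vert_F)$ and sub-exponential parameter $O(\Vert J \Vert)$, i.e.\ each obeys $\Prob(\vert\cdot\vert \geq s) \leq 2\exp\bigl( -c\min(s^2/\Vert J \Vert_F^2,\, s/\Vert J \Vert) \bigr)$.

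A union bound over the at most $9^{2m_l}$ pairs in $\mathcal{C} \times \mathcal{C}$ then gives $\Prob_l( \Vert GJG^* - \tr(J)\Id_{m_l} \Vert \geq t ) \leq 2 \cdot 9^{2m_l} \exp\bigl( -c'\min(t^2/\Vert J \Vert_F^2,\, t/\Vert J \Vert) \bigr)$, and it remains to absorb the factor $9^{2m_l} = \exp(O(m_l))$ into the exponent: this is harmless because, with the $O(\cdot)$ constants taken large enough, the claimed bound is trivially true (its right-hand side exceeds $1$) whenever $t$ is below the order $\Vert J\Vert_F \sqrt{m_l} + \Vert J\Vert m_l$, while for larger $t$ the $9^{2m_l}$ factor is dominated by a constant fraction of the exponent. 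Recalling that $\frac{s^2}{\nu^2 + b s}$ and $\min(s^2/\nu^2, s/b)$ agree up to constants, this produces precisely the asserted tail with sub-gaussian parameter $O(\Vert J \Vert_F \sqrt{m_l})$ and sub-exponential parameter $O(\Vert J \Vert m_l)$.

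The step I expect to be the main obstacle is handling the lack of symmetry of $J$, hence of $GJG^*$: this rules out the cheaper one-sided net and a direct appeal to Hanson--Wright (which is stated for quadratic forms), and is precisely the reason for both the two-sided net and the conditioning identity above, which isolates one genuine quadratic chaos from a decoupled bilinear chaos. Everything else is routine sub-gamma bookkeeping pushed through the net union bound.
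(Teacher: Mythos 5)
Your proposal is correct and follows the same overall strategy as the paper: reduce to the Gaussian chaos $GJG^*-\tr(J)\Id_{m_l}$, discretize the operator norm by a $\frac14$-net of cardinality at most $9^{m_l}$, control each bilinear form by Hanson--Wright, and absorb the $e^{O(m_l)}$ entropy into the exponent (the paper does this last step via the maximal inequality \citet[Lemma~2.2.13]{Vandervaartetal2023} rather than your direct case split on $t$, but the two are interchangeable). The one genuine difference is exactly the point you flag: the paper uses the \emph{one-sided} net bound $\Vert M\Vert\le 2\max_{y\in\hat N}\vert\langle y,My\rangle\vert$ from \citet[Exercise~4.4.3(b)]{Vershynin2018} and then applies Hanson--Wright to the single quadratic form $\langle A_l^*y,\,J A_l^*y\rangle$ (written there in flattened Kronecker form). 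That net bound is stated for symmetric matrices, whereas $GJG^*-\tr(J)\Id_{m_l}$ is symmetric only when $J$ is, i.e.\ essentially only on the diagonal $x_1=x_2$; for $x_1\ne x_2$ the paper's reduction implicitly requires either a symmetrization of $K_\theta(x_1,x_2)$ or your two-sided net. Your decomposition $\langle u,v\rangle\left(g^*Sg-\tr S\right)+g^*Jw$ with $w=G^*(v-\langle u,v\rangle u)$ independent of $g$ handles this cleanly at the cost of a $9^{2m_l}$ union bound (which changes nothing inside the $O(\cdot)$'s), and the decoupled bilinear term does have the claimed $(\Vert J\Vert_F,\Vert J\Vert)$ sub-gamma tail by conditioning on $g$ and invoking \citet[Theorem~6.3.2]{Vershynin2018} for $\Vert J^*g\Vert$. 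So your route buys correctness for non-symmetric $J$ at essentially no extra cost; everything else matches the paper's argument.
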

\begin{proof}
Define $\hat{K}_\theta(x_1,x_2) = K_\theta(x_1,x_2) - X_l(x_1,x_2,\theta_{1:l-1}) \Id_{m_l}$, which is the NTK without the term corresponding to the last layer (which does not depend on $A_l$ by \eqref{eq:ntk_kernel_value}). Note that we have $\Vert K_\theta(x_1,x_2) - \E_{A_l} K_\theta(x_1,x_2) \Vert = \Vert \hat{K}_\theta(x_1,x_2) - \E_{A_l} \hat{K}_\theta(x_1,x_2) \Vert$, so it suffices to bound the latter. By \citet[Corollary~4.2.13]{Vershynin2018}, there exists a $\frac{1}{4}$-net $\hat{N} \subset \R^{m_l}$ of the unit sphere $\{ y \in \R^{m_l} : \Vert y \Vert = 1 \} \subset \R^{m_l}$ with cardinality $\vert \hat{N} \vert \leq 9^{m_l}$. By \citet[Exercise~4.4.3(b)]{Vershynin2018}, we have
\[
\left\Vert \hat{K}_\theta(x_1,x_2) - \E_{A_l} \hat{K}_\theta(x_1,x_2) \right\Vert 
\leq 2 \max_{y \in \hat{N}}\left\{ \left\vert \left\langle y, \left( \hat{K}_\theta(x_1,x_2) - \E_{A_l} \hat{K}_\theta(x_1,x_2) \right) y \right\rangle \right\vert \right\}.
\]

Denoting $J = J(x_1,x_2,\theta_{1 : l-1})$ for brevity, note that Proposition~\ref{prop:ntk_kernel_expression} implies $\hat{K}_\theta(x_1,x_2) = (\sigma^{-1} m^{\frac{q_l}{2}} A_l) J (\sigma^{-1} m^{\frac{q_l}{2}} A_l)^*$. Now fix $y \in \hat{N}$ and define $\hat{A} \in \R^{m_l m_{l-1}}$ as $\hat{A}_{(j_1-1) m_{l-1} + j_2} = \sigma^{-1} m^{\frac{q_l}{2}} {A_l}_{j_1,j_2}$ for $j_1 \in [1:m_l]$ and $j_2 \in [1:m_{l-1}]$ (i.e., $\hat{A}$ is $\sigma^{-1} m^{\frac{q_l}{2}} A_l$ flattened). Then we have $(\sigma^{-1} m^{\frac{q_l}{2}} A_l)^* y = (y \boxtimes \Id_{m_{l-1}})^* \hat{A}$, so that $\langle y, \hat{K}_\theta(x_1,x_2) y \rangle$ equals
\[
\langle y , (\sigma^{-1} m^{\frac{q_l}{2}} A_l) J (\sigma^{-1} m^{\frac{q_l}{2}} A_l)^*y \rangle
= \langle \hat{A}, (y \boxtimes \Id_{m_{l-1}} ) J (y \boxtimes \Id_{m_{l-1}} )^* \hat{A} \rangle.
\]

Having $\langle y, \hat{K}_\theta(x_1,x_2) y \rangle$ in this form lets us bound $\vert \langle y, (\hat{K}_\theta(x_1,x_2) - \E_{A_J} \hat{K}_\theta(x_1,x_2)) y \rangle \vert$ via the Hanson-Wright inequality \citep[Theorem~6.2.1]{Vershynin2018}. In order to do that, we need to bound the sub-gaussian norm of the coordinates of $\hat{A}$, as well as the operator and the Frobenius norms of the matrix $(y \boxtimes \Id_{m_{l-1}}) J (y \boxtimes \Id_{m_{l-1}})^*$. The random vector $\hat{A}$ has i.i.d. $\mathcal{N}(0,1)$ coordinates, so that it is coordinate-wise $O(1)$-sub-gaussian by \citep[Example~2.5.8 (i)]{Vershynin2018}. Since the operator norm is submultiplicative with respect to both the matrix product and the Kronecker product, we have
\[
\Vert (y \boxtimes \Id_{m_{l-1}}) J (y \boxtimes \Id_{m_{l-1}})^* \Vert
\leq \Vert y \Vert^2 \Vert \Id_{m_{l-1}} \Vert^2 \Vert J \Vert = \Vert J \Vert.
\]
By \citet[Exercise~6.3.3]{Vershynin2018}, we have the bound
\[
\Vert (y \boxtimes \Id_{m_{l-1}}) J (y \boxtimes \Id_{m_{l-1}})^* \Vert_F^2
\leq \Vert J \Vert_F^2.
\]

Applying \citet[Theorem~6.2.1]{Vershynin2018}, we have for all $t \geq 0$ the bound
\[
\Prob_l(\vert \langle y, (\hat{K}_\theta(x_1,x_2) - \E_{A_J} \hat{K}_\theta(x_1,x_2)) y \rangle \vert \geq t)
\leq 2e^{-\min\{ \frac{t^2}{O(\Vert J \Vert_F^2)}, \frac{t}{O(\Vert J \Vert)} \}} 
\leq 2e^{-\frac{t^2}{O(\Vert J \Vert_F^2) + O(\Vert J \Vert) t}}. 
\]
Unfixing $y \in \hat{N}$, by \citet[Lemma~2.2.13]{Vandervaartetal2023} and the bound $\log(1+9^{m_l}) \leq O(m_l)$ we have that
\[
\Prob_l\left( \Vert K_\theta(x_1,x_2) - \E_{A_l} K_\theta(x_1,x_2) \Vert \geq O(\Vert J \Vert_F \sqrt{m_l}) \sqrt{t} + O(\Vert J \Vert m_l) t \right)
\leq 2e^{-t}.
\]
This implies (see the paragraph below \citet[Example~2.2.12]{Vandervaartetal2023}) the conclusion.
\end{proof}

In order to apply the above concentration result, we need to bound the operator norms of the backpropagation matrices. For convenience, denote
\[
\kappa_\phi = \frac{\Vert \phi \Vert_L}{\Vert \phi \Vert_{\mathcal{N}(0,1)}} = (\vert a \vert + \vert b \vert) \sigma = \frac{\vert a \vert + \vert b \vert}{\sqrt{a^2 + b^2}} \in \left[ 1, \sqrt{2} \right].
\]

\begin{proposition}[Backpropagation matrices are bounded]\label{prop:backprop_norm_concentration}~\\
Given $x \in \R^{m_0}$ and $k_1 < k_2 \in [2:l]$, for all $t \geq 0$ we have that
\[
\Prob_{1:l-1}\left(
\gamma_{k_2-1}^{\frac{1}{2}} \left( \frac{\Vert B_{k_1, k_2}(x, \theta_{1 : k_2-1}) \Vert - O\left( \kappa_\phi^2 m_{k_2-1}^{-\frac{1}{2}} \Vert B_{k_1, k_2-1}(x, \theta_{1 : k_2-2}) \Vert_F \right)}{\Vert B_{k_1, k_2-1}(x, \theta_{1 : k_2-2}) \Vert} - 1 \right)_+ \geq t
\right)
\]
is at most $2e^{-\frac{t^2}{O\left( \kappa_\phi^2 m^{-\frac{1}{2}} \right)^2}}$.
\end{proposition}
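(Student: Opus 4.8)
The engine is a one-step recursion in $k_2$. From Definition~\ref{def:backprop_matrix} one reads off
\[
B_{k_1,k_2}(x,\theta_{1:k_2-1}) = D_{x_{k_2}'(x,\theta_{1:k_2-1})}\, m^{\frac{q_{k_2-1}}{2}}A_{k_2-1}\, B_{k_1,k_2-1}(x,\theta_{1:k_2-2}),
\]
and using $x_{k_2}'(x,\theta_{1:k_2-1}) = m_{k_2-1}^{-\frac12}\phi'\big(m^{\frac{q_{k_2-1}}{2}}A_{k_2-1}\, x_{k_2-1}(x,\theta_{1:k_2-2})\big)$, I abbreviate $G=m^{\frac{q_{k_2-1}}{2}}A_{k_2-1}$ (whose entries are i.i.d.\ $\mathcal N(0,\sigma^2)$) and $x=x_{k_2-1}(x,\theta_{1:k_2-2})$ to get $B_{k_1,k_2} = m_{k_2-1}^{-\frac12}HB_{k_1,k_2-1}$ with $H=D_{\phi'(Gx)}G$. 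I would condition on $\theta_{1:k_2-2}$, so that $B_{k_1,k_2-1}$ and $x$ are deterministic and $G$ is the only randomness (if $\|x\|=0$ then a.s.\ the leftmost diagonal factor of $B_{k_1,k_2-1}$ vanishes and the claim is vacuous; if $B_{k_1,k_2-1}=0$ it is trivial; so assume $\|x\|>0$). The $i$-th row of $H$ is $h_i=\phi'(\langle g_i,x\rangle)g_i$ with the $g_i\sim\mathcal N(0,\sigma^2\Id_{m_{k_2-2}})$ i.i.d., so the $h_i$ are i.i.d., and since $|\phi'|\le|a|+|b|$ everywhere and $\langle g_i,v\rangle\sim\mathcal N(0,\sigma^2\|v\|^2)$, each $h_i$ is $O(\kappa_\phi)$-sub-gaussian (recall $\kappa_\phi=(|a|+|b|)\sigma$).

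The key observation — and the step I expect to carry the weight — is that the rows of $H$ are \emph{isotropic}, $\E\,h_ih_i^*=\Id_{m_{k_2-2}}$. Since $\phi'(s)^2=(a+b\sgn s)^2=(a^2+b^2)+2ab\,\sgn s$ for $s\neq0$, one has $\E\big[\phi'(\langle g,x\rangle)^2gg^*\big]=(a^2+b^2)\sigma^2\Id + 2ab\,\E\big[\sgn(\langle g,x\rangle)gg^*\big]$ for $g\sim\mathcal N(0,\sigma^2\Id)$; in an orthonormal basis whose first vector is $x/\|x\|$ the sign depends only on the first coordinate of $g$, so every entry of $\E[\sgn(\langle g,x\rangle)gg^*]$ is the mean of an odd function of one Gaussian coordinate times independent factors, hence $0$, and $(a^2+b^2)\sigma^2=1$ finishes it. This cancellation, special to the $(a,b)$-ReLU and a finite-width shadow of gradient independence, is exactly what keeps the multiplicative factor on $\|B_{k_1,k_2-1}\|$ equal to $1$; a crude bound $\|B_{k_1,k_2}\|\le m_{k_2-1}^{-1/2}\|H\|\,\|B_{k_1,k_2-1}\|$ would instead pay a factor $\Theta\big((m_{k_2-2}/m_{k_2-1})^{1/2}\big)=\Theta(1)$ and never see the $\|B_{k_1,k_2-1}\|_F$-correction the statement requires.

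With $H$ having i.i.d., isotropic, $O(\kappa_\phi)$-sub-gaussian rows, I would apply the matrix deviation inequality \citep[Theorem~9.1.1]{Vershynin2018} (conditionally on $\theta_{1:k_2-2}$) to the solid ellipsoid $T=\{B_{k_1,k_2-1}v:\|v\|\le1\}\subset\R^{m_{k_2-2}}$: for all $u\ge0$, with probability at least $1-2e^{-u^2}$,
\[
\sup_{w\in T}\big|\,\|Hw\|-m_{k_2-1}^{\frac12}\|w\|\,\big|\le O(\kappa_\phi^2)\big(\gamma(T)+u\,\mathrm{rad}(T)\big),
\]
where $\mathrm{rad}(T)=\|B_{k_1,k_2-1}\|$ and the Gaussian complexity $\gamma(T)=\E\sup_{\|v\|\le1}\langle g,B_{k_1,k_2-1}v\rangle=\E\|B_{k_1,k_2-1}^*g\|\le\|B_{k_1,k_2-1}\|_F$. (Mean-zero rows are not needed: isotropy already gives $\E\|Hw\|^2=m_{k_2-1}\|w\|^2$, and the $O(\kappa_\phi)\,\mathrm{rad}(T)$ contribution of the nonzero mean $\E h_i$ is absorbed into the $\|B_{k_1,k_2-1}\|_F$ term.) Because $\|B_{k_1,k_2}\|=m_{k_2-1}^{-\frac12}\sup_{w\in T}\|Hw\|\le\|B_{k_1,k_2-1}\|+m_{k_2-1}^{-\frac12}\sup_{w\in T}\big|\|Hw\|-m_{k_2-1}^{\frac12}\|w\|\big|$, this yields, with probability at least $1-2e^{-u^2}$,
\[
\|B_{k_1,k_2}\|\le\|B_{k_1,k_2-1}\|+O\big(\kappa_\phi^2 m_{k_2-1}^{-\frac12}\|B_{k_1,k_2-1}\|_F\big)+O\big(\kappa_\phi^2 m_{k_2-1}^{-\frac12}\big)\,u\,\|B_{k_1,k_2-1}\|.
\]
Finally I would rearrange, use $\gamma_{k_2-1}^{1/2}m_{k_2-1}^{-1/2}=m^{-1/2}$ to see that on this event $\gamma_{k_2-1}^{1/2}\Big(\tfrac{\|B_{k_1,k_2}\|-O(\kappa_\phi^2 m_{k_2-1}^{-1/2}\|B_{k_1,k_2-1}\|_F)}{\|B_{k_1,k_2-1}\|}-1\Big)_+\le O(\kappa_\phi^2 m^{-1/2})\,u$, substitute $u=t/O(\kappa_\phi^2 m^{-1/2})$ to turn the failure probability into $2e^{-t^2/O(\kappa_\phi^2 m^{-1/2})^2}$, and average over $\theta_{1:k_2-2}$ (the implicit constants depend only on $\kappa_\phi$, so the conditional bound is uniform). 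The only genuinely delicate point is the sharp coefficient $1$ on $\|B_{k_1,k_2-1}\|$, which rests jointly on the isotropy computation and on feeding the whole ellipsoid $T$ — not merely its radius — into the matrix deviation inequality; the degenerate cases $\|x\|=0$ and $B_{k_1,k_2-1}=0$ are the only other items needing a remark.
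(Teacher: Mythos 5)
Your proposal is correct and follows essentially the same route as the paper: condition on $\theta_{1:k_2-2}$, observe that the rows $\phi'(\langle g_j,x\rangle)g_j$ are i.i.d., isotropic and $O(\kappa_\phi)$-sub-gaussian (the paper verifies isotropy by conditioning $v$ on $u=\langle g,x\rangle$ and using $\sigma^2\E\phi'(u)^2=\sigma^2\tau^{-2}\E\phi'(u)^2u^2=1$, whereas you use the identity $\phi'(s)^2=(a^2+b^2)+2ab\,\sgn(s)$ and a symmetry argument — both are fine), then apply the high-probability matrix deviation inequality to the ellipsoid $\{B_{k_1,k_2-1}v:\Vert v\Vert\le 1\}$, bound its Gaussian width by $\Vert B_{k_1,k_2-1}\Vert_F$ via Jensen, rearrange, and lift from $\Prob_{k_2-1}$ to $\Prob_{1:l-1}$ by Fubini--Tonelli. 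The only cosmetic difference is the citation (the paper invokes \citet[Exercise~9.1.8]{Vershynin2018}, the tail version of Theorem~9.1.1 that you actually state).
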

\begin{proof}
First, consider $\theta_{1:k_2-2} \in \Theta_{1:k_2-2}$ and $\theta_{k_2:l-1} \in \Theta_{k_2:l-1}$ fixed and $A_{k_2-1} \in \Theta_{k_2-1}$ random. Denoting the preactivations $z_j = m^{\frac{q_{k_2-1}}{2}} \langle {A_{k_2-1}}_j, x_{k_2-1}(x, \theta_{1 : k_2-2}) \rangle$ for $j \in [1:m_{k_2-1}]$, the rows of $\sqrt{m_{k_2-1}} D_{x_{k_2}'(x,\theta_{1 : k_2-1})} m^{\frac{q_{k_2-1}}{2}} A_{k_2-1} \in \R^{m_{k_2-1} \times m_{k_2-2}}$ can be written as $\phi'(z_j) m^{\frac{q_{k_2-1}}{2}} {A_{k_2-1}}_j \in \R^{m_{k_2-2}}$. Note now that we have $\E_{A_{k_2-1}} (\phi'(z_j) m^{\frac{q_{k_2-1}}{2}} {A_{k_2-1}}_j)^{\otimes 2} = \E_{[u,v] \sim \mathcal{N}(0,\Sigma)} \phi'(u)^2 v^{\otimes 2}$ with
\[
\Sigma = \sigma^2 \left[ \begin{smallmatrix} \tau^2 & \tau \hat{x} \\ \tau \hat{x}^* & \Id_{m_{k_2-2}} \end{smallmatrix} \right] \in \R^{(1+m_{k_2-2}) \times (1+m_{k_2-2})},
\]
where we denoted $\tau = \Vert x_{k_2-1}(x, \theta_{1 : k_2-2}) \Vert$ and $\hat{x} = \tau^{-1} x_{k_2-1}(x, \theta_{1 : k_2-2})$. Taking the conditional of $v$ given $u$, the above expectation equals $\E_{u \sim \mathcal{N}(0,\sigma^2 \tau^2)} \phi'(u)^2 \E_{v \sim \mathcal{N}(\mu_{v \vert u},\Sigma_{v \vert u})} v^{\otimes 2} = \E_{u \sim \mathcal{N}(0,\sigma^2 \tau^2)} \phi'(u)^2 (\mu_{v \vert u}^{\otimes 2} + \Sigma_{v \vert u})$ with $\mu_{v \vert u} = u \tau^{-1} \hat{x}$ and $\Sigma_{v \vert u} = \sigma^2 (\Id_{m_{k_2-2}} - \hat{x}^{\otimes 2})$. As $\E_{u \sim \mathcal{N}(0,\sigma^2 \tau^2)} \phi'(u)^2 u^2 \tau^{-2} = \E_{u \sim \mathcal{N}(0,\sigma^2 \tau^2)} \phi'(u)^2 \sigma^2 = 1$, we then have 
\[
\E_{A_{k_2-1}} (\phi'(z_j) m^{\frac{q_{k_2-1}}{2}} {A_{k_2-1}}_j)^{\otimes 2} = \Id_{m_{k_2-2}},
\]
i.e., the i.i.d. random vectors $\phi'(z_j) m^{\frac{q_{k_2-1}}{2}} {A_{k_2-1}}_j$ are isotropic. Clearly we also have the bound $\Vert \phi'(z_j) m^{\frac{q_{k_2-1}}{2}} {A_{k_2-1}}_j \Vert_{\psi_2} \leq O(\kappa_\phi)$. 

Denoting $B = B_{k_1, k_2-1}(x, \theta_{1 : k_2-2})$, by \citet[Exercise~9.1.8]{Vershynin2018} we get the bound
\[
\Prob_{k_2-1}\left(
\Vert B_{k_1, k_2}(x, \theta_{1 : k_2-1}) \Vert \geq (1+t) \Vert B \Vert + O(\kappa_\phi^2 m_{k_2-1}^{-\frac{1}{2}} w(B))
\right)
\leq 2e^{-\frac{t^2}{O(\kappa_\phi^2 m_{k_2-1}^{-\frac{1}{2}})^2}},
\]
where $w(B) = \E_{g \sim \mathcal{N}(0,\Id_{m_{k_2-2}})} \sup_{y \in \R^{m_{k_1-1}} : \Vert y \Vert = 1}\{ \langle B y, g \rangle \}$ is the Gaussian width of the image of the unit sphere $\{ y \in \R^{m_{k_1-1}} : \Vert y \Vert = 1 \}$ under $B$. Noting that
\[
w(B) = \E_{g \sim \mathcal{N}\left( 0, \Id_{m_{k_2-2}} \right)} \sup_{y \in \R^{m_{k_1-1}} : \Vert y \Vert = 1}\{ \langle y, B^* g \rangle \} = \E_{g \sim \mathcal{N}\left( 0, \Id_{m_{k_2-2}} \right)} \Vert B^* g \Vert,
\]
we have $w(B) \leq \sqrt{\E_{g \sim \mathcal{N}(0,\Id_{m_{k_2-2}})} \Vert B^* g \Vert^2} = \Vert B \Vert_F$ by Jensen's inequality and \citet[Exercise~6.3.1]{Vershynin2018}. Substituting into the concentration bound above, we have
\[
\Prob_{k_2-1}\left(
\gamma_{k_2-1}^{\frac{1}{2}} \left( \frac{\Vert B_{k_1, k_2}(x, \theta_{1 : k_2-1}) \Vert - O(\kappa^2 m_{k_2-1}^{-\frac{1}{2}} \Vert B \Vert_F)}{\Vert B \Vert} - 1 \right)_+ \geq t
\right)
\leq 2e^{-\frac{t^2}{O(\kappa^2 m^{-\frac{1}{2}})^2}}.
\]

In other words, with the event $E_t \in \mathcal{B}(\Theta_{1:l-1})$ defined as
\[
E_t = \left\{ \theta_{1:l-1} \in \Theta_{1:l-1} : \gamma_{k_2-1}^{\frac{1}{2}} \left( \frac{\Vert B_{k_1, k_2}(x, \theta_{1 : k_2-1}) \Vert - O(\kappa^2 m_{k_2-1}^{-\frac{1}{2}} \Vert B \Vert_F)}{\Vert B \Vert} - 1 \right)_+ \geq t \right\}
\]
and $\chi_{E_t} : \Theta_{1:l-1} \to \{0,1\}$ being the indicator function of $E_t$ we have that
\[
\int_{\Theta_{k_2-1}} \chi_{E_t}(\theta_{1:k_2-2},A_{k_2-1},\theta_{k_2:l-1}) d\Prob_{k_2-1}(A_{k_2-1}) \leq 2e^{-\frac{t^2}{O(\kappa^2 m^{-\frac{1}{2}})^2}}
\]
for all $\theta_{1:k_2-2} = [A_1,\cdots,A_{k_2-2}] \in \Theta_{1:k_2-2}$ and $\theta_{k_2:l-1} = [A_{k_2},\cdots,A_{l-1}] \in \Theta_{k_2:l-1}$. Denoting $\Theta_{1:l-1 \setminus k_2-1} = \Theta_{1:k_2-2} \times \Theta_{k_2:l-1}$ and $\Prob_{1:l-1 \setminus k_2-1} = \Prob_1 \otimes \cdots \otimes \Prob_{k_2-2} \otimes \Prob_{k_2} \otimes \cdots \otimes \Prob_{l-1}$, the Fubini-Tonelli theorem then implies that $\int_{\Theta_{1:l-1}} \chi_{E_t}(\theta_{1:l-1}) d\Prob(\theta_{1:l-1})$ equals
\begin{multline*}
\int_{\Theta_{1:l-1 \setminus k_2-1}} \left( \int_{\Theta_{k_2-1}} \chi_{E_t}(\theta_{1:k_2-2},A_{k_2-1},\theta_{k_2:l-1}) d\Prob_{k_2-1}(A_{k_2-1}) \right) d\Prob_{1:l-1 \setminus k_2-1}(\theta_{1:k_2-2}, \theta_{k_2:l-1}) \\
\leq \int_{\Theta_{1:l-1 \setminus k_2-1}} 2e^{-\frac{t^2}{O(\kappa^2 m^{-\frac{1}{2}})^2}} d\Prob_{1:l-1 \setminus k_2-1}(\theta_{1:k_2-2}, \theta_{k_2:l-1})
= 2e^{-\frac{t^2}{O(\kappa^2 m^{-\frac{1}{2}})^2}}.
\end{multline*}
Hence $\Prob_{k_2-1}$ can be replaced by $\Prob_{1:l-1}$ in the above concentration bound, giving the claim.
\end{proof}

Define the cosine map $\varrho : [-1,1] \to [-1,1]$ for all $\rho \in [-1,1]$ as
\[
\varrho(\rho) 
= \sigma^2 \int \phi(u_1) \phi(u_2) d\mathcal{N}\left( [u_1,u_2] \left\vert 0,\left[ \begin{smallmatrix} 1 & \rho \\ \rho & 1 \end{smallmatrix} \right] \right. \right),
\]
which is the dual function of $\phi$ in the sense of \citet{Danielyetal2016} at the EOC. It is responsible for the propagation of the cosines of the activations in the infinitely wide limit (see \citet[Proposition~9]{mlpsateoc1}).

\begin{proposition}[Expectation of $X_k(x_1,x_2,\theta_{1:k-1})$]\label{prop:fwd_inner_product_expectation}~\\
Given $x_1,x_2 \in \R^{m_0}$, $k \in [2:l]$ and $\theta_{1:k-2} \in \Theta_{1:k-2}$, we have
\[
\E_{A_{k-1}} X_k(x_1,x_2,\theta_{1:k-1})
= \tau_{k-1}(x_1, \theta_{1:k-2})\tau_{k-1}(x_2, \theta_{1:k-2}) \varrho(\rho_{k-1}(x_1,x_2,\theta_{1:k-2})).
\]
\end{proposition}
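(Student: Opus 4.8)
The plan is a direct computation resting on two facts: the rows of $A_{k-1}$ are i.i.d.\ Gaussian, and $\phi$ is positively homogeneous of degree one. First I would unfold the definitions. Writing $p_i = x_{k-1}(x_i,\theta_{1:k-2}) \in \R^{m_{k-2}}$ for $i \in \{1,2\}$, which are deterministic once $\theta_{1:k-2}$ is fixed, one has $x_k(x_i,\theta_{1:k-1}) = m_{k-1}^{-\frac{1}{2}}\phi(m^{\frac{q_{k-1}}{2}} A_{k-1} p_i)$, so that
\[
X_k(x_1,x_2,\theta_{1:k-1}) = \frac{1}{m_{k-1}} \sum_{j=1}^{m_{k-1}} \phi\!\left( m^{\frac{q_{k-1}}{2}} \langle {A_{k-1}}_j, p_1 \rangle \right) \phi\!\left( m^{\frac{q_{k-1}}{2}} \langle {A_{k-1}}_j, p_2 \rangle \right).
\]
Since the rows ${A_{k-1}}_j$ are i.i.d., every summand has the same expectation, and hence $\E_{A_{k-1}} X_k$ equals the expectation of a single summand.

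Next I would identify the Gaussian structure of the preactivations. Setting $w = m^{\frac{q_{k-1}}{2}} {A_{k-1}}_1$, the initialization $A_{k-1} \sim \mathcal{N}(0, \sigma^2 m^{-q_{k-1}} \Id)$ gives $w \sim \mathcal{N}(0, \sigma^2 \Id_{m_{k-2}})$, so $(u_1,u_2) := (\langle w, p_1 \rangle, \langle w, p_2 \rangle)$ is a centered bivariate Gaussian with $\E u_1 u_2 = \sigma^2 \langle p_1, p_2 \rangle$. In terms of $\tau_i = \tau_{k-1}(x_i,\theta_{1:k-2}) = \Vert p_i \Vert$ and $\rho = \rho_{k-1}(x_1,x_2,\theta_{1:k-2})$, its covariance is $\sigma^2 \left[ \begin{smallmatrix} \tau_1^2 & \tau_1 \tau_2 \rho \\ \tau_1 \tau_2 \rho & \tau_2^2 \end{smallmatrix} \right]$. (If some $\tau_i = 0$ then $x_k(x_i,\cdot) = m_{k-1}^{-\frac{1}{2}}\phi(0) = 0$ and both sides of the claimed identity vanish, so I may assume $\tau_1,\tau_2 > 0$.)

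Finally I would rescale using homogeneity. Writing $u_i = \sigma \tau_i v_i$, the pair $(v_1,v_2)$ is centered Gaussian with unit variances and correlation $\rho$, i.e.\ $(v_1,v_2) \sim \mathcal{N}\!\left( 0, \left[ \begin{smallmatrix} 1 & \rho \\ \rho & 1 \end{smallmatrix} \right] \right)$, and $\phi(u_i) = \sigma \tau_i \phi(v_i)$ since $\phi(cs) = c\phi(s)$ for $c = \sigma\tau_i \geq 0$. Therefore
\[
\E_{A_{k-1}} X_k(x_1,x_2,\theta_{1:k-1}) = \E\, \phi(u_1)\phi(u_2) = \sigma^2 \tau_1 \tau_2 \, \E\, \phi(v_1)\phi(v_2) = \tau_1 \tau_2 \, \varrho(\rho),
\]
where the last step is exactly the definition of $\varrho$; substituting back the definitions of $\tau_i$ and $\rho$ yields the claim.

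I do not expect a genuine obstacle here, as the computation is routine. The only points demanding a little care are the degenerate case of a vanishing activation norm (handled above) and the fact that $\varrho$ remains well-defined when $\rho = \pm 1$ makes the covariance in its definition singular — but the integral is still finite because $\phi$ has at most linear growth, so the substitution argument still goes through.
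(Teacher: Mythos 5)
Your proposal is correct and follows essentially the same route as the paper's proof: reduce to a single summand by the i.i.d.\ rows, identify the bivariate Gaussian law of the preactivations with covariance $\sigma^2 \left[ \begin{smallmatrix} \tau_1^2 & \tau_1 \tau_2 \rho \\ \tau_1 \tau_2 \rho & \tau_2^2 \end{smallmatrix} \right]$, and rescale by homogeneity of $\phi$ to land on the definition of $\varrho$. Your explicit treatment of the degenerate cases $\tau_i = 0$ and $\rho = \pm 1$ is a small bonus the paper leaves implicit.
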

\begin{proof}
Denote the preactivations $z_{i,j} = m^{\frac{q_{k-1}}{2}} \langle {A_{k-1}}_j, x_{k-1}(x_i, \theta_{1 : k-2}) \rangle$ for $i \in [1:2]$ and $j \in [1:m_{k-1}]$. We then have
\[
\E_{A_{k-1}} X_k(x_1,x_2,\theta_{1:k-1})
= \frac{1}{m_{k-1}} \sum_{j=1}^{m_{k-1}}\E_{{A_{k-1}}_j} \phi(z_{1,j}) \phi(z_{2,j}).
\]
As all the rows of $A_{j-1}$ are i.i.d., all these expectations are equal and for any fixed $j \in [1:m_{k-1}]$ the above equals $\E_{{A_{k-1}}_j} \phi( z_{i_1,j} ) \phi( z_{i_2,j} )$. Since $m^{\frac{q_{k-1}}{2}}{A_{k-1}}_j \sim \mathcal{N}( 0, \sigma^2 \Id_{m_{k-2}})$, the expectation $\E_{A_{k-1}} X_k(x_1,x_2,\theta_{1:k-1})$ equals
\begin{multline*}
\E_{v \sim \mathcal{N}\left( 0, \sigma^2 \Id_{m_{k-2}} \right)} \phi\left( \left\langle v, x_{k-1}(x_1, \theta_{1 : k-2}) \right\rangle \right) \phi\left( \left\langle v, x_{k-1}(x_2, \theta_{1 : k-2}) \right\rangle \right) \\
= \int \phi(u_1) \phi(u_2) d\mathcal{N}\left([u_1,u_2] \left| 0, \sigma^2 \left[ \begin{smallmatrix} \tau_1^2 & \tau_1 \tau_2 \rho_{k-1}(x_1,x_2,\theta_{1:k-2}) \\ \tau_1 \tau_2 \rho_{k-1}(x_1,x_2,\theta_{1:k-2}) & \tau_2^2 \end{smallmatrix} \right] \right. \right) \\
= \tau_1 \tau_2 \varrho(\rho_{k-1}(x_1,x_2,\theta_{1:k-2}))
\end{multline*}
using the homogeneity of $\phi$, where we denoted $\tau_i = \tau_{k-1}(x_i, \theta_{1:k-2})$ for $i \in [1:2]$.
\end{proof}

We could study the concentration of the activation inner products directly, but it would lead to suboptimal bounds. Factoring out the norms gives the cosines, to which we can associate the corresponding cosine distances. \citet[Proposition~13]{mlpsateoc1} tells us that these quantities scale as $O(k^{-1})$ across depth. We will study the concentration of proxies to the cosine distances, which we will later relate to the actual cosine distances via the law of cosines. Define the squared cosine distance map $\zeta : [0,1] \to [0,1]$ as $\zeta(z) = \frac{1-\varrho(1 - 2z)}{2}$ for $z \in [0,1]$ (see \citet[Proposition~11]{mlpsateoc1} for its properties).

\begin{proposition}[Concentration of cosine distances of activations]\label{prop:cosine_distance_concentration}~\\
Given $x_1, x_2 \in \R^{m_0}$ and $k \in [2:l]$, for all $t \geq 0$ we have that
\[
\Prob_{1:l-1}\left( \zeta(z)^{-\frac{1}{2}} \frac{\zeta(z)}{z} \gamma_{k-1}^{\frac{1}{2}} \left\vert \left\Vert \frac{1}{2} \frac{x_k(x_1, \theta_{1 : k-1})}{\Vert x_{k-1}(x_1, \theta_{1 : k-2}) \Vert} - \frac{1}{2} \frac{x_k(x_2, \theta_{1 : k-1})}{\Vert x_{k-1}(x_2, \theta_{1 : k-2}) \Vert} \right\Vert - \zeta(z)^{\frac{1}{2}} \right\vert \geq t \right)
\]
is at most $2e^{-\frac{t^2}{O\left( \kappa_\phi^2 m^{-\frac{1}{2}} \right)^2}}$ with
\[
z = \frac{1 - \rho_{k-1}(x_1,x_2,\theta_{1:k-2})}{2} = \left\Vert \frac{1}{2} \frac{x_{k-1}(x_1, \theta_{1 : k-2})}{\Vert x_{k-1}(x_1, \theta_{1 : k-2}) \Vert} - \frac{1}{2} \frac{x_{k-1}(x_2, \theta_{1 : k-2})}{\Vert x_{k-1}(x_2, \theta_{1 : k-2}) \Vert} \right\Vert^2 \in [0,1].
\]
\end{proposition}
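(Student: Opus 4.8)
The plan is to recognize the vector whose norm appears in the statement as a deterministic rescaling of a random vector with i.i.d.\ coordinates, compute its second moment exactly, and invoke the concentration-of-the-norm inequality \citet[Theorem~3.1.1]{Vershynin2018}; the factors $\zeta(z)^{-\frac12}$ and $\frac{\zeta(z)}{z}$ in the statement are engineered precisely so that, once this is done, all of the $z$- and $\zeta(z)$-dependence cancels and only $O(\kappa_\phi^2 m^{-\frac12})$ survives. As in Proposition~\ref{prop:backprop_norm_concentration}, I would first fix $\theta_{1:k-2}$ and prove the bound with $\Prob_{k-1}$ only (the randomness of $A_{k-1}$), then upgrade to $\Prob_{1:l-1}$ by a Fubini--Tonelli argument at the end, the bound being uniform over $\theta_{1:k-2}$ and independent of $\theta_{k:l-1}$. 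Writing $\hat x_i = x_{k-1}(x_i,\theta_{1:k-2})/\Vert x_{k-1}(x_i,\theta_{1:k-2})\Vert$ and $\rho = \rho_{k-1}(x_1,x_2,\theta_{1:k-2}) = 1-2z$, the homogeneity of $\phi$ gives $\frac{x_k(x_i,\theta_{1:k-1})}{\Vert x_{k-1}(x_i,\theta_{1:k-2})\Vert} = m_{k-1}^{-\frac12}\phi(m^{\frac{q_{k-1}}{2}}A_{k-1}\hat x_i)$, so the vector in the statement is $v \in \R^{m_{k-1}}$ with $v_j = \frac12 m_{k-1}^{-\frac12}(\phi(g_{1,j}) - \phi(g_{2,j}))$, where $(g_{1,j},g_{2,j})_{j}$ are i.i.d.\ draws from $\mathcal{N}(0, \sigma^2 \left[ \begin{smallmatrix} 1 & \rho \\ \rho & 1\end{smallmatrix}\right])$ (the rows of $m^{\frac{q_{k-1}}{2}}A_{k-1}$ being i.i.d.\ $\mathcal{N}(0,\sigma^2\Id_{m_{k-2}})$ and $\hat x_1,\hat x_2$ unit vectors with $\langle\hat x_1,\hat x_2\rangle = \rho$). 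Set $W := \Vert v\Vert$.

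Next I would compute the moments of a single coordinate $v_j$. At the EOC, $\E\phi(g_{i,j})^2 = \sigma^2\Vert\phi\Vert_{\mathcal{N}(0,1)}^2 = 1$, while $\E\phi(g_{1,j})\phi(g_{2,j}) = \varrho(\rho)$ by the definition of $\varrho$; hence $\E v_j^2 = \frac14 m_{k-1}^{-1}(1 - 2\varrho(\rho) + 1) = m_{k-1}^{-1}\frac{1-\varrho(1-2z)}{2} = m_{k-1}^{-1}\zeta(z)$, so $\E W^2 = \E\Vert v\Vert^2 = \zeta(z)$, which is exactly the squared center appearing in the statement. For the sub-gaussian norm, $\vert\phi(g_{1,j}) - \phi(g_{2,j})\vert \le (\vert a\vert + \vert b\vert)\vert g_{1,j} - g_{2,j}\vert$ and $g_{1,j} - g_{2,j}\sim\mathcal{N}(0, 2\sigma^2(1-\rho)) = \mathcal{N}(0, 4\sigma^2 z)$, so $\Vert v_j\Vert_{\psi_2} \le O((\vert a\vert+\vert b\vert)\sigma\, z^{\frac12} m_{k-1}^{-\frac12}) = O(\kappa_\phi z^{\frac12} m_{k-1}^{-\frac12})$.

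The case $\zeta(z) = 0$ is trivial, since then $\E\Vert v\Vert^2 = 0$ forces $v = 0$ a.s.; so assume $\zeta(z) > 0$ and normalize to $X_j := v_j m_{k-1}^{\frac12}\zeta(z)^{-\frac12}$, which are i.i.d.\ with $\E X_j^2 = 1$ and $\Vert X_j\Vert_{\psi_2} \le K := O(\kappa_\phi z^{\frac12}\zeta(z)^{-\frac12})$. By \citet[Theorem~3.1.1]{Vershynin2018}, $\Vert\,\Vert X\Vert - \sqrt{m_{k-1}}\,\Vert_{\psi_2} \le O(K^2)$; since $\Vert X\Vert = W m_{k-1}^{\frac12}\zeta(z)^{-\frac12}$ this rearranges to $\Vert W - \zeta(z)^{\frac12}\Vert_{\psi_2} \le O(K^2) m_{k-1}^{-\frac12}\zeta(z)^{\frac12} = O(\kappa_\phi^2 z\,\zeta(z)^{-\frac12} m_{k-1}^{-\frac12})$. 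Multiplying by $\zeta(z)^{-\frac12}\frac{\zeta(z)}{z}\gamma_{k-1}^{\frac12}$, the products $\zeta(z)^{-\frac12}\zeta(z)\zeta(z)^{-\frac12} = 1$ and $z^{-1}z = 1$ cancel, and with $m_{k-1} = \gamma_{k-1}m$ we get $\Vert\zeta(z)^{-\frac12}\frac{\zeta(z)}{z}\gamma_{k-1}^{\frac12}(W - \zeta(z)^{\frac12})\Vert_{\psi_2} \le O(\kappa_\phi^2 m^{-\frac12})$. The claimed probability bound is the standard sub-gaussian tail of this random variable, and the passage from $\Prob_{k-1}$ to $\Prob_{1:l-1}$ is the Fubini--Tonelli argument of Proposition~\ref{prop:backprop_norm_concentration}.

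The delicate point is the normalization bookkeeping: the identity $\E v_j^2 = m_{k-1}^{-1}\zeta(z)$, which is what makes the center $\zeta(z)^{\frac12}$ correct, hinges on both EOC normalizations (the variance scaling $\sigma^2$ and $\sigma^2\Vert\phi\Vert_{\mathcal{N}(0,1)}^2 = 1$) together with the definition of $\varrho$; and the factor $\frac{\zeta(z)}{z}$ present in the statement is exactly what absorbs the $\frac{z}{\zeta(z)}$ blow-up of $K^2$ near $z=0$, which is why no bound on $z/\zeta(z)$ (and hence no $\Delta_\phi$-dependence) is needed at this stage.
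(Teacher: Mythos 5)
Your proposal is correct and follows essentially the same route as the paper: reduce via homogeneity to a vector with i.i.d.\ coordinates $\frac{1}{2}m_{k-1}^{-\frac{1}{2}}(\phi(z_{1,j})-\phi(z_{2,j}))$, compute the second moment $\zeta(z)$ using the definition of $\varrho$ at the EOC, bound the coordinate sub-gaussian norm by $O(\kappa_\phi\sqrt{z})$, and conclude with norm concentration before a Fubini--Tonelli upgrade to $\Prob_{1:l-1}$. The only cosmetic difference is that you invoke the packaged norm-concentration result \citet[Theorem~3.1.1]{Vershynin2018}, whereas the paper unpacks it by applying Bernstein \citet[Corollary~2.8.3]{Vershynin2018} to the squared norm and converting via the $\vert c_1-c_2\vert\geq\delta c_2\implies\vert c_1^2-c_2^2\vert\geq\max\{\delta,\delta^2\}c_2^2$ implication, which is exactly how that theorem is proved.
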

\begin{proof}
First, consider $\theta_{1:k-2} \in \Theta_{1:k-2}$ and $\theta_{k:l-1} \in \Theta_{j:l-1}$ fixed and $A_{k-1} \in \Theta_{k-1}$ random. Denote the normalized preactivations $z_{i,j} = m^{\frac{q_{k-1}}{2}} \langle {A_{k-1}}_j , \frac{ x_{k-1}(x_i, \theta_{1 : k-2}) }{ \Vert x_{k-1}(x_i, \theta_{1 : k-2}) \Vert } \rangle$ for $i \in [1:2]$ and $j \in [1:m_{k-1}]$, so that $z_{i,j} \sim \mathcal{N}(0,\sigma^2)$. Note that $\frac{x_k(x_i, \theta_{1 : k-1})}{\Vert x_{k-1}(x_i, \theta_{1 : k-2}) \Vert} = [ \frac{1}{\sqrt{m_{k-1}}} \phi(z_{i,j}) : j \in [1:m_{k-1}] ]$ by the homogeneity of $\phi$. Consider the decomposition $\phi(z_{1,j}) - \phi(z_{2,j}) = (a z_{1,j} + b \vert z_{1,j} \vert) - (a z_{2,j} + b \vert z_{2,j} \vert) = a (z_{1,j} - z_{2,j}) + b (\vert z_{1,j} \vert - \vert z_{2,j} \vert)$. We have $[z_{1,j},z_{2,j}] \sim \mathcal{N}(0, \sigma^2 \left[ \begin{smallmatrix} 1 & \rho \\ \rho & 1 \end{smallmatrix} \right])$ with $\rho = \rho_{k-1}(x_1,x_2,\theta_{1:k-2}) = 1-2z$, so that $\frac{1}{2} z_{1,j} - \frac{1}{2} z_{2,j} \sim \mathcal{N}(0,\sigma^2 z)$ and therefore $\Vert \frac{1}{2} z_{1,j} - \frac{1}{2} z_{2,j} \Vert_{\psi_2} \leq O(\sigma \sqrt{z})$ by \citet[Example~2.5.8(i)]{Vershynin2018}. On the other hand, by the reverse triangle inequality we have $\vert \frac{1}{2} \vert z_{1,j} \vert - \frac{1}{2} \vert z_{2,j} \vert \vert \leq \vert \frac{1}{2} z_{1,j} - \frac{1}{2} z_{2,j} \vert$, so that $\Vert \frac{1}{2} \vert z_{1,j} \vert - \frac{1}{2} \vert z_{2,j} \vert \Vert_{\psi_2} \leq O(\sigma \sqrt{z})$ as well. Hence by subadditivity we get the bound $\Vert \frac{1}{2} \phi(z_{1,j}) - \frac{1}{2} \phi(z_{2,j}) \Vert_{\psi_2} = O((\vert a \vert + \vert b \vert) \sigma \sqrt{z}) \leq O(\kappa_\phi \sqrt{z})$. Squaring and centering, by \citet[Lemma~2.7.6]{Vershynin2018} and \citet[Exercise~2.7.10]{Vershynin2018} we get $\Vert (\frac{1}{2} \phi(z_{1,j}) - \frac{1}{2} \phi(z_{2,j}))^2 - \E_{{A_{k-1}}_j} (\frac{1}{2} \phi(z_{1,j}) - \frac{1}{2} \phi(z_{2,j}))^2 \Vert_{\psi_1} = O(\kappa_\phi^2 z)$. We can compute that the expectation $\E_{{A_{k-1}}_j} (\frac{1}{2} \phi(z_{1,j}) - \frac{1}{2} \phi(z_{2,j}))^2$ equals
\[
\frac{1}{4} \E_{{A_{k-1}}_j} \phi(z_{1,j})^2 + \frac{1}{4} \E_{{A_{k-1}}_j} \phi(z_{2,j})^2 - \frac{1}{2} \E_{{A_{k-1}}_j} \phi(z_{1,j})\phi(z_{2,j}) = \frac{1 - \varrho(\rho)}{2} = \zeta(z).
\]
Since
\[
\left\Vert \frac{1}{2} \frac{x_k(x_1, \theta_{1 : k-1})}{\Vert x_{k-1}(x_1, \theta_{1 : k-2}) \Vert} - \frac{1}{2} \frac{x_k(x_2, \theta_{1 : k-1})}{\Vert x_{k-1}(x_2, \theta_{1 : k-2}) \Vert} \right\Vert^2 
= \frac{1}{m_{k-1}} \sum_{j=1}^{m_{k-1}} (\phi(z_{1,j}) - \phi(z_{2,j}))^2,
\]
we can apply \citet[Corollary~2.8.3]{Vershynin2018} to get that for any $\delta \geq 0$,
\begin{multline*}
\Prob_{k-1}\left( \left\vert \left\Vert \frac{1}{2} \frac{x_k(x_1, \theta_{1 : k-1})}{\Vert x_{k-1}(x_1, \theta_{1 : k-2}) \Vert} - \frac{1}{2} \frac{x_k(x_2, \theta_{1 : k-1})}{\Vert x_{k-1}(x_2, \theta_{1 : k-2}) \Vert} \right\Vert^2 - \zeta(z) \right\vert 
\geq \max\{ \delta, \delta^2 \} \zeta(z) \right) \\
\leq 2e^{-\min\left\{ \left( \frac{\max\{ \delta, \delta^2 \} \zeta(z)}{O(\kappa_\phi^2 z)} \right)^2, \frac{\max\{ \delta, \delta^2 \} \zeta(z)}{O(\kappa_\phi^2 z)} \right\} m_{k-1}}
\leq 2e^{-\frac{\delta^2}{O(\kappa_\phi^2 \frac{z}{\zeta(z)})^2 m_{k-1}^{-1}}}.
\end{multline*}
By the implication $\vert c_1 - c_2 \vert \geq \delta c_2 \implies \vert c_1^2 - c_2^2 \vert \geq \max\{ \delta, \delta^2 \} c_2^2$ that holds for all $c_1, c_2, \delta \geq 0$, we then have that
\[
\Prob_{k-1}\left( \left\vert \left\Vert \frac{1}{2} \frac{x_k(x_1, \theta_{1 : k-1})}{\Vert x_{k-1}(x_1, \theta_{1 : k-2}) \Vert} - \frac{1}{2} \frac{x_k(x_2, \theta_{1 : k-1})}{\Vert x_{k-1}(x_2, \theta_{1 : k-2}) \Vert} \right\Vert - \sqrt{\zeta(z)} \right\vert \geq \delta \sqrt{\zeta(z)} \right)
\]
is at most $2e^{-\frac{\delta^2}{O(\kappa_\phi^2 \frac{z}{\zeta(z)})^2 m_{k-1}^{-1}}}$. Letting $t = \frac{\delta}{\frac{z}{\zeta(z)} \gamma_{k-1}^{-\frac{1}{2}}}$, we get that
\[
\Prob_{k-1}\left( \zeta(z)^{-\frac{1}{2}} \frac{\zeta(z)}{z} \gamma_{k-1}^{\frac{1}{2}} \left\vert \left\Vert \frac{1}{2} \frac{x_k(x_1, \theta_{1 : k-1})}{\Vert x_{k-1}(x_1, \theta_{1 : k-2}) \Vert} - \frac{1}{2} \frac{x_k(x_2, \theta_{1 : k-1})}{\Vert x_{k-1}(x_2, \theta_{1 : k-2}) \Vert} \right\Vert - \sqrt{\zeta(z)} \right\vert \geq t \right)
\]
is at most $2e^{-\frac{t^2}{O(\kappa_\phi^2)^2 m^{-1}}}$. As this holds for all $\theta_{1:k-2} \in \Theta_{1:k-2}$ and $\theta_{k:l-1} \in \Theta_{j:l-1}$, by the Fubini-Tonelli theorem the above bound still holds with $\Prob_{k-1}$ replaced by $\Prob_{1:l-1}$.
\end{proof}

Denote $\Delta_\phi = \frac{b^2}{a^2+b^2}$, which determines the rate at which inverse cosine distances increase in the infinitely wide limit by \citet[Proposition~13]{mlpsateoc1}.

\begin{remark}[Optimal $\gamma_1,\cdots,\gamma_{l-1}$]\label{rem:optimal_gammas}~\\
Based on \citet[Proposition~13]{mlpsateoc1}, we expect for all $k \in [1:l-1]$ that $\zeta(z)^{-\frac{1}{2}} \frac{\zeta(z)}{z} \approx \Delta_\phi \frac{4}{3\pi} (k-1)$ with sufficient concentration, where we denoted $z = \frac{1 - \rho_{k-1}(x_1,x_2,\theta_{1:k-2})}{2}$.  Proposition~\ref{prop:cosine_distance_concentration} suggests setting
\begin{equation}\label{eq:optimal_gammas}
\gamma_k = k^2 \text{ for all } k \in [1:l-1],
\end{equation}
so that $\zeta(z)^{-\frac{1}{2}} \frac{\zeta(z)}{z} \gamma_{k-1}^{\frac{1}{2}} \approx \Delta_\phi \frac{4}{3\pi} (k-1)^2$ and the concentration error of the (proxies of the) cosine distances will scale as $O(k^{-2})$. It will turn out that this is necessary and sufficient for the inverse cosine distances to increase linearly, as they do in the infinitely wide limit. Figure~\ref{plot:icd_concentration} demonstrates empirically that with this setting, the errors of inverse cosine distances are of the same order in each layer, while the error grows linearly for $\gamma_k=k$ and quadratically for $\gamma_k=1$.
\end{remark}

\begin{figure}[t]
\begin{center}
\centerline{
\begin{tikzpicture}
\begin{groupplot}[group style={group size= 1 by 3}, height=5cm, width=0.9\linewidth]
\nextgroupplot[ylabel={$m_k=m$}, xmin=0, xmax = 33, error bars/y dir=both, error bars/y explicit, legend pos=north west, xlabel = $k$]
\addlegendentry{$m=16$}
\addplot[mycolor2] table [x=Step, y=Value, y error=Std, col sep=comma] {csvs/ntk/run-inverse_cosine_distances_param_nu_q_1_r_0_MNIST_n_2_m_16_l_32_a_0.0_b_1.0_samples_1000_2025-01-24_17_36_20_365186-tag-w_error_mean.csv};
\addlegendentry{$m=32$}
\addplot[mycolor5] table [x=Step, y=Value, y error=Std, col sep=comma] {csvs/ntk/run-inverse_cosine_distances_param_nu_q_1_r_0_MNIST_n_2_m_32_l_32_a_0.0_b_1.0_samples_1000_2025-01-24_17_36_28_656021-tag-w_error_mean.csv};
\addlegendentry{$m=64$}
\addplot[mycolor8] table [x=Step, y=Value, y error=Std, col sep=comma] {csvs/ntk/run-inverse_cosine_distances_param_nu_q_1_r_0_MNIST_n_2_m_64_l_32_a_0.0_b_1.0_samples_1000_2025-01-24_17_36_30_283151-tag-w_error_mean.csv};
\coordinate (top) at (rel axis cs:0,1);
\nextgroupplot[ylabel={$m_k=km$}, xmin=0, xmax = 33, error bars/y dir=both, error bars/y explicit, legend pos=north west, xlabel = $k$]
\addlegendentry{$m=8$}
\addplot[mycolor2] table [x=Step, y=Value, y error=Std, col sep=comma] {csvs/ntk/run-inverse_cosine_distances_param_nu_q_1_r_1_MNIST_n_2_m_8_l_32_a_0.0_b_1.0_samples_1000_2025-01-24_17_35_12_530305-tag-w_error_mean.csv};
\addlegendentry{$m=16$}
\addplot[mycolor5] table [x=Step, y=Value, y error=Std, col sep=comma] {csvs/ntk/run-inverse_cosine_distances_param_nu_q_1_r_1_MNIST_n_2_m_16_l_32_a_0.0_b_1.0_samples_1000_2025-01-24_17_35_16_983650-tag-w_error_mean.csv};
\addlegendentry{$m=32$}
\addplot[mycolor8] table [x=Step, y=Value, y error=Std, col sep=comma] {csvs/ntk/run-inverse_cosine_distances_param_nu_q_1_r_1_MNIST_n_2_m_32_l_32_a_0.0_b_1.0_samples_1000_2025-01-24_17_36_15_564766-tag-w_error_mean.csv};
\nextgroupplot[ylabel={$m_k=k^2m$}, xmin=0, xmax = 33, error bars/y dir=both, error bars/y explicit, legend pos=north west, xlabel = $k$]
\addlegendentry{$m=4$}
\addplot[mycolor2] table [x=Step, y=Value, y error=Std, col sep=comma] {csvs/ntk/run-inverse_cosine_distances_param_nu_q_1_r_2_MNIST_n_2_m_4_l_32_a_0.0_b_1.0_samples_1000_2025-01-24_17_35_04_081855-tag-w_error_mean.csv};
\addlegendentry{$m=8$}
\addplot[mycolor5] table [x=Step, y=Value, y error=Std, col sep=comma] {csvs/ntk/run-inverse_cosine_distances_param_nu_q_1_r_2_MNIST_n_2_m_8_l_32_a_0.0_b_1.0_samples_1000_2025-01-24_17_35_07_774580-tag-w_error_mean.csv};
\addlegendentry{$m=16$}
\addplot[mycolor8] table [x=Step, y=Value, y error=Std, col sep=comma] {csvs/ntk/run-inverse_cosine_distances_param_nu_q_1_r_2_MNIST_n_2_m_16_l_32_a_0.0_b_1.0_samples_1000_2025-01-24_17_35_09_724525-tag-w_error_mean.csv};
\coordinate (bot) at (rel axis cs:1,0);
\end{groupplot}
\path (top-|current bounding box.west) -- node[anchor=south,rotate=90] {$\left\vert \left( \frac{1 - \rho_k(x_1,x_2,\theta_{1:k-1})}{2} \right)^{-\frac{1}{2}} - \left( \frac{1 - \varrho^{\circ (k-1)}(\rho_1(x_1,x_2))}{2} \right)^{-\frac{1}{2}} \right\vert$} (bot-|current bounding box.west);
\end{tikzpicture}
}
\caption{Error between the empirical and limiting inverse cosine distances for different layer width patterns. Depicted are the means and standard deviations of the errors across depth in $32$-layer MLPs with $(a,b)=(0,1)$ taken from $1000$ random pairs $x_1,x_2$ drawn from MNIST, each with a new initial parameter $\theta$.}
\label{plot:icd_concentration}
\end{center}
\end{figure}
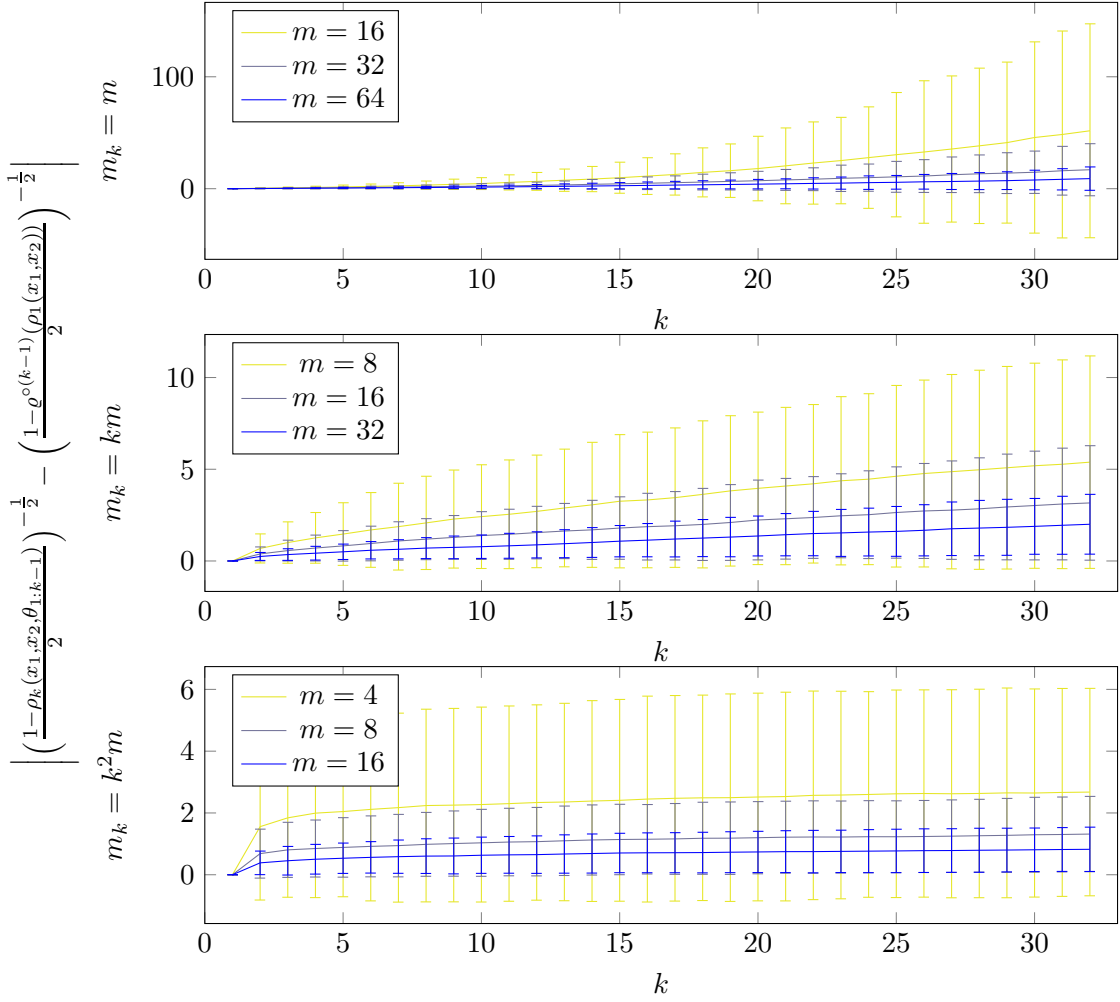

\begin{proposition}[Concentration of norms of activations]\label{prop:norm_concentration}~\\
Given $x \in \R^{m_0}$ and $k \in [2:l]$, for all $t \geq 0$ we have
\[
\Prob_{1:l-1}\left( \gamma_{k-1}^{\frac{1}{2}} \left\vert \frac{\Vert x_k(x, \theta_{1 : k-1}) \Vert}{\Vert x_{k-1}(x, \theta_{1 : k-2}) \Vert} - 1 \right\vert \geq t \right)
\leq 2e^{-\frac{t^2}{O\left( \kappa_\phi^2 m^{-\frac{1}{2}} \right)^2}}.
\]
\end{proposition}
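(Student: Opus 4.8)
The plan is to mirror the proof of Proposition~\ref{prop:cosine_distance_concentration} in a streamlined form: condition on all layer matrices except $A_{k-1}$, reduce the squared ratio of norms to an average of i.i.d.\ squares of sub-gaussian variables, apply Bernstein's inequality, transfer the bound from the squared ratio to the ratio itself, and finally lift the fixed-layer bound to $\Prob_{1:l-1}$ via Fubini--Tonelli. Concretely, I would first hold $\theta_{1:k-2} \in \Theta_{1:k-2}$ and $\theta_{k:l-1} \in \Theta_{k:l-1}$ fixed and treat $A_{k-1} \in \Theta_{k-1}$ as random. Writing $\hat{x} = x_{k-1}(x,\theta_{1:k-2}) / \Vert x_{k-1}(x,\theta_{1:k-2}) \Vert$ and $z_j = m^{\frac{q_{k-1}}{2}} \langle {A_{k-1}}_j, \hat{x} \rangle$ for $j \in [1:m_{k-1}]$, the $z_j$ are i.i.d.\ $\mathcal{N}(0,\sigma^2)$, and by the homogeneity of $\phi$ the $j$th coordinate of $x_k(x,\theta_{1:k-1})$ equals $m_{k-1}^{-\frac{1}{2}} \Vert x_{k-1}(x,\theta_{1:k-2}) \Vert \phi(z_j)$, so that
\[
\frac{\Vert x_k(x,\theta_{1:k-1}) \Vert^2}{\Vert x_{k-1}(x,\theta_{1:k-2}) \Vert^2} = \frac{1}{m_{k-1}} \sum_{j=1}^{m_{k-1}} \phi(z_j)^2 .
\]

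A direct computation gives $\E_{{A_{k-1}}_j} \phi(z_j)^2 = \sigma^2 \Vert \phi \Vert_{\mathcal{N}(0,1)}^2 = \sigma^2(a^2+b^2) = 1$ by the choice $\sigma = (a^2+b^2)^{-\frac{1}{2}}$. Since $z_j \sim \mathcal{N}(0,\sigma^2)$ is $O(\sigma)$-sub-gaussian and $\vert \phi(z_j) \vert \leq (\vert a \vert + \vert b \vert) \vert z_j \vert$, we get $\Vert \phi(z_j) \Vert_{\psi_2} \leq O(\kappa_\phi)$, hence by \citet[Lemma~2.7.6]{Vershynin2018} and \citet[Exercise~2.7.10]{Vershynin2018} the centered squares satisfy $\Vert \phi(z_j)^2 - 1 \Vert_{\psi_1} \leq O(\kappa_\phi^2)$. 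Applying \citet[Corollary~2.8.3]{Vershynin2018} and using that $\min\{\max\{\delta,\delta^2\}^2, \max\{\delta,\delta^2\}\} = \delta^2$ for all $\delta \geq 0$, we obtain for every $\delta \geq 0$ the bound $\Prob_{k-1}( \vert \frac{\Vert x_k \Vert^2}{\Vert x_{k-1} \Vert^2} - 1 \vert \geq \max\{\delta,\delta^2\} ) \leq 2 e^{- \delta^2 / (O(\kappa_\phi^2)^2 m_{k-1}^{-1})}$. The elementary implication $\vert c_1 - c_2 \vert \geq \delta c_2 \implies \vert c_1^2 - c_2^2 \vert \geq \max\{\delta,\delta^2\} c_2^2$ (valid for $c_1,c_2,\delta \geq 0$), applied with $c_1 = \Vert x_k \Vert / \Vert x_{k-1} \Vert$ and $c_2 = 1$, turns this into a tail bound for $\vert \Vert x_k \Vert / \Vert x_{k-1} \Vert - 1 \vert$; substituting $t = \delta \gamma_{k-1}^{\frac{1}{2}}$ and recalling $m_{k-1} = \gamma_{k-1} m$ converts the exponent into $t^2 / (O(\kappa_\phi^2 m^{-\frac{1}{2}}))^2$, which is the claimed bound under $\Prob_{k-1}$. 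Since it holds for every fixed $\theta_{1:k-2}$ and $\theta_{k:l-1}$, the same Fubini--Tonelli argument used in the proofs of Propositions~\ref{prop:backprop_norm_concentration} and~\ref{prop:cosine_distance_concentration} replaces $\Prob_{k-1}$ by $\Prob_{1:l-1}$.

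There is no serious obstacle: the argument is a simpler version of the proof of Proposition~\ref{prop:cosine_distance_concentration}, involving a single norm rather than a difference of normalized vectors, so the sub-exponential variables have no hidden $z$-dependence in their $\psi_1$-norm. The only points requiring a little care are the bookkeeping of the scaling factors — checking that $\min\{\max\{\delta,\delta^2\}^2,\max\{\delta,\delta^2\}\} = \delta^2$ so that the Bernstein bound collapses to a clean sub-gaussian-type tail in $\delta$, and verifying that the substitution $t = \delta \gamma_{k-1}^{\frac{1}{2}}$ together with $m_{k-1} = \gamma_{k-1} m$ produces exactly the advertised $O(\kappa_\phi^2 m^{-\frac{1}{2}})$ rate with no residual dependence on $\gamma_{k-1}$.
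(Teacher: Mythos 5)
Your proposal is correct and is essentially the paper's own argument: the paper proves this proposition by observing that it is the special case of the proof of Proposition~\ref{prop:cosine_distance_concentration} obtained by replacing the two normalized activations with $\frac{x_k(x,\theta_{1:k-1})}{\Vert x_{k-1}(x,\theta_{1:k-2})\Vert}$ and $0$, and your write-up simply carries out that specialization explicitly (same conditioning, same reduction to an average of i.i.d.\ sub-exponential squares with unit mean, same Bernstein bound, square-root transfer, rescaling, and Fubini--Tonelli lift).
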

\begin{proof}
Note that replacing $\frac{1}{2} \frac{x_k(x_1, \theta_{1 : k-1})}{\Vert x_{k-1}(x_1, \theta_{1 : k-2}) \Vert}$ and $\frac{1}{2} \frac{x_k(x_2, \theta_{1 : k-1})}{\Vert x_{k-1}(x_2, \theta_{1 : k-2}) \Vert}$ by $\frac{x_k(x, \theta_{1 : k-1})}{\Vert x_{k-1}(x, \theta_{1 : k-2}) \Vert}$ and $0$ in the proof of Proposition~\ref{prop:cosine_distance_concentration} gives the claim.
\end{proof}

By \citet[Proposition~7]{mlpsateoc1} and \citet[Proposition~9]{mlpsateoc1}, we have for all $\rho \in [-1,1]$ that
\[
\varrho'(\rho) 
= \sigma^2 \int \phi'(u_1) \phi'(u_2) d\mathcal{N}\left( [u_1,u_2] \left\vert 0,\left[ \begin{smallmatrix} 1 & \rho \\ \rho & 1 \end{smallmatrix} \right] \right. \right),
\]
i.e., taking the dual commutes with differentiation as shown in \citet{Danielyetal2016}. Additional justification for the notation $X'$ is the fact that the Frobenius inner products of the backpropagation matrices concentrate around the images of the cosines under $\varrho'$.

\begin{proposition}[Expectation of $X'_{k,k}(x_1,x_2,\theta_{1:k-1})$]~\\
Given $x_1,x_2 \in \R^{m_0}$, $k \in [2:l]$ and $\theta_{1:k-2} \in \Theta_{1:k-2}$, we have
\[
\E_{A_{k-1}} X'_{k,k}(x_1,x_2,\theta_{1:k-1})
= \varrho'(\rho_{k-1}(x_1,x_2,\theta_{1:k-2})).
\]
\end{proposition}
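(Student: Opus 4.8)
The plan is to follow the template of the proof of Proposition~\ref{prop:fwd_inner_product_expectation}, with $\phi$ replaced by $\phi'$ and using that $\phi'$ is homogeneous of degree $0$, i.e., $\phi'(cs)=\phi'(s)$ for every $c>0$ and $s \neq 0$. First I would unwind the definitions. Since $B_{k,k}(x,\theta_{1:k-1})=\sigma D_{x_k'(x,\theta_{1:k-1})}$ is a real diagonal matrix, the product $B_{k,k}(x_1,\theta_{1:k-1})B_{k,k}(x_2,\theta_{1:k-1})^*$ is diagonal with $j$th diagonal entry $\sigma^2\, x_k'(x_1,\theta_{1:k-1})_j\, x_k'(x_2,\theta_{1:k-1})_j$. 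Recalling that $x_k'(x,\theta_{1:k-1})=m_{k-1}^{-\frac12}\phi'(N_{k-1}(x,\theta_{1:k-1}))$ and writing the preactivations $z_{i,j}=N_{k-1}(x_i,\theta_{1:k-1})_j=m^{\frac{q_{k-1}}{2}}\langle {A_{k-1}}_j, x_{k-1}(x_i,\theta_{1:k-2})\rangle$ for $i\in[1:2]$ and $j\in[1:m_{k-1}]$ (noting that $x_{k-1}(x_i,\theta_{1:k-2})$ does not depend on $A_{k-1}$), we obtain
\[
X'_{k,k}(x_1,x_2,\theta_{1:k-1}) = \tr\left(B_{k,k}(x_1,\theta_{1:k-1})B_{k,k}(x_2,\theta_{1:k-1})^*\right) = \frac{\sigma^2}{m_{k-1}}\sum_{j=1}^{m_{k-1}}\phi'(z_{1,j})\phi'(z_{2,j}).
\]

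Next I would take the expectation over $A_{k-1}$. Since the rows ${A_{k-1}}_j$ are i.i.d., all $m_{k-1}$ summands have equal expectation, hence $\E_{A_{k-1}}X'_{k,k}(x_1,x_2,\theta_{1:k-1})=\sigma^2\,\E_{{A_{k-1}}_j}\phi'(z_{1,j})\phi'(z_{2,j})$ for any fixed $j$. Because $m^{\frac{q_{k-1}}{2}}{A_{k-1}}_j\sim\mathcal{N}(0,\sigma^2\Id_{m_{k-2}})$, the pair $[z_{1,j},z_{2,j}]$ is centered jointly Gaussian with covariance $\sigma^2\left[\begin{smallmatrix}\tau_1^2 & \tau_1\tau_2\rho\\ \tau_1\tau_2\rho & \tau_2^2\end{smallmatrix}\right]$, where $\tau_i=\tau_{k-1}(x_i,\theta_{1:k-2})$ and $\rho=\rho_{k-1}(x_1,x_2,\theta_{1:k-2})$. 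Rescaling $z_{i,j}$ by the deterministic positive factor $\sigma\tau_i$ and using $\phi'(cs)=\phi'(s)$ for $c>0$, this expectation equals
\[
\sigma^2 \int \phi'(u_1)\phi'(u_2)\,d\mathcal{N}\left([u_1,u_2]\left\vert 0,\left[\begin{smallmatrix}1 & \rho\\ \rho & 1\end{smallmatrix}\right]\right.\right) = \varrho'(\rho),
\]
where the last equality is the formula for $\varrho'$ recalled just before the statement (from \citet[Proposition~7]{mlpsateoc1} and \citet[Proposition~9]{mlpsateoc1}). This gives the claim.

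There is essentially no obstacle here; the argument is a direct analogue of Proposition~\ref{prop:fwd_inner_product_expectation}. The only points deserving a word are (i) that $\phi'$ is homogeneous of degree $0$ only away from $s=0$, but since $[z_{1,j},z_{2,j}]$ is a nondegenerate Gaussian whenever $\tau_1,\tau_2>0$, the event $\{z_{i,j}=0\}$ is null and the convention $\phi'(0)=a$ is thus immaterial in the expectation, and (ii) that $\tau_i>0$ holds almost surely for a nonzero activation, on whose null complement the claim is unaffected.
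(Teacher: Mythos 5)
Your proposal is correct and follows essentially the same route as the paper's proof: unwind $X'_{k,k}$ into $\frac{\sigma^2}{m_{k-1}}\sum_j \phi'(z_{1,j})\phi'(z_{2,j})$, reduce to a single summand by the i.i.d. rows of $A_{k-1}$, and use the degree-$0$ homogeneity of $\phi'$ to normalize the Gaussian covariance and recognize $\varrho'(\rho)$; the paper merely normalizes the preactivations at the outset rather than rescaling at the end. Your remarks on the null set $\{z_{i,j}=0\}$ and on $\tau_i>0$ are harmless extra care (the latter is already implicit in the statement since $\rho_{k-1}$ must be defined).
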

\begin{proof}
Denoting the normalized preactivations $z_{i,j} = m^{\frac{q_{k-1}}{2}} \langle {A_{k-1}}_j, \frac{x_{k-1}(x_i, \theta_{1 : k-2})}{\Vert x_{k-1}(x_i, \theta_{1 : k-2}) \Vert} \rangle$ for $i \in [1:2]$ and $j \in [1:m_{k-1}]$, we have that $X'_{k,k}(x_1,x_2,\theta_{1:k-1})$ equals
\begin{multline*}
\tr\left( \sigma^2 D_{x_k'(x_1,\theta_{1 : k-1})} D_{x_k'(x_2,\theta_{1 : k-1})}^* \right)
= \sigma^2 \langle x_k'(x_1,\theta_{1 : k-1}), x_k'(x_2,\theta_{1 : k-1}) \rangle \\
= \sigma^2 \frac{1}{m_{k-1}} \sum_{j=1}^{m_{k-1}} \phi'(\Vert x_{k-1}(x_1, \theta_{1 : k-2}) \Vert z_{1,j}) \phi'(\Vert x_{k-1}(x_2, \theta_{1 : k-2}) \Vert z_{2,j}),
\end{multline*}
which further equals $\sigma^2 \frac{1}{m_{k-1}} \sum_{j=1}^{m_{k-1}} \phi'(z_{1,j}) \phi'(z_{2,j})$ as $\phi'(ts)=\phi'(s)$ for all $t > 0$. Since $\phi'(z_{1,j}) \phi'(z_{2,j})$ are i.i.d. for all $j \in [1:m_{k-1}]$, we have
\begin{multline*}
\E_{A_{k-1}} X'_{k,k}(x_1,x_2,\theta_{1:k-1})
= \sigma^2 \E_{{A_{k-1}}_j} \phi'( z_{1,j} ) \phi'( z_{2,j} ) \\
= \sigma^2 \E_{v \sim \mathcal{N}( 0, \sigma^2 \Id_{m_{k-2}} )} \phi'\left( \left\langle v, \frac{x_{k-1}(x_1, \theta_{1 : k-2})}{\Vert x_{k-1}(x_1, \theta_{1 : k-2}) \Vert} \right\rangle \right) \phi'\left( \left\langle v, \frac{x_{k-1}(x_2, \theta_{1 : k-2})}{\Vert x_{k-1}(x_2, \theta_{1 : k-2}) \Vert} \right\rangle \right) \\
= \sigma^2 \int \phi'(u_1) \phi'(u_2) d\mathcal{N}\left([u_1,u_2] \left| 0, \sigma^2 \left[ \begin{smallmatrix} 1 & \rho_{k-1}(x_1,x_2,\theta_{1:k-2}) \\ \rho_{k-1}(x_1,x_2,\theta_{1:k-2}) & 1 \end{smallmatrix} \right] \right. \right) 
= \varrho'(\rho),
\end{multline*}
giving the claim.
\end{proof}

\begin{proposition}[Expectation of $X'_{k_1,k_2}(x,x,\theta_{1:k_2-1})$]\label{prop:bwd_expectation_diagonal}~\\
Given $x \in \R^{m_0}$, $k_1 < k_2 \in [2:l]$ and $\theta_{1:k_2-1} \in \Theta_{1:k_2-1}$ such that $\Vert x_{k_2-1}(x, \theta_{1 : k_2-2}) \Vert > 0$, we have
\[
\E_{A_{k_2-1}} X'_{k_1,k_2}(x,x,\theta_{1:k_2-1})
= X'_{k_1,k_2-1}(x,x,\theta_{1:k_2-2}).
\]
\end{proposition}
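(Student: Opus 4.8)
The plan is to peel off the outermost random matrix $A_{k_2-1}$ from the backpropagation matrix, rewrite $X'_{k_1,k_2}(x,x,\theta_{1:k_2-1}) = \Vert B_{k_1,k_2}(x,\theta_{1:k_2-1})\Vert_F^2$ as a sum over the rows of $A_{k_2-1}$, and reduce the resulting expectation to the trace of the isotropic random matrix already produced inside the proof of Proposition~\ref{prop:backprop_norm_concentration}.

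First I would unfold Definition~\ref{def:backprop_matrix} and separate out its leftmost random factor, obtaining the recursion
\[
B_{k_1,k_2}(x,\theta_{1:k_2-1}) = D_{x_{k_2}'(x,\theta_{1:k_2-1})}\, m^{\frac{q_{k_2-1}}{2}} A_{k_2-1}\, B_{k_1,k_2-1}(x,\theta_{1:k_2-2}),
\]
where the single $\sigma$ prefactor of $B_{k_1,k_2}$ is absorbed into $B_{k_1,k_2-1}$, and where the trailing factor $B_{k_1,k_2-1}(x,\theta_{1:k_2-2})$ depends only on $\theta_{1:k_2-2}$ and is hence independent of $A_{k_2-1}$ (the base case $k_1=k_2-1$ with $B_{k_1,k_1}=\sigma D_{x_{k_1}'}$ is handled identically). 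Writing $C = B_{k_1,k_2-1}(x,\theta_{1:k_2-2})$ and $W_j = m^{\frac{q_{k_2-1}}{2}}{A_{k_2-1}}_j \sim \mathcal{N}(0,\sigma^2\Id_{m_{k_2-2}})$ for the i.i.d.\ rows, and using $x_{k_2}'(x,\theta_{1:k_2-1})_j = m_{k_2-1}^{-\frac{1}{2}}\phi'(\langle W_j, x_{k_2-1}(x,\theta_{1:k_2-2})\rangle)$, the diagonality of $D_{x_{k_2}'}$ yields
\[
X'_{k_1,k_2}(x,x,\theta_{1:k_2-1}) = \frac{1}{m_{k_2-1}}\sum_{j=1}^{m_{k_2-1}} \phi'(\langle W_j, x_{k_2-1}(x,\theta_{1:k_2-2})\rangle)^2 \langle W_j, CC^* W_j\rangle.
\]

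Taking $\E_{A_{k_2-1}}$, the i.i.d.\ structure and the independence of $C$ from $A_{k_2-1}$ collapse the sum and cancel the normalization, giving
\[
\E_{A_{k_2-1}} X'_{k_1,k_2}(x,x,\theta_{1:k_2-1}) = \tr\left( CC^*\, \E_{W\sim\mathcal{N}(0,\sigma^2\Id_{m_{k_2-2}})}\left[\phi'(\langle W, x_{k_2-1}(x,\theta_{1:k_2-2})\rangle)^2\, W^{\otimes 2}\right]\right),
\]
and the inner expectation equals $\Id_{m_{k_2-2}}$ by exactly the isotropy computation performed in the proof of Proposition~\ref{prop:backprop_norm_concentration}: condition $W$ on the scalar preactivation $\langle W, x_{k_2-1}(x,\theta_{1:k_2-2})\rangle$, use $\phi'(ts)=\phi'(s)$ for $t>0$ (this is where the hypothesis $\Vert x_{k_2-1}(x,\theta_{1:k_2-2})\Vert>0$ enters), and invoke the moments $\sigma^2\E_{u\sim\mathcal{N}(0,\sigma^2)}\phi'(u)^2 = \E_{u\sim\mathcal{N}(0,\sigma^2)}\phi'(u)^2 u^2 = 1$ to get the coefficients of $\hat{x}^{\otimes 2}$ and $\Id_{m_{k_2-2}}-\hat{x}^{\otimes 2}$ both equal to $1$. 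Therefore
\[
\E_{A_{k_2-1}} X'_{k_1,k_2}(x,x,\theta_{1:k_2-1}) = \tr(CC^*) = \Vert B_{k_1,k_2-1}(x,\theta_{1:k_2-2})\Vert_F^2 = X'_{k_1,k_2-1}(x,x,\theta_{1:k_2-2}),
\]
which is the claim.

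The argument is essentially routine: the only points requiring mild care are the bookkeeping of the $\sigma$ prefactor in the peel-off identity and the observation that the Gaussian second moment already evaluated in Proposition~\ref{prop:backprop_norm_concentration} is precisely the one needed here, so that no new integral has to be computed. I would not expect a genuine obstacle.
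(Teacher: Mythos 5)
Your proof is correct and follows essentially the same route as the paper's: peel off $A_{k_2-1}$ via the recursion $B_{k_1,k_2}=D_{x_{k_2}'}m^{q_{k_2-1}/2}A_{k_2-1}B_{k_1,k_2-1}$, expand $\Vert B_{k_1,k_2}\Vert_F^2$ as an i.i.d.\ sum over rows, and evaluate the Gaussian expectation by conditioning on the preactivation and using $\sigma^2\E\phi'(u)^2=\E_{u\sim\mathcal N(0,\sigma^2)}\phi'(u)^2u^2=1$. The only (harmless) difference is organizational: you factor the computation through the isotropy identity $\E[\phi'(\langle W,x\rangle)^2W^{\otimes 2}]=\Id$ already derived in Proposition~\ref{prop:backprop_norm_concentration} and then take the trace against $CC^*$, whereas the paper redoes the conditional-Gaussian calculation directly on $\Vert B^*W\Vert^2$.
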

\begin{proof}
Denoting the preactivations $z_j = m^{\frac{q_{k_2-1}}{2}} \langle {A_{k_2-1}}_j, x_{k_2-1}(x, \theta_{1 : k_2-2}) \rangle$ for $j \in [1:m_{k_2-1}]$, we have that $X'_{k_1,k_2}(x,x,\theta_{1:k_2-1})$ equals
\[
\Vert B_{k_1, k_2}(x, \theta_{1 : k_2-1}) \Vert_F^2
= \frac{1}{m_{k_2-1}} \sum_{j=1}^{m_{k_2-1}} 
\phi'(z_j)^2 \left\Vert B_{k_1,k_2-1}(x,\theta_{1:k_2-2})^* m^{\frac{q_{k_2-1}}{2}} {A_{k_2-1}}_j^* \right\Vert^2.
\]
As the terms in the sum above are i.i.d. for $j \in [1:m_{k_2-1}]$, we have
\[
\E_{A_{k_2-1}} X'_{k_1,k_2}(x,x,\theta_{1:k_2-1})
= \E_{{A_{k_2-1}}_j} \phi'( z_j )^2
\left\Vert B_{k_1,k_2-1}(x,\theta_{1:k_2-2})^* m^{\frac{q_{k_2-1}}{2}} {A_{k_2-1}}_j^* \right\Vert^2.
\]
Since $m^{\frac{q_{k_2-1}}{2}}{A_{k_2-1}}_j \sim \mathcal{N}( 0, \sigma^2 \Id_{m_{k_2-2}})$, the above equals
\[
\E_{v \sim \mathcal{N}( 0, \sigma^2 \Id_{m_{k_2-2}} )} \phi'( \langle v, x_{k_2-1}(x, \theta_{1 : k_2-2}) \rangle )^2 \left\Vert B_{k_1,k_2-1}(x,\theta_{1:k_2-2})^* v \right\Vert^2.
\]
Note that we can write the above as $\E_{(u,v) \sim \mathcal{N}(0,\Sigma)} \phi'( u )^2 \Vert v \Vert^2$ with $\Sigma = \left[ \begin{smallmatrix} \Sigma_u & \Sigma_{uv}^* \\ \Sigma_{uv} & \Sigma_v \end{smallmatrix} \right] \in \mathbb{S}^{1+m_{k_1-1}}_+$, where $\Sigma_u = \sigma^2 \Vert x_{k_2-1}(x, \theta_{1 : k_2-2}) \Vert^2 > 0$, $\Sigma_v = \sigma^2 B_{k_1,k_2-1}(x,\theta_{1:k_2-2})^* B_{k_1,k_2-1}(x,\theta_{1:k_2-2}) \in \mathbb{S}^{m_{k_1-1}}_+$ and $\Sigma_{uv} = \sigma^2 B_{k_1,k_2-1}(x,\theta_{1:k_2-2})^* x_{k_2-1}(x, \theta_{1 : k_2-2}) \in \R^{m_{k_1-1}}$. The conditional distribution of $v$ given $u$ is a normal distribution with mean $\mu_{v \vert u} = u \Sigma_u^{-1} \Sigma_{uv}$ and covariance $\Sigma_{v \vert u} = \Sigma_v - \Sigma_u^{-1} \Sigma_{uv}^{\otimes 2}$. Since $\E_{v \sim \mathcal{N}(\mu_{v \vert u}, \Sigma_{v \vert u})} \Vert v \Vert^2 = \Vert \mu_{v \vert u} \Vert^2 + \tr(\Sigma_{v \vert u}) = \Sigma_u^{-2} u^2 \Vert \Sigma_{uv} \Vert^2 + \tr(\Sigma_v) - \Sigma_u^{-1} \Vert \Sigma_{uv} \Vert^2$, we get that 
\begin{multline*}
\E_{(u,v) \sim \mathcal{N}(0,\Sigma)} \phi'(u)^2 \Vert v \Vert^2 
= \E_{u \sim \mathcal{N}(0,\Sigma_u)} \phi'(u)^2 \E_{v \sim \mathcal{N}(\mu_{v \vert u},\Sigma_{v \vert u})} \Vert v \Vert^2 \\
= \E_{u \sim \mathcal{N}(0,\Sigma_u)} \phi'(u)^2 (\tr(\Sigma_v) + (\Sigma_u^{-2} u^2 - \Sigma_u^{-1}) \Vert \Sigma_{uv} \Vert^2).
\end{multline*}
Denoting $x = x_{k_2-1}(x, \theta_{1 : k_2-2})$ and $B = B_{k_1,k_2-1}(x,\theta_{1:k_2-2})$, the above equals
\[
\E_{u \sim \mathcal{N}(0,\sigma^2 \Vert x \Vert^2)} \phi'( u )^2
\left( \sigma^2 \tr(B^* B) + \sigma^2 \left( \left( \sigma^{-1} \Vert x \Vert^{-1} u \right)^2 - 1 \right) \left\Vert B^* \frac{x}{\Vert x \Vert} \right\Vert^2 \right).
\]
As $\sigma^2 \E_{u \sim \mathcal{N}(0,1)} \phi'(u)^2 = \sigma^2 \E_{u \sim \mathcal{N}(0,1)} (\phi'(u)u)^2 = 1$, we get that
\[
\E_{(u,v) \sim \mathcal{N}(0,\Sigma)} \phi'(u)^2 \Vert v \Vert^2 
= \tr(B^* B) = \tr(B B^*) = X'_{k_1,k_2-1}(x,x,\theta_{1:k_2-2})
\]
giving the claim.
\end{proof}

Computing the expectation in the offdiagonal case has been done using the GIA since \citet{Jacotetal2018}, which has been shown to be true asymptotically in the infinitely wide limit by \citet{Yang2020}. Instead of relying on this assumption, we quantify the finite depth gradient dependence error in the expectation in terms of the activation cosines and the norms of the backpropagation matrices. The result below shows that MLPs with $(a,b)$-ReLUs at the EOC enjoy approximate gradient independence at finite width.

\begin{proposition}[Expectation of $X'_{k_1,k_2}(x_1,x_2,\theta_{1:k_2-1})$]\label{prop:bwd_expectation_offdiagonal}~\\
Given $x_1,x_2 \in \R^{m_0}$, $k_1 < k_2 \in [2:l]$ and $\theta_{1:k_2-2} \in \Theta_{1:k_2-2}$ such that $\rho_{k_2-1}(x_1,x_2,\theta_{1:k_2-2}) \in (-1,1)$, we have
\begin{multline*}
\left\vert \E_{A_{k_2-1}} X'_{k_1,k_2}(x_1,x_2,\theta_{1:k_2-1}) - \varrho'(\rho_{k_2-1}(x_1,x_2,\theta_{1:k_2-2})) X'_{k_1,k_2-1}(x_1,x_2,\theta_{1:k_2-2}) \right\vert \\
\leq \Delta_\phi \frac{8}{\pi} \sqrt{\frac{1-\rho_{k_2-1}(x_1,x_2,\theta_{1:k_2-2})}{1+\rho_{k_2-1}(x_1,x_2,\theta_{1:k_2-2})}} \Vert B_{k_1,k_2-1}(x_1,\theta_{1:k_2-2}) \Vert \Vert B_{k_1,k_2-1}(x_2,\theta_{1:k_2-2}) \Vert.
\end{multline*}
\end{proposition}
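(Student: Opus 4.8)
The plan is to rewrite $\E_{A_{k_2-1}} X'_{k_1,k_2}$ as a Gaussian integral over $A_{k_2-1}$, peel off the gradient-independent main term by decomposing the Gaussian vector along the two input directions, and bound the leftover error by a parity argument together with an explicit two-dimensional integral. Concretely, using the recursion $B_{k_1,k_2}(x,\theta_{1:k_2-1}) = D_{x_{k_2}'(x,\theta_{1:k_2-1})}\, m^{\frac{q_{k_2-1}}{2}} A_{k_2-1}\, B_{k_1,k_2-1}(x,\theta_{1:k_2-2})$, the i.i.d.\ structure of the rows of $A_{k_2-1}$, and the trace identity, I would first establish
\[
\E_{A_{k_2-1}} X'_{k_1,k_2}(x_1,x_2,\theta_{1:k_2-1}) = \E_{w\sim\mathcal{N}(0,\sigma^2\Id_{m_{k_2-2}})}\left[ \phi'(\langle w,u_1\rangle)\,\phi'(\langle w,u_2\rangle)\,\langle B_1^* w, B_2^* w\rangle \right],
\]
where $u_i = x_{k_2-1}(x_i,\theta_{1:k_2-2})$ and $B_i = B_{k_1,k_2-1}(x_i,\theta_{1:k_2-2})$. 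By the $0$-homogeneity $\phi'(ts)=\phi'(s)$ for $t>0$, the $u_i$ inside $\phi'$ may be replaced by $\hat u_i = u_i/\Vert u_i\Vert$. Set $V = \mathrm{span}(\hat u_1,\hat u_2)$, which is genuinely two-dimensional since $\rho := \rho_{k_2-1}(x_1,x_2,\theta_{1:k_2-2}) \in (-1,1)$, and split $w = w_\parallel + w_\perp$ along $V$ and $V^\perp$. The two $\phi'$-factors depend only on $w_\parallel$; conditioning on $w_\parallel$ and integrating out $w_\perp$ kills the cross terms and turns $\langle B_1^* w_\perp, B_2^* w_\perp\rangle$ into $\sigma^2\tr(B_1 B_2^* P_{V^\perp})$, and combined with $\E_{w_\parallel}[\phi'(\langle w_\parallel,\hat u_1\rangle)\phi'(\langle w_\parallel,\hat u_2\rangle)] = \varrho'(\rho)/\sigma^2$ (the integral formula for $\varrho'$) one extracts the main term $\varrho'(\rho)\tr(B_1 B_2^*) = \varrho'(\rho)\, X'_{k_1,k_2-1}(x_1,x_2,\theta_{1:k_2-2})$, leaving the error
\[
E := \E_{w_\parallel}\left[ \phi'(\langle w_\parallel,\hat u_1\rangle)\,\phi'(\langle w_\parallel,\hat u_2\rangle)\left( \langle B_1^* w_\parallel, B_2^* w_\parallel\rangle - \sigma^2\tr(B_1 B_2^* P_V) \right) \right].
\]

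The crucial observation is a parity argument isolating the nonsmooth part of $\phi'$. Writing $\phi'(s) = a + b\,\sgn(s)$, the product of the two $\phi'$-factors expands into an $a^2$ term, two terms of the form $ab\,\sgn(\langle w_\parallel,\hat u_i\rangle)$, and a term $b^2\,\sgn(\langle w_\parallel,\hat u_1\rangle)\sgn(\langle w_\parallel,\hat u_2\rangle)$. The quadratic $Q(w_\parallel) = \langle B_1^* w_\parallel, B_2^* w_\parallel\rangle - \sigma^2\tr(B_1 B_2^* P_V)$ is even in $w_\parallel$ with $\E_{w_\parallel} Q = 0$, so the $a^2$ term contributes $a^2\,\E_{w_\parallel}Q = 0$, and each $ab$ term vanishes because $\sgn(\langle w_\parallel,\hat u_i\rangle)\,Q(w_\parallel)$ is odd under $w_\parallel \mapsto -w_\parallel$. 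Hence only the $b^2$ term survives; writing $w_\parallel = \sigma v$ with $v\sim\mathcal{N}(0,\Id_2)$ in an orthonormal basis $e_1,e_2$ of $V$, using $b^2\sigma^2 = \Delta_\phi$, and setting $M_{ab} = \langle B_1^* e_a, B_2^* e_b\rangle$,
\[
E = \Delta_\phi\,\E_v\left[ \sgn(\langle v,\hat u_1\rangle)\sgn(\langle v,\hat u_2\rangle)\left( (v_1^2-1)M_{11} + (v_2^2-1)M_{22} + v_1 v_2(M_{12}+M_{21}) \right) \right].
\]

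It remains to evaluate this two-dimensional Gaussian integral. In polar coordinates $v = r(\cos\theta,\sin\theta)$ the radius and angle are independent with $\E r^2 = 2$, and parametrising $\hat u_1 = e_1$, $\hat u_2 = (\cos\psi,\sin\psi)$ with $\cos\psi = \rho$ and $\psi\in[0,\pi]$, we have $\sgn(\langle v,\hat u_i\rangle) = \sgn(\cos(\theta-\psi_i))$, so $E$ reduces to the frequency-$2$ Fourier coefficients of the period-$\pi$ square wave $s(\theta) = \sgn(\cos\theta)\sgn(\cos(\theta-\psi))$. Using that $v_1^2 + v_2^2 - 2$ has zero mean (so the $(v_2^2-1)$-coefficient is minus the $(v_1^2-1)$-coefficient) and computing the angular integrals directly (one checks $s$ equals $-1$ on a set of measure $\psi$ in each period), $\frac{1}{2\pi}\int_0^{2\pi}\cos(2\theta)\,s(\theta)\,d\theta = \frac{\sin 2\psi}{\pi}$ and $\frac{1}{2\pi}\int_0^{2\pi}\sin(2\theta)\,s(\theta)\,d\theta = \frac{2\sin^2\psi}{\pi}$, whence
\[
E = \frac{\Delta_\phi}{\pi}\left( \sin(2\psi)(M_{11}-M_{22}) + 2\sin^2\psi\,(M_{12}+M_{21}) \right).
\]

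Finally, by Cauchy--Schwarz, $|E| \le \frac{\Delta_\phi}{\pi}\, 2\sin\psi\,\sqrt{(M_{11}-M_{22})^2 + (M_{12}+M_{21})^2}$. The restriction of $B_1 B_2^*$ to $V$ in the basis $e_1,e_2$ is $C_1 C_2^*$ with $C_i\in\R^{2\times m_{k_1-1}}$ having rows $B_i^* e_1, B_i^* e_2$, so $\Vert C_i\Vert \le \Vert B_i\Vert$ (since $C_i x$ is the coordinate vector of $P_V(B_i x)$) and $\Vert C_2\Vert_F \le \sqrt 2\,\Vert B_2\Vert$; as $(M_{11}-M_{22})^2 + (M_{12}+M_{21})^2 \le 2\Vert C_1 C_2^*\Vert_F^2 \le 2\Vert C_1\Vert^2\Vert C_2\Vert_F^2 \le 4\Vert B_1\Vert^2\Vert B_2\Vert^2$, we get $|E| \le \Delta_\phi\frac{4\sin\psi}{\pi}\Vert B_1\Vert\Vert B_2\Vert = \Delta_\phi\frac{4\sqrt{1-\rho^2}}{\pi}\Vert B_1\Vert\Vert B_2\Vert$, and the stated bound follows from $\sqrt{1-\rho^2} = (1+\rho)\sqrt{\frac{1-\rho}{1+\rho}} \le 2\sqrt{\frac{1-\rho}{1+\rho}}$. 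The main obstacle I expect is the explicit evaluation of this two-dimensional Gaussian integral — pinning down the frequency-$2$ Fourier coefficients of $s$ with the correct endpoints — whereas the conceptual heart is the parity argument, which is precisely the quantitative statement that only the $b\,\lvert\cdot\rvert$ part of the activation (the nonsmooth part) produces gradient dependence; the operator-norm bookkeeping relating $B_1 B_2^*|_V$ to $\Vert B_1\Vert\Vert B_2\Vert$ is routine.
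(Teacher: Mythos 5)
Your proof is correct, and at the level of the main decomposition it parallels the paper's: both reduce to a single-row Gaussian expectation $\E_w\,\phi'(\langle w,u_1\rangle)\phi'(\langle w,u_2\rangle)\langle B_1^*w,B_2^*w\rangle$, both condition on the two-dimensional projection of $w$ onto $\mathrm{span}(u_1,u_2)$ to extract the main term $\varrho'(\rho)\tr(B_1B_2^*)$, and both arrive at the same explicit error term --- your $\frac{\Delta_\phi}{\pi}\left(\sin(2\psi)(M_{11}-M_{22})+2\sin^2\psi\,(M_{12}+M_{21})\right)$ agrees exactly with the paper's $c_b\frac{2}{\pi}(1-\rho^2)^{-\frac12}\left(\langle B_1^*\hat x_1,B_2^*(\hat x_2-\rho\hat x_1)\rangle+\langle B_1^*\hat x_2,B_2^*(\hat x_1-\rho\hat x_2)\rangle\right)$ once one expands $\hat x_2=\rho e_1+\sqrt{1-\rho^2}\,e_2$. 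Where you genuinely diverge is in how that error is produced and bounded. The paper works through the conditional Gaussian formulas ($\mu_{v\vert u}=\Sigma_{uv}\Sigma_u^{-1}u$, $\Sigma_{v\vert u}$) and simply reports the needed moments $\E\,\phi'(u_1)\phi'(u_2)(w_i^2-1)$ and $\E\,\phi'(u_1)\phi'(u_2)(w_1w_2+\rho)$ as outputs of symbolic integration, then closes with the regrouping $\langle B_1^*(\hat x_1-\hat x_2),B_2^*(\hat x_2-\hat x_1)\rangle+(1-\rho)(\cdots)$ and $\Vert\hat x_1-\hat x_2\Vert^2=2(1-\rho)$. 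You instead isolate the surviving contribution by a parity argument --- only the $b^2\sgn\sgn$ part of $\phi'(u_1)\phi'(u_2)$ can correlate with the even, mean-zero quadratic $Q$ --- then evaluate the angular integrals as frequency-$2$ Fourier coefficients of the square wave $\sgn(\cos\theta)\sgn(\cos(\theta-\psi))$ and finish with Cauchy--Schwarz. This buys a transparent derivation of the paper's black-box integrals, makes explicit that gradient dependence comes entirely from the nonsmooth part $b\vert\cdot\vert$ of the activation (so $\Delta_\phi=b^2\sigma^2$ is the inevitable prefactor), and in fact yields the marginally sharper constant $\frac{4}{\pi}\sqrt{1-\rho^2}$, which you then correctly relax to the stated $\frac{8}{\pi}\sqrt{\frac{1-\rho}{1+\rho}}$; the bookkeeping steps ($\Vert C_i\Vert\leq\Vert B_i\Vert$, $\Vert C_2\Vert_F\leq\sqrt2\,\Vert B_2\Vert$, and the factor-$2$ from $(M_{11}-M_{22})^2+(M_{12}+M_{21})^2\leq2\Vert C_1C_2^*\Vert_F^2$) all check out.
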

\begin{proof}
Denoting the preactivations $z_{i,j} = m^{\frac{q_{k_2-1}}{2}} \langle {A_{k_2-1}}_j, x_{k_2-1}(x_i, \theta_{1 : k_2-2}) \rangle$ for $i \in [1:2]$ and $j \in [1:m_{k_2-1}]$, we have that $X'_{k_1,k_2}(x_1,x_2,\theta_{1:k_2-1})$ equals
\begin{multline*}
\tr\left( D_{x_{k_2}'(x_1,\theta_{1 : k_2-1})} m^{\frac{q_{k_2-1}}{2}} A_{k_2-1} B_{k_1,k_2-1}(x_1,\theta_{1:k_2-2})
\right. \\ \left. 
B_{k_1,k_2-1}(x_2,\theta_{1:k_2-2})^* m^{\frac{q_{k_2-1}}{2}} A_{k_2-1}^* D_{x_{k_2}'(x_2,\theta_{1 : k_2-1})} \right) \\
= \frac{1}{m_{k_2-1}} \sum_{j=1}^{m_{k_2-1}} \phi'(z_{1,j}) \phi'(z_{2,j}) 
\left\langle B_{k_1,k_2-1}(x_1,\theta_{1:k_2-2})^* m^{\frac{q_{k_2-1}}{2}} {A_{k_2-1}}_j^*,
\right. \\ \left.
B_{k_1,k_2-1}(x_2,\theta_{1:k_2-2})^* m^{\frac{q_{k_2-1}}{2}} {A_{k_2-1}}_j^* \right\rangle.
\end{multline*}
As the terms in the sum above are i.i.d. for $j \in [1:m_{k_2-1}]$, we have
\begin{multline*}
\E_{A_{k_2-1}} X'_{k_1,k_2}(x_1,x_2,\theta_{1:k_2-1}) \\
= \E_{{A_{k_2-1}}_j} \phi'(z_{1,j}) \phi'(z_{2,j}) \left\langle B_{k_1,k_2-1}(x_1,\theta_{1:k_2-2})^* m^{\frac{q_{k_2-1}}{2}} {A_{k_2-1}}_j^*,
\right. \\ \left.
B_{k_1,k_2-1}(x_2,\theta_{1:k_2-2})^* m^{\frac{q_{k_2-1}}{2}} {A_{k_2-1}}_j^* \right\rangle.
\end{multline*}
Since $m^{\frac{q_{k_2-1}}{2}}{A_{k_2-1}}_j \sim \mathcal{N}( 0, \sigma^2 \Id_{m_{k_2-2}})$, we can write the above as
\begin{multline*}
\E_{v \sim \mathcal{N}( 0, \sigma^2 \Id_{m_{k_2-2}} )} \phi'( \langle v, x_{k_2-1}(x_1, \theta_{1 : k_2-2}) \rangle ) \phi'( \langle v, x_{k_2-1}(x_2, \theta_{1 : k_2-2}) \rangle ) \\
\left\langle B_{k_1,k_2-1}(x_1,\theta_{1:k_2-2})^* v,
B_{k_1,k_2-1}(x_2,\theta_{1:k_2-2})^* v \right\rangle.
\end{multline*}
Note that this equals $\E_{[u_1,u_2,v_1,v_2] \sim \mathcal{N}(0,\Sigma)} \phi'(u_1) \phi'(u_2) \langle v_1, v_2 \rangle$ where $u=[u_1,u_2]\in \R^2$, $v=[v_1,v_2] \in \R^{2m_{k_1-1}}$ and the covariance matrix is defined blockwise as $\Sigma = \left[ \begin{smallmatrix} \Sigma_u & \Sigma_{uv}^* \\ \Sigma_{uv} & \Sigma_v \end{smallmatrix} \right] \in \mathbb{S}^{2+2m_{k_1-1}}_+$. The $u$-covariance is 
\begin{equation}\label{eq:mean-Y-sigma-u}
\Sigma_u = \sigma^2 \left[ \begin{smallmatrix} \Vert x_{k_2-1}(x_1, \theta_{1 : k_2-2}) \Vert^2 & \langle x_{k_2-1}(x_1, \theta_{1 : k_2-2}), x_{k_2-1}(x_2, \theta_{1 : k_2-2}) \rangle \\ \langle x_{k_2-1}(x_1, \theta_{1 : k_2-2}), x_{k_2-1}(x_2, \theta_{1 : k_2-2}) \rangle & \Vert x_{k_2-1}(x_2, \theta_{1 : k_2-2}) \Vert^2 \end{smallmatrix} \right] \in \mathbb{S}^2_+,
\end{equation}
the $v$-covariance $\Sigma_v \in \mathbb{S}^{2m_{k_1-1}}_+$ is
\begin{equation}\label{eq:mean-Y-sigma-v}
\sigma^2 \left[ \begin{smallmatrix} B_{k_1,k_2-1}(x_1,\theta_{1:k_2-2})^* B_{k_1,k_2-1}(x_1,\theta_{1:k_2-2}) & B_{k_1,k_2-1}(x_2,\theta_{1:k_2-2})^* B_{k_1,k_2-1}(x_1,\theta_{1:k_2-2}) \\ B_{k_1,k_2-1}(x_1,\theta_{1:k_2-2})^* B_{k_1,k_2-1}(x_2,\theta_{1:k_2-2}) & B_{k_1,k_2-1}(x_2,\theta_{1:k_2-2})^* B_{k_1,k_2-1}(x_2,\theta_{1:k_2-2}) \end{smallmatrix} \right]
\end{equation}
and the cross-covariance $\Sigma_{uv} \in \R^{2m_{k_1-1} \times 2 }$ is
\begin{equation}\label{eq:mean-Y-sigma-uv}
\sigma^2 \left[ \begin{smallmatrix} B_{k_1,k_2-1}(x_1,\theta_{1:k_2-2})^* x_{k_2-1}(x_1, \theta_{1 : k_2-2}) & B_{k_1,k_2-1}(x_2,\theta_{1:k_2-2})^* x_{k_2-1}(x_1, \theta_{1 : k_2-2}) \\ B_{k_1,k_2-1}(x_1,\theta_{1:k_2-2})^* x_{k_2-1}(x_2, \theta_{1 : k_2-2}) & B_{k_1,k_2-1}(x_2,\theta_{1:k_2-2})^* x_{k_2-1}(x_2, \theta_{1 : k_2-2}) \end{smallmatrix} \right].
\end{equation}

Note that our assumption $\rho_{k_2-1}(x_1,x_2,\theta_{1:k_2-2}) \in (-1,1)$ implies that $\Sigma_u$ is invertible. In particular, letting $\tau_1 = \tau_{k_2-1}(x_1, \theta_{1 : k_2-2})$, $\tau_2 = \tau_{k_2-1}(x_2, \theta_{1 : k_2-2})$ and $\rho = \rho_{k_2-1}(x_1,x_2,\theta_{1:k_2-2})$ (so that $\Sigma_u = \sigma^2 \left[ \begin{smallmatrix} \tau_1^2 & \tau_1 \tau_2 \rho \\ \tau_1 \tau_2 \rho & \tau_2^2 \end{smallmatrix} \right]$), we have $\Sigma_u^{-1} = \sigma^{-2} (1-\rho^2)^{-1} \left[ \begin{smallmatrix} \tau_1^{-2} & -\tau_1^{-1} \tau_2^{-1} \rho \\ -\tau_1^{-1} \tau_2^{-1} \rho & \tau_2^{-2} \end{smallmatrix} \right]$. The conditional distribution of $[v_1,v_2]$ given $[u_1,u_2]$ is a normal distribution with mean $\mu_{v \vert u}=\Sigma_{uv} \Sigma_u^{-1} [u_1, u_2]$ and covariance $\Sigma_{v \vert u} = \Sigma_v - \Sigma_{uv} \Sigma_u^{-1} \Sigma_{uv}^*$. Thus we have
\begin{multline*}
\E_{[u_1,u_2,v_1,v_2] \sim \mathcal{N}( 0,\Sigma)} \phi'(u_1) \phi'(u_2) \langle v_1, v_2 \rangle \\
= \E_{[u_1,u_2] \in \mathcal{N}(0,\Sigma_u)}
\phi'(u_1) \phi'(u_2) \E_{[v_1,v_2] \sim \mathcal{N}(\mu_{v \vert u},\Sigma_{v \vert u})}
\langle v_1, v_2 \rangle,
\end{multline*}
where the inner expectation can be computed as
\begin{multline*}
\E_{[v_1,v_2] \sim \mathcal{N}(\mu_{v \vert u}, \Sigma_{v \vert u})}
\langle v_1, v_2 \rangle
 = \E_{[v_1,v_2] \sim \mathcal{N}(\mu_{v \vert u}, \Sigma_{v \vert u})}
\sum_{j=1}^{m_{k_1-1}} {v_1}_j {v_2}_j \\
= \sum_{j=1}^{m_{k_1-1}} \E_{[v_1,v_2] \sim \mathcal{N}(\mu_{v \vert u}, \Sigma_{v \vert u})} {v_1}_j {v_2}_j
= \sum_{j=1}^{m_{k_1-1}} {{\mu_{v \vert u}}_1}_j {{\mu_{v \vert u}}_2}_j + {{\Sigma_{v \vert u}}_{1,2}}_{j,j} \\
= \tr({\Sigma_v}_{1,2}) + (\langle {\mu_{v \vert u}}_1, {\mu_{v \vert u}}_2 \rangle - \tr({\Sigma_{uv} \Sigma_u^{-1} \Sigma_{uv}^*}_{1,2})).
\end{multline*}
The first term gives $\E_{[u_1,u_2] \sim \mathcal{N}( 0,\Sigma_u)} \phi'(u_1)\phi'(u_2) \tr({\Sigma_v}_{1,2})$, which equals
\[
\varrho'(\rho) \tr(B_{k_1,k_2-1}(x_1,\theta_{1:k_2-2}) B_{k_1,k_2-1}(x_2,\theta_{1:k_2-2})^*) = \varrho'(\rho) X'_{k_1,k_2-1}(x_1,x_2,\theta_{1:k_2-2}).
\]

The other two terms result in the gradient dependence error
\[
\epsilon = \E_{[u_1,u_2] \sim \mathcal{N}( 0,\Sigma_u)} \phi'(u_1)\phi'(u_2) \left( \langle {\mu_{v \vert u}}_1, {\mu_{v \vert u}}_2 \rangle - \tr({\Sigma_{uv} \Sigma_u^{-1} \Sigma_{uv}^*}_{1,2})\right).
\]
For brevity, denote $B_i = B_{k_1,k_2-1}(x_i,\theta_{1:k_2-2})$ and $\hat{x}_i = \frac{x_{k_2-1}(x_i, \theta_{1 : k_2-2})}{\Vert x_{k_2-1}(x_i, \theta_{1 : k_2-2}) \Vert}$ for $i \in [1:2]$, $w_1 = \sigma^{-1} (1-\rho^2)^{-\frac{1}{2}} (\tau_1^{-1} u_1 - \rho \tau_2^{-1} u_2)$ and $w_2 = \sigma^{-1} (1-\rho^2)^{-\frac{1}{2}} (\tau_2^{-1} u_2 - \rho \tau_1^{-1} u_1)$. Note that we have $\Sigma_u^{-1} [u_1,u_2] = \sigma^{-1} (1-\rho^2)^{-\frac{1}{2}} [\tau_1^{-1} w_1 , \tau_2^{-1} w_2]$, so that ${\mu_{v \vert u}}_1 = \sigma (1-\rho^2)^{-\frac{1}{2}} (\tau_1^{-1} w_1 B_1^* x_1 + \tau_2^{-1} w_2 B_1^* x_2) = \sigma (1-\rho^2)^{-\frac{1}{2}} (w_1 B_1^* \hat{x}_1 + w_2 B_1^* \hat{x}_2)$ and similarly ${\mu_{v \vert u}}_2 
= \sigma (1-\rho^2)^{-\frac{1}{2}} (w_1 B_2^* \hat{x}_1 + w_2 B_2^* \hat{x}_2)$. Therefore $\langle {\mu_{v \vert u}}_1 , {\mu_{v \vert u}}_2 \rangle$ equals
\[
\sigma^2 (1-\rho^2)^{-1} \left( w_1^2 \langle B_1^* \hat{x}_1, B_2^* \hat{x}_1 \rangle
+ w_2^2 \langle B_1^* \hat{x}_2, B_2^* \hat{x}_2 \rangle
+ w_1 w_2 \langle B_1^* \hat{x}_1, B_2^* \hat{x}_2 \rangle
+ w_1 w_2 \langle B_1^* \hat{x}_2, B_2^* \hat{x}_1 \rangle \right).
\]
We also have
\[
\Sigma_{uv} \Sigma_u^{-1}
= (1-\rho^2)^{-1} \left[ \begin{smallmatrix} 
 \tau_1^{-1} B_1^* \hat{x}_1 - \tau_1^{-1} \rho B_1^* \hat{x}_2 & 
 -\tau_2^{-1} \rho B_1^* \hat{x}_1 + \tau_2^{-1} B_1^* \hat{x}_2 \\ 
 \tau_1^{-1} B_2^* \hat{x}_1 - \tau_1^{-1} \rho B_2^* \hat{x}_2 & 
 -\tau_2^{-1} \rho B_2^* \hat{x}_1 + \tau_2^{-1} B_2^* \hat{x}_2
\end{smallmatrix} \right]
\]
so that ${\Sigma_{uv} \Sigma_u^{-1} \Sigma_{uv}^*}_{1,2}$ equals
\[
\sigma^2 (1-\rho^2)^{-1} \left( 
B_2^* \hat{x}_1 \otimes B_1^* \hat{x}_1
- \rho B_2^* \hat{x}_2 \otimes B_1^* \hat{x}_1
- \rho B_2^* \hat{x}_1 \otimes B_1^* \hat{x}_2
+ B_2^* \hat{x}_2 \otimes B_1^* \hat{x}_2
\right)
\]
and therefore $\tr({\Sigma_{uv} \Sigma_u^{-1} \Sigma_{uv}^*}_{1,2})$ equals
\[
\sigma^2 (1-\rho^2)^{-1} \left( 
\langle B_1^* \hat{x}_1, B_2^* \hat{x}_1 \rangle
+ \langle B_1^* \hat{x}_2, B_2^* \hat{x}_2 \rangle
- \rho \langle B_1^* \hat{x}_1, B_2^* \hat{x}_2 \rangle
- \rho \langle B_1^* \hat{x}_2, B_2^* \hat{x}_1 \rangle
\right).
\]
Hence we have that the gradient dependence term $\epsilon$ equals
\begin{multline*}
\sigma^2 (1-\rho^2)^{-1} \E_{[u_1,u_2] \sim \mathcal{N}(0,\Sigma_u)} \phi'(u_1) \phi'(u_2) 
\left( (w_1^2 - 1) \langle B_1^* \hat{x}_1, B_2^* \hat{x}_1 \rangle
+ (w_2^2 - 1) \langle B_1^* \hat{x}_2, B_2^* \hat{x}_2 \rangle 
\right. \\ \left. \vphantom{w_1^2}
+ (w_1 w_2 + \rho) \langle B_1^* \hat{x}_1, B_2^* \hat{x}_2 \rangle
+ (w_1 w_2 + \rho) \langle B_1^* \hat{x}_2, B_2^* \hat{x}_1 \rangle
\right).
\end{multline*}
Symbolic integration gives
\[
\E_{[u_1,u_2] \sim \mathcal{N}(0,\Sigma_u)} \phi'(u_1) \phi'(u_2) (w_1^2 - 1) = \E_{[u_1,u_2] \sim \mathcal{N}(0,\Sigma_u)} \phi'(u_1) \phi'(u_2) (w_2^2 - 1) = -\rho b^2 \frac{2}{\pi} \sqrt{1-\rho^2}
\]
and $\E_{[u_1,u_2] \sim \mathcal{N}(0,\Sigma_u)} \phi'(u_1) \phi'(u_2) (w_1 w_2 + \rho) = b^2 \frac{2}{\pi} \sqrt{1-\rho^2}$, so that
\[
\epsilon 
= c_b \frac{2}{\pi} (1-\rho^2)^{-\frac{1}{2}} \left( \langle B_1^* \hat{x}_1, B_2^* (\hat{x}_2 - \rho \hat{x}_1) \rangle + \langle B_1^* \hat{x}_2, B_2^* (\hat{x}_1 - \rho \hat{x}_2) \rangle \right).
\]
Noting that $\langle B_1^* \hat{x}_1, B_2^* (\hat{x}_2 - \rho \hat{x}_1) \rangle + \langle B_1^* \hat{x}_2, B_2^* (\hat{x}_1 - \rho \hat{x}_2) \rangle = \langle B_1^* (\hat{x}_1 - \hat{x}_2), B_2^* (\hat{x}_2 - \hat{x}_1) \rangle + (1-\rho) (\langle B_1^* \hat{x}_1, B_2^* \hat{x}_1 \rangle + \langle B_1^* \hat{x}_2, B_2^* \hat{x}_2 \rangle)$ and $\Vert \hat{x}_1 - \hat{x}_2 \Vert = \sqrt{2(1-\rho)}$, we get the bound
\[
\epsilon \leq c_b \frac{2}{\pi} (1-\rho^2)^{-\frac{1}{2}} \Vert B_1 B_2^* \Vert (\Vert \hat{x}_1 - \hat{x}_2 \Vert^2 + (1-\rho) (\Vert \hat{x}_1 \Vert^2 + \Vert \hat{x}_2 \Vert^2))
= c_b \frac{8}{\pi} \sqrt{\frac{1-\rho}{1+\rho}} \Vert B_1 B_2^* \Vert,
\]
giving the claim.
\end{proof}

\begin{proposition}[Concentration of $X'_{k,k}(x_1,x_2,\theta_{1:k-1})$]\label{prop:bwd_concentration_1}~\\
Given $x_1,x_2 \in \R^{m_0}$ and $k \in [2:l]$, for all $t \geq 0$ we have
\[
\Prob_{1:l-1}\left( \gamma_{k-1}^{\frac{1}{2}} \left\vert X'_{k,k}(x_1,x_2,\theta_{1:k-1}) - \varrho'(\rho_{k-1}(x_1,x_2,\theta_{1:k-2})) \right\vert \geq t \right) 
\leq 2e^{-\frac{t^2}{O\left( \kappa_\phi^2 m^{-\frac{1}{2}} \right)^2 + O\left( \kappa_\phi^2 m^{-1} \right) t}}.
\]
\end{proposition}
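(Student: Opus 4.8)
The plan is to recognize $X'_{k,k}(x_1,x_2,\theta_{1:k-1})$ as an average of i.i.d.\ \emph{bounded} random variables and apply a standard sub-gaussian (Hoeffding-type) concentration inequality, then upgrade the conditional bound to an unconditional one via Fubini--Tonelli exactly as in the proofs of Propositions~\ref{prop:backprop_norm_concentration} and \ref{prop:cosine_distance_concentration}.

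First I would recall, as in the proof of the expectation of $X'_{k,k}$ above, that for $\theta_{1:k-2} \in \Theta_{1:k-2}$ fixed and $A_{k-1}$ random, with the normalized preactivations $z_{i,j} = m^{\frac{q_{k-1}}{2}} \langle {A_{k-1}}_j, \frac{x_{k-1}(x_i,\theta_{1:k-2})}{\Vert x_{k-1}(x_i,\theta_{1:k-2}) \Vert} \rangle$ for $i \in [1:2]$ and $j \in [1:m_{k-1}]$, one has
\[
X'_{k,k}(x_1,x_2,\theta_{1:k-1}) = \frac{1}{m_{k-1}} \sum_{j=1}^{m_{k-1}} \sigma^2 \phi'(z_{1,j}) \phi'(z_{2,j}),
\]
where the summands are i.i.d.\ over $j$ with common expectation $\varrho'(\rho_{k-1}(x_1,x_2,\theta_{1:k-2}))$. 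The key observation is that $\vert \phi'(s) \vert \leq \vert a \vert + \vert b \vert$ for all $s \in \R$, so each summand satisfies $\vert \sigma^2 \phi'(z_{1,j}) \phi'(z_{2,j}) \vert \leq \sigma^2 (\vert a \vert + \vert b \vert)^2 = \kappa_\phi^2$ almost surely; being bounded, it is $O(\kappa_\phi^2)$-sub-gaussian by \citet[Example~2.5.8]{Vershynin2018}, and so is its centering by \citet[Lemma~2.6.8]{Vershynin2018}.

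Next I would apply the sub-gaussian concentration of sums \citep[Theorem~2.6.2]{Vershynin2018} with coefficients $\frac{1}{m_{k-1}}$ to obtain, for all $s \geq 0$,
\[
\Prob_{k-1}\left( \left\vert X'_{k,k}(x_1,x_2,\theta_{1:k-1}) - \varrho'(\rho_{k-1}(x_1,x_2,\theta_{1:k-2})) \right\vert \geq s \right) \leq 2 e^{-\frac{s^2 m_{k-1}}{O(\kappa_\phi^2)^2}}.
\]
Substituting $s = t \gamma_{k-1}^{-\frac{1}{2}}$ and using $m_{k-1} = \gamma_{k-1} m$ turns the exponent into $-\frac{t^2 m}{O(\kappa_\phi^2)^2} = -\frac{t^2}{O(\kappa_\phi^2 m^{-\frac{1}{2}})^2}$, and since enlarging the denominator by the nonnegative quantity $O(\kappa_\phi^2 m^{-1}) t$ only weakens the bound, this is at most the claimed $2 e^{-t^2 / (O(\kappa_\phi^2 m^{-\frac{1}{2}})^2 + O(\kappa_\phi^2 m^{-1}) t)}$. (Stating the result in this weaker Bernstein-type form keeps it formally uniform with the bounds for $X'_{k_1,k_2}$ with $k_1 < k_2$, where boundedness of the summands fails.)

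Finally, since the displayed bound holds for every fixed $\theta_{1:k-2} \in \Theta_{1:k-2}$ and every $\theta_{k:l-1} \in \Theta_{k:l-1}$, I would integrate the indicator of the event over the remaining coordinates and invoke the Fubini--Tonelli theorem, exactly as at the end of the proofs of Propositions~\ref{prop:backprop_norm_concentration} and \ref{prop:cosine_distance_concentration}, to replace $\Prob_{k-1}$ by $\Prob_{1:l-1}$. There is no serious obstacle here; the only point requiring attention is the observation that $\phi'$ is bounded, which is what makes the summands bounded, hence sub-gaussian, and yields the correct $m^{-\frac{1}{2}}$-scaling after normalization by $\gamma_{k-1}^{\frac{1}{2}}$.
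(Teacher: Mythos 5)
Your proof is correct, and it takes a genuinely different (and in fact slightly stronger) route than the paper's. The paper treats the centered summands $\sigma^2\phi'(z_{1,j})\phi'(z_{2,j}) - \E[\cdot]$ as sub-exponential, obtained from the product of two sub-gaussian variables via \citet[Lemma~2.7.7]{Vershynin2018} and centering, and then applies Bernstein's inequality \citep[Corollary~2.8.3]{Vershynin2018}, which naturally produces the mixed $\min\{t^2,t\}$ tail that is then rewritten in the stated form. You instead observe that $\vert\phi'(s)\vert \le \vert a\vert + \vert b\vert$ for all $s$, so the summands are bounded by $\kappa_\phi^2$ almost surely, hence sub-gaussian, and a Hoeffding-type inequality yields a purely sub-gaussian tail $2e^{-t^2/O(\kappa_\phi^2 m^{-1/2})^2}$; this is strictly stronger than the claimed bound and implies it by monotonicity of the exponent in the denominator, as you note. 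Your observation that the Bernstein form is only needed for uniformity with the off-diagonal case $k_1<k_2$ (where the summands involve $\Vert B_i^* A_j\Vert$ factors and are genuinely unbounded) is exactly right. The final Fubini--Tonelli step to pass from $\Prob_{k-1}$ to $\Prob_{1:l-1}$ matches the paper. The only cosmetic point is the citation: boundedness implies sub-gaussianity is \citet[Example~2.5.8(iv)]{Vershynin2018} rather than (i), but this does not affect the argument.
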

\begin{proof}
First, consider $\theta_{1:k-2} \in \Theta_{1:k-2}$ and $\theta_{k:l-1} \in \Theta_{k:l-1}$ fixed and $A_{k-1} \in \Theta_{k-1}$ random. Denoting the normalized preactivations $z_{i,j} = m^{\frac{q_{k-1}}{2}} \langle {A_{k-1}}_j, \frac{x_{k-1}(x_i, \theta_{1 : k-2})}{\Vert x_{k-1}(x_i, \theta_{1 : k-2}) \Vert} \rangle$ for $i \in [1:2]$ and $j \in [1:m_{k-1}]$, we have
\[
X'_{k,k}(x_1,x_2,\theta_{1:k-1})
= \sigma^2 \frac{1}{m_{k-1}} \sum_{j=1}^{m_{k-1}} \phi'(z_{1,j}) \phi'(z_{2,j}).
\]
As $\Vert \phi'(z_{i,j}) \Vert_{\psi_2} \leq O(\vert a \vert + \vert b \vert)$ for all $i \in [1:2]$ and $j \in [1:m_{k-1}]$ by \citet[Example~2.5.8(iii)]{Vershynin2018}, via \citet[Lemma~2.7.7]{Vershynin2018} and \citet[Exercise~2.7.10]{Vershynin2018} we get that $\Vert \sigma^2 \phi'(z_{1,j}) \phi'(z_{2,j}) - \sigma^2 \E_{{A_{k-1}}_j} \phi'(z_{1,j}) \phi'(z_{2,j}) \Vert_{\psi_1} \leq O(\kappa_\phi^2)$. The expectation equals $\sigma^2 \E_{{A_{k-1}}_j} \phi'(z_{1,j}) \phi'(z_{2,j}) = \varrho'(\rho_{k-1}(x_1,x_2,\theta_{1:k-2}))$. By \citet[Corollary~2.8.3]{Vershynin2018} we then have for all $t \geq 0$ that
\[
\Prob_{k-1}\left( \gamma_{k-1}^{\frac{1}{2}} \left\vert X'_{k,k}(x_1,x_2,\theta_{1:k-1}) - \varrho'(\rho) \right\vert \geq t \right)
\]
is at most
\[
2e^{-\min\left\{ \frac{t^2}{O(\gamma_{k-1}^{\frac{1}{2}} \kappa_\phi^2)^2}, \frac{t}{O(\gamma_{k-1}^{\frac{1}{2}} \kappa_\phi^2)} \right\} m_{k-1}} 
\leq 2e^{-\frac{t^2}{O(\kappa_\phi^2)^2 m^{-1} + O(\kappa_\phi^2) m^{-1} t}}.
\]
As this holds for all $\theta_{1:k-2} \in \Theta_{1:k-2}$ and $\theta_{k:l-1} \in \Theta_{k:l-1}$, by the Fubini-Tonelli theorem the above bound still holds with $\Prob_{k-1}$ replaced by $\Prob_{1:l-1}$.
\end{proof}

\begin{proposition}[Concentration of $X'_{k_1,k_2}(x_1,x_2,\theta_{1:k_2-1})$]\label{prop:bwd_concentration_2}~\\
Given $k_1 < k_2 \in [2:l]$ and $x_1,x_2 \in \R^{m_0}$, for all $t \geq 0$ we have that
\[
\Prob_{1:l-1}\left( \frac{\gamma_{k_2-1}^{\frac{1}{2}} \left\vert X'_{k_1,k_2}(x_1,x_2,\theta_{1:k_2-1}) - \E_{A_{k_2-1}} X'_{k_1,k_2}(x_1,x_2,\theta_{1:k_2-1}) \right\vert}{\sqrt{X'_{k_1,k_2-1}(x_1,x_1,\theta_{1:k_2-2})} \sqrt{X'_{k_1,k_2-1}(x_2,x_2,\theta_{1:k_2-2})}} \geq t \right)
\]
is at most $2e^{-\frac{t^2}{O\left( \kappa_\phi^2 m^{-\frac{1}{2}} \right)^2 + O\left( \kappa_\phi^2 m^{-1} \right) t}}$.
\end{proposition}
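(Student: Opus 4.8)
The plan is to follow the template of Proposition~\ref{prop:bwd_concentration_1}, the only difference being that the summands are now sub-exponential rather than bounded because of an extra quadratic factor. First I would condition on $\theta_{1:k_2-2} \in \Theta_{1:k_2-2}$ and $\theta_{k_2:l-1} \in \Theta_{k_2:l-1}$ and treat $A_{k_2-1}$ as random. Using the i.i.d.\ row decomposition already derived in the proof of Proposition~\ref{prop:bwd_expectation_offdiagonal}, I would write, with $a_j = m^{\frac{q_{k_2-1}}{2}} {A_{k_2-1}}_j \sim \mathcal{N}(0,\sigma^2 \Id_{m_{k_2-2}})$ i.i.d., $B_i = B_{k_1,k_2-1}(x_i,\theta_{1:k_2-2})$ and $z_{i,j} = \langle a_j, x_{k_2-1}(x_i,\theta_{1:k_2-2})\rangle$, that $X'_{k_1,k_2}(x_1,x_2,\theta_{1:k_2-1}) = \frac{1}{m_{k_2-1}} \sum_{j=1}^{m_{k_2-1}} W_j$ with $W_j = \phi'(z_{1,j})\phi'(z_{2,j}) \langle B_1^* a_j, B_2^* a_j\rangle$, so that $X'_{k_1,k_2} - \E_{A_{k_2-1}} X'_{k_1,k_2} = \frac{1}{m_{k_2-1}} \sum_j (W_j - \E_{A_{k_2-1}} W_j)$.

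The key estimate is the sub-exponential norm of $W_j$. Since $\vert \phi'\vert \leq \vert a\vert + \vert b\vert$ pointwise, $\vert W_j\vert \leq (\vert a\vert+\vert b\vert)^2 \vert\langle B_1^* a_j, B_2^* a_j\rangle\vert$, hence $\Vert W_j\Vert_{\psi_1} \leq (\vert a\vert+\vert b\vert)^2 \Vert \langle B_1^* a_j, B_2^* a_j\rangle\Vert_{\psi_1}$; the correlation between the $\phi'$ factors and the quadratic factor is irrelevant because the former are bounded a.s. To bound the remaining factor I would use Cauchy--Schwarz, $\vert\langle B_1^* a_j, B_2^* a_j\rangle\vert \leq \Vert B_1^* a_j\Vert\,\Vert B_2^* a_j\Vert$, and observe that $a \mapsto \Vert B_i^* a\Vert$ is a $\sigma\Vert B_i\Vert$-Lipschitz function of a standard Gaussian with mean at most $\sqrt{\E\Vert B_i^* a_j\Vert^2} = \sigma\Vert B_i\Vert_F$, so that $\Vert\,\Vert B_i^* a_j\Vert\,\Vert_{\psi_2} \leq O(\sigma\Vert B_i\Vert_F)$ (using $\Vert B_i\Vert \leq \Vert B_i\Vert_F$); by \citet[Lemma~2.7.7]{Vershynin2018} the product is $O(\sigma^2 \Vert B_1\Vert_F \Vert B_2\Vert_F)$-sub-exponential, giving $\Vert W_j\Vert_{\psi_1} \leq O((\vert a\vert+\vert b\vert)^2 \sigma^2 \Vert B_1\Vert_F \Vert B_2\Vert_F) = O(\kappa_\phi^2 \Vert B_1\Vert_F \Vert B_2\Vert_F)$, preserved under centering by \citet[Exercise~2.7.10]{Vershynin2018}. (An equivalent route is to note $\langle B_1^* a_j, B_2^* a_j\rangle = \sigma^2 g^*(B_1 B_2^*) g$ for $g \sim \mathcal{N}(0,\Id)$, with $\Vert B_1 B_2^*\Vert_F \leq \Vert B_1\Vert_F \Vert B_2\Vert_F$ and $\vert\tr(B_1 B_2^*)\vert \leq \Vert B_1\Vert_F \Vert B_2\Vert_F$, and apply the Hanson--Wright inequality \citet[Theorem~6.2.1]{Vershynin2018} to the symmetrization of $B_1 B_2^*$.)

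With this in hand, Bernstein's inequality \citet[Corollary~2.8.3]{Vershynin2018} applied to the i.i.d.\ mean-zero summands $W_j - \E_{A_{k_2-1}} W_j$ yields, for all $s \geq 0$, that $\Prob_{k_2-1}(\vert X'_{k_1,k_2} - \E_{A_{k_2-1}} X'_{k_1,k_2}\vert \geq s)$ is at most $2\exp(-c\min\{\frac{s^2}{K^2}, \frac{s}{K}\} m_{k_2-1})$ with $K = O(\kappa_\phi^2 \Vert B_1\Vert_F \Vert B_2\Vert_F)$. Recalling $\Vert B_i\Vert_F^2 = X'_{k_1,k_2-1}(x_i,x_i,\theta_{1:k_2-2})$, I would then set $s = t\, \gamma_{k_2-1}^{-\frac{1}{2}} \Vert B_1\Vert_F \Vert B_2\Vert_F$; using $m_{k_2-1} = \gamma_{k_2-1} m$ and $\gamma_{k_2-1} \geq 1$, the exponent becomes at least $c\min\{t^2 m \kappa_\phi^{-4}, t m \kappa_\phi^{-2}\}$, which is exactly $\frac{t^2}{O(\kappa_\phi^2 m^{-1/2})^2 + O(\kappa_\phi^2 m^{-1}) t}$. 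Finally, since the bound is uniform over $\theta_{1:k_2-2}$ and $\theta_{k_2:l-1}$, Fubini--Tonelli upgrades $\Prob_{k_2-1}$ to $\Prob_{1:l-1}$ exactly as at the end of Propositions~\ref{prop:backprop_norm_concentration} and \ref{prop:bwd_concentration_1}. The only nontrivial point is the second step: one must ensure the sub-exponential scale comes out as the product of \emph{Frobenius} norms $\Vert B_1\Vert_F \Vert B_2\Vert_F$ matching the denominator in the statement, rather than operator norms, which is precisely what the identity $\E\Vert B_i^* a_j\Vert^2 = \sigma^2\Vert B_i\Vert_F^2$ (respectively the Frobenius bounds on $\Vert B_1 B_2^*\Vert_F$ and $\vert\tr(B_1 B_2^*)\vert$) delivers; the rest is bookkeeping of constants and a Bernstein application identical in spirit to Proposition~\ref{prop:bwd_concentration_1}.
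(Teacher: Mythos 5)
Your proposal is correct and follows essentially the same route as the paper: the same i.i.d.\ row decomposition, the same pointwise bound $\vert W_j \vert \leq (\vert a \vert + \vert b \vert)^2 \Vert B_1^* a_j \Vert \Vert B_2^* a_j \Vert$, the same sub-gaussian bound $\Vert \, \Vert B_i^* a_j \Vert \, \Vert_{\psi_2} \leq O(\sigma \Vert B_i \Vert_F)$ (the paper cites \citet[Theorem~6.3.2]{Vershynin2018}, which is exactly the Gaussian--Lipschitz concentration you invoke), the product-to-sub-exponential step via \citet[Lemma~2.7.7]{Vershynin2018} and centering via \citet[Exercise~2.7.10]{Vershynin2018}, Bernstein with the identical rescaling, and the Fubini--Tonelli upgrade. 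The Hanson--Wright alternative you sketch is a valid but unnecessary detour; no gaps.
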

\begin{proof}
First, consider $\theta_{1:k_2-2} \in \Theta_{1:k_2-2}$ and $\theta_{k_2:l-1} \in \Theta_{k_2:l-1}$ fixed and $A_{k_2-1} \in \Theta_{k_2-1}$ random. Denoting the preactivations $z_{i,j} = m^{\frac{q_{k_2-1}}{2}} \langle {A_{k_2-1}}_j, x_{k_2-1}(x_i, \theta_{1 : k_2-2}) \rangle$ for $i \in [1:2]$ and $j \in [1:m_{k_2-1}]$, note that
\[
X'_{k_1,k_2}(x_1,x_2,\theta_{1:k_2-1})
= \frac{1}{m_{k_2-1}} \sum_{j=1}^{m_{k_2-1}} \phi'(z_{1,j}) \phi'(z_{2,j})
\left\langle B_1^* m^{\frac{q_{k_2-1}}{2}} {A_{k_2-1}}_j, B_2^* m^{\frac{q_{k_2-1}}{2}} {A_{k_2-1}}_j \right\rangle
\]
with $B_i = B_{k_1,k_2-1}(x_i,\theta_{1:k_2-2})$ for $i \in [1:2]$. The absolute value of each summand is bounded by 
\[
(\vert a \vert + \vert b \vert)^2 \left\Vert B_1^* m^{\frac{q_{k_2-1}}{2}} {A_{k_2-1}}_j \right\Vert \left\Vert B_2^* m^{\frac{q_{k_2-1}}{2}} {A_{k_2-1}}_j \right\Vert.
\]
The random vector $\sigma^{-1} m^{\frac{q_{k_2-1}}{2}} {A_{k_2-1}}_j$ has multivariate normal distribution with mean $0$ and covariance $\Id_{m_{k_2-2}}$. By \cite[Theorem~6.3.2]{Vershynin2018}, for $i \in [1:2]$ the random variable $\Vert B_i^* m^{\frac{q_{k_2-1}}{2}} {A_{k_2-1}}_j \Vert - \sigma \Vert B_i \Vert_F$ is $O(\sigma \Vert B_i \Vert)$-sub-gaussian. Thus, by the triangle inequality for the sub-gaussian norm, we have that $\Vert \Vert B_i^* m^{\frac{q_{k_2-1}}{2}} {A_{k_2-1}}_j \Vert \Vert_{\psi_2} \leq O( \sigma \Vert B_i \Vert_F )$, so that
\[
\left\Vert \phi'(z_{1,j}) \phi'(z_{2,j})
\left\langle B_1^* m^{\frac{q_{k_2-1}}{2}} {A_{k_2-1}}_j, B_2^* m^{\frac{q_{k_2-1}}{2}} {A_{k_2-1}}_j \right\rangle
- \E_{A_{k_2-1}} X'_{k_1,k_2}(x_1,x_2,\theta_{1:k_2-1}) \right\Vert_{\psi_1}
\]
is bounded by $O(\kappa^2 \Vert B_1 \Vert_F \Vert B_2 \Vert_F)$ via \citet[Lemma~2.7.7]{Vershynin2018} and \citet[Exercise~2.7.10]{Vershynin2018}. Scaling by $\gamma_{k_2-1}^{\frac{1}{2}} \Vert B_1 \Vert_F^{-1} \Vert B_2 \Vert_F^{-1}$, via \citet[Corollary~2.8.3]{Vershynin2018} we then have
\begin{multline*}
\Prob_{k_2-1}\left( \gamma_{k_2-1}^{\frac{1}{2}} \Vert B_1 \Vert_F^{-1} \Vert B_2 \Vert_F^{-1} \left\vert X'_{k_1,k_2}(x_1,x_2,\theta_{1:k_2-1}) 
- \E_{A_{k_2-1}} X'_{k_1,k_2}(x_1,x_2,\theta_{1:k_2-1}) \right\vert \geq t  \right) \\
\leq 2e^{-\min\left\{ \frac{t^2}{O(\gamma_{k_2-1}^{\frac{1}{2}} \kappa_\phi^2)^2}, \frac{t}{O(\gamma_{k_2-1}^{\frac{1}{2}} \kappa_\phi^2)} \right\} m_{k_2-1}} 
\leq 2e^{-\frac{t^2}{O(\kappa_\phi^2)^2 m^{-1} + O(\kappa_\phi^2) m^{-1} t}}
\end{multline*}
for all $t \geq 0$. As this holds for all $\theta_{1:k_2-2} \in \Theta_{1:k_2-2}$ and $\theta_{k_2:l-1} \in \Theta_{k_2:l-1}$, by the Fubini-Tonelli theorem the above bound still holds with $\Prob_{k_2-1}$ replaced by $\Prob_{1:l-1}$. We get the claim as $\Vert B_i \Vert_F = \sqrt{X'_{k_1,k_2-1}(x_i,x_i,\theta_{1:k_2-2})}$ for $i \in [1:2]$.
\end{proof}

\subsection{Limiting Concentration of the NTK}\label{limiting}

Building on the results of \S~\ref{layerwise} and \citet{mlpsateoc1}, we are going to prove that the NTK matrix concentrates around its infinitely wide limit, both defined below.

\begin{definition}[Neural Tangent Kernel matrix]\label{def:ntk_matrix}~\\
Given the MLP $N : \R^{m_0} \times \Theta \to \R^{m_l}$ defined in \S~\ref{mlp}, a parameter $\theta \in \Theta$ and a dataset $\{x_1,\ldots,x_n\}\subset \R^{m_0}$ of size $n \in \N+1$, the corresponding NTK matrix $K(\theta) \in \mathbb{S}^{n m_l}_+$ is defined blockwise as 
\[
K(\theta) = \left[ \frac{1}{n} K_\theta(x_{i_1},x_{i_2}) : i_1,i_2 \in [1:n] \right].
\]
\end{definition}
Note that with the block matrix of pointwise Jacobians $J(\theta) = [ \frac{1}{\sqrt{n}} \partial_\theta N(x_i,\theta) : i \in [1:n]] \in \Ell(\Theta, \R^{n m_l})$, we can write the NTK matrix as $K(\theta) = J(\theta) J(\theta)^*$.

\begin{remark}[Normalization factor]~\\
Note that there is a factor $\frac{1}{n}$ in the formula above. This is absent in most formulations but appears naturally when we consider the NTK matrix as an integral operator $K(\theta) \in \Ell(L^2(\mu,\R^{m_l}),L^2(\mu,\R^{m_l}))$ induced by the NTK with respect to the dataset considered as a probability measure $\mu = \frac{1}{n} \sum_{i=1}^n \delta_{x_i}$ (so that $L^2(\mu,\R^{m_l}) \cong \R^{n m_l}$).
\end{remark}

The limiting NTK for MLPs with $(a,b)$-ReLUs at the EOC takes the following form by \citet[Proposition~10]{mlpsateoc1}.

\begin{definition}[Limiting NTK]~\\
Define $\limiting{K} : \R^{m_0} \times \R^{m_0} \to \R^{m_l \times m_l}$ for all $x_1,x_2 \in \R^{m_0}$ as
\[
\limiting{K}(x_1,x_2) = \Vert x_1 \Vert \Vert x_2 \Vert \left( \sum_{k=1}^l \varrho^{\circ (k-1)}\left( \rho_1(x_1,x_2) \right) \prod_{k'=k}^{l-1} \varrho'\left( \varrho^{\circ (k'-1)}\left( \rho_1(x_1,x_2) \right) \right) \right) \Id_{m_l}.
\]
\end{definition}

\begin{definition}[Limiting NTK matrix]~\\
Given a dataset $\{x_1,\ldots,x_n\} \subset \R^{m_0}$ of size $n \in \N+1$, the corresponding limiting NTK matrix $\limiting{K} \in \mathbb{S}^{n m_l}_+$ is defined blockwise as 
\[
\limiting{K} = \left[ \frac{1}{n} \limiting{K}(x_{i_1},x_{i_2}) : i_1,i_2 \in [1:n] \right].
\]
\end{definition}

We are going to show that the components considered in \S~\ref{layerwise} corresponding to different layers and data points concentrate simultaneously, starting with the norms of the activations.

\begin{proposition}[Limiting concentration of norms of activations]\label{prop:limiting_norm_concentration}~\\
Given $p \in (0,1)$, a dataset $\{x_1,\ldots,x_n\} \subset \R^{m_0}$ of size $n \in \N+1$ and setting \eqref{eq:optimal_qs} and \eqref{eq:optimal_gammas}, for the event $E_1 \in \mathcal{B}(\Theta_{1:l-1})$ defined by having $\theta_{1:l-1} \in E_1$ iff
\[
\vert \Vert x_k(x_i, \theta_{1 : k-1}) \Vert - \Vert x_i \Vert \vert \leq \log(k) \Vert x_i \Vert O\left( \sqrt{\log(ln)} \kappa_\phi^2 m^{-\frac{1}{2}(1-p)} \right)
\]
for all $i \in [1:n]$ and $k \in [1:l]$, we have the bound $\Prob_{1:l-1}(E_1) \geq 1 - 2e^{-m^p}$.
\end{proposition}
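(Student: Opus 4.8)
The plan is to combine the single-point, single-layer estimate of Proposition~\ref{prop:norm_concentration} with a union bound over the $n$ data points and the $\le l$ layers, and then to telescope the consecutive activation-norm ratios up the network. With \eqref{eq:optimal_qs} and \eqref{eq:optimal_gammas} in force we have $\gamma_{k-1}^{1/2} = k-1$, so Proposition~\ref{prop:norm_concentration} reads, for each fixed $i \in [1:n]$, $k \in [2:l]$ and $t \ge 0$,
\[
\Prob_{1:l-1}\!\left( \left\vert \frac{\Vert x_k(x_i,\theta_{1:k-1}) \Vert}{\Vert x_{k-1}(x_i,\theta_{1:k-2}) \Vert} - 1 \right\vert \ge \frac{t}{k-1} \right) \le 2 e^{-\frac{t^2}{O(\kappa_\phi^2 m^{-1/2})^2}}.
\]
Taking $t = O(\kappa_\phi^2 m^{-1/2})\sqrt{\log(ln) + m^p}$ bounds the right-hand side by $\tfrac{2}{ln}e^{-m^p}$, and since there are at most $n(l-1) \le ln$ such events, a union bound gives, with probability at least $1 - 2e^{-m^p}$, that every ratio is simultaneously within $\tfrac{1}{k-1}\,O(\kappa_\phi^2 m^{-1/2})\sqrt{\log(ln)+m^p}$ of $1$. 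Writing $\varepsilon = O(\sqrt{\log(ln)}\,\kappa_\phi^2 m^{-\frac12(1-p)})$ and using $\sqrt{\log(ln)+m^p} = O(\sqrt{\log(ln)}\,m^{p/2})$ for $m$ large, this says that on a good event $E_1$ of probability $\ge 1-2e^{-m^p}$ each ratio deviates from $1$ by at most $\tfrac{\varepsilon}{k-1}$.

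On $E_1$ we use the deterministic telescoping identity $\Vert x_k(x_i,\theta_{1:k-1}) \Vert = \Vert x_i \Vert \prod_{j=2}^{k} \tfrac{\Vert x_j(x_i,\theta_{1:j-1}) \Vert}{\Vert x_{j-1}(x_i,\theta_{1:j-2}) \Vert}$ (valid since $x_1(x_i) = x_i$, with the factors well defined a.s.; the degenerate cases $x_i=0$ or a vanishing intermediate norm make both sides of the claimed inequality zero). This gives
\[
\left\vert \frac{\Vert x_k(x_i,\theta_{1:k-1}) \Vert}{\Vert x_i \Vert} - 1 \right\vert \le \prod_{j=2}^{k}\left(1 + \frac{\varepsilon}{j-1}\right) - 1 \le \exp\!\left( \varepsilon \sum_{j=1}^{k-1} \frac{1}{j} \right) - 1 \le \exp(\varepsilon H_{k-1}) - 1,
\]
where $H_{k-1} = \sum_{j=1}^{k-1} j^{-1} = O(\log k)$ for $k \ge 2$ and the $k=1$ case is trivial. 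Since $\varepsilon \to 0$ as $m \to \infty$, for $m$ large we have $\varepsilon H_{k-1} \le \varepsilon\,O(\log l) \le 1$, hence $\exp(\varepsilon H_{k-1}) - 1 \le 2\varepsilon H_{k-1} = O(\log(k))\varepsilon = O\!\left(\log(k)\sqrt{\log(ln)}\,\kappa_\phi^2 m^{-\frac12(1-p)}\right)$. Multiplying by $\Vert x_i \Vert$ yields exactly the bound in the definition of $E_1$.

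Most of this is routine; the one point that matters is that the per-layer multiplicative errors accumulate to only $O(\log k)$ across depth rather than $O(k)$ --- this is precisely what the quadratic width pattern $\gamma_{k-1}=(k-1)^2$ buys, since the resulting factor $k-1$ in the denominator turns the depthwise sum $\sum_{j} (j-1)^{-1}$ into a harmonic (hence logarithmic) sum. A secondary bookkeeping point is to reconcile the $e^{-m^p}$ confidence with the $m^{-\frac12(1-p)}$ accuracy, which is what fixes the scale $t \sim m^{-1/2}\sqrt{m^p}$ in the union bound and makes $p$ enter symmetrically.
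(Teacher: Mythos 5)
Your proof is correct and follows essentially the same route as the paper: a simultaneous (maximal/union) bound over all $n(l-1)$ data-point–layer pairs at scale $t \sim \sqrt{\log(ln)}\,\kappa_\phi^2 m^{-\frac{1}{2}}\,m^{p/2}$, followed by telescoping the consecutive norm ratios so that the per-layer errors $\varepsilon/(k-1)$ accumulate into a harmonic, hence $O(\log k)$, sum. The only cosmetic difference is that the paper invokes the sub-Gaussian maximal inequality of van der Vaart--Wellner where you carry out the union bound by hand, and you are somewhat more explicit about the product-to-$\exp(\varepsilon H_{k-1})$ step that the paper compresses into ``applying the bound inductively.''
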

\begin{proof}
Combining \citet[Lemma~2.2.2]{Vandervaartetal2023} with Proposition~\ref{prop:norm_concentration}, for all $t \geq 0$ we have 
\[
\Prob_{1:l-1}\left( \max_{k \in [2:l], i \in [1:n]}\left\{ (k-1) \left\vert \frac{\Vert x_k(x_i, \theta_{1 : k-1}) \Vert}{\Vert x_{k-1}(x_i, \theta_{1 : k-2}) \Vert} - 1 \right\vert \right\} \geq t \right)
\leq 2e^{-\frac{t^2}{O\left( \sqrt{\log(ln)} \kappa_\phi^2 m^{-\frac{1}{2}} \right)^2}}.
\]

Let $t = O(\sqrt{\log(ln)} \kappa_\phi^2 m^{-\frac{1}{2}}) m^{\frac{1}{2}p}$ and condition on the opposite event, happening with probability at least $1 - 2e^{-m^p}$. Denoting $\epsilon = O\left( \sqrt{\log(ln)} \kappa_\phi^2 m^{-\frac{1}{2}(1-p)} \right)$, we then have
\begin{equation}\label{eq:limiting_norm_concentration}
\left\vert \frac{\Vert x_k(x_i, \theta_{1 : k-1}) \Vert}{\Vert x_{k-1}(x_i, \theta_{1 : k-2}) \Vert} - 1 \right\vert \leq (k-1)^{-1} \epsilon
\end{equation}
for all $k \in [2:l]$ and $i \in [1:n]$.

As $x_1(x_i) = x_i$, we have $\Vert x_1(x_i) \Vert = \Vert x_i \Vert$ for all $i \in [1:n]$. Applying \eqref{eq:limiting_norm_concentration} inductively, we then have the bound $\vert \Vert x_k(x_i, \theta_{1 : k-1}) \Vert - \Vert x_i \Vert \vert \leq \log(k) \epsilon \Vert x_i \Vert$ for all $k \in [1:l]$ and $i \in [1:n]$.
\end{proof}

In the proof of the result below, we are going to use the law of cosines to get the concentration of cosine distances from the concentration of the corresponding proxies given by Proposition~\ref{prop:cosine_distance_concentration}. This will lead to the concentration of the inverse cosine distances, the propagation of which is determined by the inverse cosine distance map $\omega : (1,\infty) \to (1,\infty)$ defined as $\omega(z^{-\frac{1}{2}}) = \zeta(z)^{-\frac{1}{2}}$ for all $z \in (0,1)$ (see \citet[Proposition~12]{mlpsateoc1} for its properties).

\begin{proposition}[Limiting concentration of cosines of activations]\label{prop:limiting_cosine_concentration}~\\
Given $p \in (0,1)$, a dataset $\{ x_1,\cdots,x_n \} \subset \R^{m_0}$ of size $n \in \N+2$ with no parallel datapoints and setting \eqref{eq:optimal_qs} and \eqref{eq:optimal_gammas}, for the event $E_2 \in \mathcal{B}(\Theta_{1:l-1})$ defined by having $\theta_{1:l-1} \in E_2$ iff $\theta_{1:l-1} \in E_1$ and
\[
\left\vert \rho_k(x_{i_1},x_{i_2},\theta_{1:k-1}) - \varrho^{\circ(k-1)}(\rho_1(x_{i_1},x_{i_2})) \right\vert
\leq \Delta_\phi^{-2} (k-1)^{-2} O\left( \sqrt{\log(ln)} \kappa_\phi^2 m^{-\frac{1}{2}(1-p)} \right)
\]
for all $i_1 \neq i_2 \in [1:n]$ and $k \in [1:l]$, we have the bound $\Prob_{1:l-1}(E_2) \geq 1-4e^{-m^p}$.
\end{proposition}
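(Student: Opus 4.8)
The plan is to lift the per-layer, per-pair bound of Proposition~\ref{prop:cosine_distance_concentration} to a bound uniform over all layers and all pairs of data points, and then to propagate it across depth by an induction that turns control of the cosine-distance proxies into control of the cosines $\rho_k$ themselves. For the uniformization I would argue exactly as in the proof of Proposition~\ref{prop:limiting_norm_concentration}: combining Proposition~\ref{prop:cosine_distance_concentration} with the maximal inequality \citet[Lemma~2.2.2]{Vandervaartetal2023}, the $O(ln^2)$ statistics appearing there (one for each $k \in [2:l]$ and each pair $i_1 \neq i_2$), being $O(\kappa_\phi^2 m^{-\frac12})$-sub-gaussian uniformly in $\theta$, have a maximum that is $O(\sqrt{\log(ln)}\kappa_\phi^2 m^{-\frac12})$-sub-gaussian. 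Choosing the threshold $t = O(\sqrt{\log(ln)}\kappa_\phi^2 m^{-\frac12})m^{\frac p2}$ and conditioning on the complement produces an event $E_2' \in \mathcal{B}(\Theta_{1:l-1})$ with $\Prob_{1:l-1}(E_2') \geq 1-2e^{-m^p}$ on which all these statistics are at most $\epsilon := O(\sqrt{\log(ln)}\kappa_\phi^2 m^{-\frac12(1-p)})$. Since the cosine bound defining $E_2$ will be obtained deterministically on $E_1 \cap E_2'$ while $E_2 \subseteq E_1$ by definition, this gives $\Prob_{1:l-1}(E_2) \geq \Prob_{1:l-1}(E_1) - \Prob_{1:l-1}((E_2')^c) \geq 1 - 4e^{-m^p}$.

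Next, fix a pair $i_1 \neq i_2$ and write $z_k = \tfrac{1-\rho_k(x_{i_1},x_{i_2},\theta_{1:k-1})}{2}$ and $\bar z_k = \zeta^{\circ(k-1)}(z_1) = \tfrac{1-\varrho^{\circ(k-1)}(\rho_1(x_{i_1},x_{i_2}))}{2}$; here $z_1 = \bar z_1$ is deterministic, and ``no parallel datapoints'' forces $z_1 \in (0,1)$, so everything below is well defined. Since $|\rho_k - \varrho^{\circ(k-1)}(\rho_1)| = 2|z_k - \bar z_k|$, the target is $|z_k - \bar z_k| = O(\Delta_\phi^{-2}(k-1)^{-2}\epsilon)$. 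Denoting $r_i = \tfrac{\Vert x_k(x_i,\theta_{1:k-1}) \Vert}{\Vert x_{k-1}(x_i,\theta_{1:k-2}) \Vert}$ and letting $P$ be the vector whose norm appears in Proposition~\ref{prop:cosine_distance_concentration}, a direct computation (the law of cosines) yields the exact identity $z_k = (r_1 r_2)^{-1}(\Vert P \Vert^2 - \tfrac14(r_1-r_2)^2)$. On $E_1$ one has $|r_i - 1| \leq (k-1)^{-1}\epsilon$ by the intermediate bound \eqref{eq:limiting_norm_concentration}, and on $E_2'$, since $\gamma_{k-1} = (k-1)^2$, one has $|\Vert P \Vert - \zeta(z_{k-1})^{\frac12}| \leq \tfrac{z_{k-1}}{\zeta(z_{k-1})^{\frac12}(k-1)}\epsilon$; combined with the relation between $\zeta(z)$ and $z$ from \citet[Proposition~11]{mlpsateoc1}, these give the one-step estimate $z_k = \zeta(z_{k-1})(1 + O((k-1)^{-1}\epsilon))$.

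The crux is then to accumulate these one-step estimates across all $\Theta(l)$ layers without blow-up. I would do this in the inverse-cosine-distance coordinate $u_k = z_k^{-\frac12}$, $\bar u_k = \bar z_k^{-\frac12} = \omega^{\circ(k-1)}(u_1)$ (using $\omega(z^{-\frac12}) = \zeta(z)^{-\frac12}$), in which the one-step estimate reads $|u_k - \omega(u_{k-1})| = O(\omega(u_{k-1})(k-1)^{-1}\epsilon)$. Since $\omega$ is $1$-Lipschitz and $\bar u_k = \omega(\bar u_{k-1})$ (\citet[Proposition~12]{mlpsateoc1}), telescoping gives $|u_k - \bar u_k| \leq \sum_{k'=2}^{k} |u_{k'} - \omega(u_{k'-1})| = O(\epsilon \sum_{k'=2}^{k} \tfrac{\omega(u_{k'-1})}{k'-1})$, and feeding in the linear growth $\omega^{\circ(k'-1)}(u_1) = O(\Delta_\phi k')$ of the inverse cosine distances (\citet[Proposition~13]{mlpsateoc1}) bounds this by $O(\Delta_\phi k \epsilon)$. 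Inverting back, $|z_k - \bar z_k| = |u_k^{-2} - \bar u_k^{-2}| = O(\bar u_k^{-3}|u_k - \bar u_k|) = O((\Delta_\phi k)^{-3}\,\Delta_\phi k\,\epsilon) = O(\Delta_\phi^{-2}(k-1)^{-2}\epsilon)$, which is the claim (the $k=1$ case being trivial since $\rho_1$ does not depend on $\theta$). This is run as an induction on $k$: when treating layer $k$, the bound at layer $k-1$ is available, which for $m$ large keeps $z_{k-1}$ inside $(0,1)$ and comparable to $\bar z_{k-1}$, and makes $\omega(u_{k-1})$ comparable to $\bar u_k$ — exactly what the estimates above require.

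The main obstacle is exactly this accumulation. Summed naively in the cosine coordinate it would fail: there the limiting map $\zeta$ contracts only barely (local Lipschitz constant $1 - O((k-1)^{-2})$), so the one-step errors, though individually of size $O(\Delta_\phi^{-2}(k-1)^{-3}\epsilon)$, merely add up to $O(\Delta_\phi^{-2}\epsilon)$, missing the required $(k-1)^{-2}$ decay at deep layers. Working instead in the inverse-cosine-distance coordinate is what makes it go through: there the limiting dynamics is a near-translation ($\omega$ is $1$-Lipschitz with $\omega(u) - u \to \Delta_\phi\tfrac{4}{3\pi}$), so the errors do not amplify along the iteration and build up only linearly in depth, which after inversion is precisely the $(k-1)^{-2}$ rate. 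The sharp $\Delta_\phi^{-2}$ dependence hinges on the quantitative linear-growth estimate for $\omega^{\circ(k-1)}(u_1)$ in \citet[Proposition~13]{mlpsateoc1}, and the factor $(k-1)^{-1}$ in each one-step error — without which the whole scheme diverges — is precisely what the quadratic width pattern $\gamma_k = k^2$ of \eqref{eq:optimal_gammas} buys.
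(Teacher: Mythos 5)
Your proposal is correct and follows essentially the same route as the paper's proof: uniformize Proposition~\ref{prop:cosine_distance_concentration} via the maximal inequality, intersect with $E_1$, recover $z_k$ from the proxy and the norm ratios via the law of cosines, run the accumulation in the inverse-cosine-distance coordinate using the $1$-Lipschitzness of $\omega$ together with the linear growth from \citet[Proposition~13]{mlpsateoc1}, and invert back to get the $\Delta_\phi^{-2}(k-1)^{-2}$ rate. Your closing discussion of why the naive accumulation in the cosine coordinate fails and why the $\gamma_k=k^2$ width pattern is what supplies the per-step $(k-1)^{-1}$ factor matches the paper's reasoning exactly.
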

\begin{proof}
Combining \citet[Lemma~2.2.2]{Vandervaartetal2023} with Proposition~\ref{prop:cosine_distance_concentration}, for all $t \geq 0$ we have
\begin{multline*}
\Prob\left( \max_{k \in [2:l], i_1 \neq i_2 \in [1:n]}\left\{ \zeta(z_{k-1,i_1,i_2})^{-\frac{1}{2}} \frac{\zeta(z_{k-1,i_1,i_2})}{z_{k-1,i_1,i_2}} (k-1) \left\vert \left\Vert \frac{1}{2} \frac{x_k(x_{i_1}, \theta_{1 : k-1})}{\Vert x_{k-1}(x_{i_1}, \theta_{1 : k-2}) \Vert} 
\right. \right. \right. \right. \\ \left. \left. \left. \left.
- \frac{1}{2} \frac{x_k(x_{i_2}, \theta_{1 : k-1})}{\Vert x_{k-1}(x_{i_2}, \theta_{1 : k-2}) \Vert} \right\Vert - \zeta(z_{k-1,i_1,i_2})^{\frac{1}{2}} \right\vert \right\} \geq t \right)
\leq 2e^{-\frac{t^2}{O\left( \sqrt{\log(ln)} \kappa_\phi^2 m^{-\frac{1}{2}} \right)^2}}
\end{multline*}
with $z_{k,i_1,i_2} = \frac{1 - \rho_k(x_{i_1},x_{i_2},\theta_{1:k-1})}{2} = \left\Vert \frac{1}{2} \frac{x_k(x_{i_1}, \theta_{1 : k-1})}{\Vert x_k(x_{i_1}, \theta_{1 : k-1}) \Vert} - \frac{1}{2} \frac{x_k(x_{i_2}, \theta_{1 : k-1})}{\Vert x_k(x_{i_2}, \theta_{1 : k-1}) \Vert} \right\Vert^2$.

Let $t = O(\sqrt{\log(ln)} \kappa_\phi^2 m^{-\frac{1}{2}}) m^{\frac{1}{2}p}$ and condition on the opposite of this event and the event of Proposition~\ref{prop:limiting_norm_concentration}, happening at the same time with probability at least $1 - 4e^{-m^p}$ via a Fr\'echet bound. Denoting $\epsilon = O\left( \sqrt{\log(ln)} \kappa_\phi^2 m^{-\frac{1}{2}(1-p)} \right)$ (with the implicit constant changing from time to time), we then have
\begin{multline}\label{eq:limiting_cosine_distance_concentration}
\left\vert \left\Vert \frac{1}{2} \frac{x_k(x_{i_1}, \theta_{1 : k-1})}{\Vert x_{k-1}(x_{i_1}, \theta_{1 : k-2}) \Vert} - \frac{1}{2} \frac{x_k(x_{i_2}, \theta_{1 : k-1})}{\Vert x_{k-1}(x_{i_2}, \theta_{1 : k-2}) \Vert} \right\Vert - \zeta(z_{k-1,i_1,i_2})^{\frac{1}{2}} \right\vert \\
\leq \zeta(z_{k-1,i_1,i_2})^{\frac{1}{2}} \frac{z_{k-1,i_1,i_2}}{\zeta(z_{k-1,i_1,i_2})} (k-1)^{-1} \epsilon
\end{multline}
for all $k \in [2:l]$ and $i_1 \neq i_2 \in [1:n]$.

Note now that the cosine of the angle enclosed by the first two sides of the triangle with sides 
\[
\left( 
\frac{\Vert x_k(x_{i_1}, \theta_{1 : k-1}) \Vert}{\Vert x_{k-1}(x_{i_1}, \theta_{1 : k-2}) \Vert}, 
\frac{\Vert x_k(x_{i_2}, \theta_{1 : k-1}) \Vert}{\Vert x_{k-1}(x_{i_2}, \theta_{1 : k-2}) \Vert}, 
\left\Vert \frac{x_k(x_{i_1}, \theta_{1 : k-1})}{\Vert x_{k-1}(x_{i_1}, \theta_{1 : k-2}) \Vert} - \frac{x_k(x_{i_2}, \theta_{1 : k-1})}{\Vert x_{k-1}(x_{i_2}, \theta_{1 : k-2}) \Vert} \right\Vert
\right)
\]
is exactly $\rho_k(x_{i_1},x_{i_2},\theta_{1:k-1})$. By the law of cosines, we then have that
\begin{multline*}
z_{k,i_1,i_2}^{\frac{1}{2}}
= \left\Vert \frac{1}{2} \frac{x_k(x_{i_1}, \theta_{1 : k-1})}{\Vert x_k(x_{i_1}, \theta_{1 : k-1}) \Vert} - \frac{1}{2} \frac{x_k(x_{i_2}, \theta_{1 : k-1})}{\Vert x_k(x_{i_2}, \theta_{1 : k-1}) \Vert} \right\Vert
= \sqrt{ \frac{1 - \rho_k(x_{i_1},x_{i_2},\theta_{1:k-1})}{2} } \\
= \sqrt{ \frac{ 4 \left\Vert \frac{1}{2} \frac{x_k(x_{i_1}, \theta_{1 : k-1})}{\Vert x_{k-1}(x_{i_1}, \theta_{1 : k-2}) \Vert} - \frac{1}{2} \frac{x_k(x_{i_2}, \theta_{1 : k-1})}{\Vert x_{k-1}(x_{i_2}, \theta_{1 : k-2}) \Vert} \right\Vert^2 - \frac{\Vert x_k(x_{i_1}, \theta_{1 : k-1}) \Vert^2}{\Vert x_{k-1}(x_{i_1}, \theta_{1 : k-2}) \Vert^2} - \frac{\Vert x_k(x_{i_2}, \theta_{1 : k-1}) \Vert^2}{\Vert x_{k-1}(x_{i_2}, \theta_{1 : k-2}) \Vert^2} }{ 4 \frac{\Vert x_k(x_{i_1}, \theta_{1 : k-1}) \Vert}{\Vert x_{k-1}(x_{i_1}, \theta_{1 : k-2}) \Vert} \frac{\Vert x_k(x_{i_2}, \theta_{1 : k-1}) \Vert}{\Vert x_{k-1}(x_{i_2}, \theta_{1 : k-2}) \Vert} } + \frac{1}{2} }.
\end{multline*}
By \eqref{eq:limiting_norm_concentration} and \eqref{eq:limiting_cosine_distance_concentration}, we have the bounds
\[
1-(k-1)^{-1}\epsilon
\leq \frac{\Vert x_k(x_i, \theta_{1 : k-1}) \Vert}{\Vert x_{k-1}(x_i, \theta_{1 : k-2}) \Vert}
\leq 1+(k-1)^{-1}\epsilon
\]
for all $i \in [1:n]$ and
\begin{multline*}
\left( 1 - \frac{z_{k-1,i_1,i_2}}{\zeta(z_{k-1,i_1,i_2})} (k-1)^{-1} \epsilon \right) \zeta(z_{k-1,i_1,i_2})^{\frac{1}{2}}
\leq \left\Vert \frac{1}{2} \frac{x_k(x_{i_1}, \theta_{1 : k-1})}{\Vert x_{k-1}(x_{i_1}, \theta_{1 : k-2}) \Vert} - \frac{1}{2} \frac{x_k(x_{i_2}, \theta_{1 : k-1})}{\Vert x_{k-1}(x_{i_2}, \theta_{1 : k-2}) \Vert} \right\Vert \\
\leq \left( 1 + \frac{z_{k-1,i_1,i_2}}{\zeta(z_{k-1,i_1,i_2})} \epsilon (k-1)^{-1} \right) \zeta(z_{k-1,i_1,i_2})^{\frac{1}{2}}.
\end{multline*}
Therefore we have
\[
\frac{1 - \frac{z_{k-1,i_1,i_2}}{\zeta(z_{k-1,i_1,i_2})} (k-1)^{-1} \epsilon}{1+(k-1)^{-1}\epsilon} \zeta(z_{k-1,i_1,i_2})^{\frac{1}{2}}
\leq z_{k,i_1,i_2}^{\frac{1}{2}}
\leq \frac{1 + \frac{z_{k-1,i_1,i_2}}{\zeta(z_{k-1,i_1,i_2})} (k-1)^{-1} \epsilon}{1-(k-1)^{-1}\epsilon} \zeta(z_{k-1,i_1,i_2})^{\frac{1}{2}},
\]
so that
\[
\frac{1-(k-1)^{-1}\epsilon}{1 + \frac{\omega(w_{k-1,i_1,i_2})^2}{w_{k-1,i_1,i_2}^2} (k-1)^{-1} \epsilon} \omega(w_{k-1,i_1,i_2})
\leq w_{k,i_1,i_2}
\leq \frac{1+(k-1)^{-1}\epsilon}{1 - \frac{\omega(w_{k-1,i_1,i_2})^2}{w_{k-1,i_1,i_2}^2} (k-1)^{-1} \epsilon} \omega(w_{k-1,i_1,i_2})
\]
with $w_{k,i_1,i_2} = z_{k,i_1,i_2}^{-\frac{1}{2}}$. Subtracting $\omega(w_{k-1,i_1,i_2})$, we get a bound that implies
\[
\vert w_{k,i_1,i_2} - \omega(w_{k-1,i_1,i_2}) \vert
\leq \frac{\left( 1 + \frac{\omega(w_{k-1,i_1,i_2})^2}{w_{k-1,i_1,i_2}^2} \right) (k-1)^{-1} \epsilon}{1 - \frac{\omega(w_{k-1,i_1,i_2})^2}{w_{k-1,i_1,i_2}^2} (k-1)^{-1} \epsilon} \omega(w_{k-1,i_1,i_2}).
\]
Assume now that $\vert w_{k',i_1,i_2} - \omega^{\circ (k'-1)}(w_{1,i_1,i_2}) \vert \leq \Delta_\phi (k'-1) \epsilon$ for $k' \in [1:k-1]$, which clearly holds for $k=2$. Then 
\begin{multline*}
w_{1,i_1,i_2} + \Delta_\phi \left( \frac{4}{3\pi} - \epsilon \right) (k'-1) + \Delta_\phi \frac{2}{\pi} \log\left( \Delta_\phi^{-1} \frac{3\pi}{4} w_{1,i_1,i_2} + k' - 1 \right) - O(1) 
\leq w_{k',i_1,i_2} \\
\leq w_{1,i_1,i_2} + \Delta_\phi \left( \frac{4}{3\pi} + \epsilon \right) (k'-1) + \Delta_\phi \frac{2}{\pi} \log\left( \Delta_\phi^{-1} \frac{3\pi}{4} w_{1,i_1,i_2} + k' - 1 \right) + O(1)
\end{multline*}
by \citet[Proposition~13]{mlpsateoc1}, so that $\frac{\omega(w_{k-1,i_1,i_2})^2}{w_{k-1,i_1,i_2}^2} \leq O(1)$ and $(k-1)^{-1} \omega(w_{k-1,i_1,i_2}) \leq O(\Delta_\phi)$. We then have $\vert w_{k,i_1,i_2} - \omega(w_{k-1,i_1,i_2}) \vert \leq \frac{\Delta_\phi \epsilon}{1-(k-1)^{-1} \epsilon} \leq \Delta_\phi \epsilon$, so that by the triangle inequality and using that $\omega$ is $1$-Lipschitz by \citet[Proposition~12]{mlpsateoc1} we have the bound 
\begin{multline*}
\vert w_{k,i_1,i_2} - \omega^{\circ (k-1)}(w_{1,i_1,i_2}) \vert 
\leq \vert w_{k,i_1,i_2} - \omega(w_{k-1,i_1,i_2}) \vert + \vert \omega(w_{k-1,i_1,i_2}) - \omega^{\circ (k-1)}(w_{1,i_1,i_2}) \vert \\
\leq \vert w_{k,i_1,i_2} - \omega(w_{k-1,i_1,i_2}) \vert + \vert w_{k-1,i_1,i_2} - \omega^{\circ (k-2)}(w_{1,i_1,i_2}) \vert 
\leq \Delta_\phi \epsilon + \Delta_\phi (k-2) \epsilon 
\leq \Delta_\phi (k-1),
\end{multline*}
completing the induction. Hence $\vert w_{k,i_1,i_2} - \omega^{\circ (k-1)}(w_{1,i_1,i_2}) \vert \leq \Delta_\phi (k-1) \epsilon$ for all $k \in [1:l]$ and $i_1 \neq i_2 \in [1:n]$.

Given $i_1 \neq i_2 \in [1:n]$, as $\rho_k(x_{i_1},x_{i_2},\theta_{1:k-1}) = 1-2w_{k,i_1,i_2}^{-2}$ and $\varrho^{\circ(k-1)}(\rho_1(x_{i_1},x_{i_2}) = 1-2\omega^{\circ (k-1)}(w_{1,i_1,i_2})^{-2}$, by the fundamental theorem of calculus we have $\vert \rho_k(x_{i_1},x_{i_2},\theta_{1:k-1}) - \varrho^{\circ(k-1)}(\rho_1(x_{i_1},x_{i_2}) \vert \leq O((\Delta_\phi(k-1))^{-3}) \Delta_\phi (k-1) \epsilon \leq \Delta_\phi^{-2} (k-1)^{-2} \epsilon$ for all $i_1 \neq i_2 \in [1:n]$ and $k \in [1:l]$.
\end{proof}

\begin{proposition}[Limiting concentration of norms of backpropagation matrices]\label{prop:limiting_backprop_norm_concentration}~\\
Given $p \in (0,1)$, a dataset $\{x_1,\cdots,x_n\} \subset \R^{m_0}$ of size $n \in \N+1$ and setting \eqref{eq:optimal_qs} and \eqref{eq:optimal_gammas}, for the event $E_3 \in \mathcal{B}(\Theta_{1:l-1})$ defined by having $\theta_{1:l-1} \in E_3$ iff
\[
\Vert B_{k_1, k_2}(x_i, \theta_{1 : k_2-1}) \Vert \leq \log\left( \frac{k_2-1}{k_1-1} \right) O\left( \kappa^2 m^{-\frac{1}{2}} \right)
\]
and
\[
\left\vert \Vert B_{k_1,k_2}(x_i, \theta_{1 : k_2-1}) \Vert_F^2 - 1 \right\vert
\leq \log\left( \frac{k_2}{k_1-1} \right) O\left( \sqrt{\log(ln)} \kappa_\phi^2 m^{-\frac{1}{2}(1-p)} \right)
\]
for all $i \in [1:n]$ and $k_1 \leq k_2 \in [2:l]$, we have the bound $\Prob_{1:l-1}(E_3) \geq 1 - 6e^{-m^p}$.
\end{proposition}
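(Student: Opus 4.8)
The plan is to condition on three high-probability events obtained by applying a maximal inequality to the pointwise concentration results of \S~\ref{layerwise}, and then to establish the two displayed bounds by a pair of nested inductions on $k_2$ --- the Frobenius bound first, since it is needed to control the additive term in the recursion for the operator norm.

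First I would apply \citet[Lemma~2.2.2]{Vandervaartetal2023} to Proposition~\ref{prop:backprop_norm_concentration}, to Proposition~\ref{prop:bwd_concentration_1} with $x_1=x_2=x_i$ (so that $\varrho'(\rho_{k-1}(x_i,x_i,\theta_{1:k-2})) = \varrho'(1) = 1$), and to Proposition~\ref{prop:bwd_concentration_2} again with $x_1=x_2=x_i$, using Proposition~\ref{prop:bwd_expectation_diagonal} to identify $\E_{A_{k_2-1}} X'_{k_1,k_2}(x_i,x_i,\theta_{1:k_2-1}) = X'_{k_1,k_2-1}(x_i,x_i,\theta_{1:k_2-2})$. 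In each case the maximum ranges over the $O(l^2 n)$ indices $k_1 \leq k_2 \in [2:l]$, $i \in [1:n]$, and the quantities involved have tails $2e^{-t^2/(O(\kappa_\phi^2 m^{-1/2})^2 + O(\kappa_\phi^2 m^{-1})t)}$ (purely Gaussian for Proposition~\ref{prop:backprop_norm_concentration}), hence are $O(\kappa_\phi^2 m^{-1/2})$-sub-gaussian throughout the relevant regime $t \leq O(\kappa_\phi^2)$, while the maximal inequality inflates the variance proxy only by $\sqrt{\log(l^2 n)} = O(\sqrt{\log(ln)})$. Choosing the threshold $O(\sqrt{\log(ln)}\kappa_\phi^2 m^{-1/2})m^{p/2}$ and passing to complements, each of the three resulting events holds with probability at least $1 - 2e^{-m^p}$; intersecting them by a Fr\'echet bound gives probability at least $1 - 6e^{-m^p}$. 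Writing $\epsilon = O(\sqrt{\log(ln)}\kappa_\phi^2 m^{-\frac{1}{2}(1-p)})$ and using $\gamma_k = k^2$ from \eqref{eq:optimal_gammas} (so $\gamma_k^{-\frac{1}{2}} = k^{-1}$ and $m_k^{-\frac{1}{2}} = k^{-1} m^{-\frac{1}{2}}$), on this event we have, for all $i \in [1:n]$,
\[
\left\vert X'_{k_1,k_1}(x_i,x_i,\theta_{1:k_1-1}) - 1 \right\vert \leq (k_1-1)^{-1}\epsilon, \qquad \left\vert \frac{X'_{k_1,k_2}(x_i,x_i,\theta_{1:k_2-1})}{X'_{k_1,k_2-1}(x_i,x_i,\theta_{1:k_2-2})} - 1 \right\vert \leq (k_2-1)^{-1}\epsilon
\]
for $k_1 \leq k_2$, and
\[
\left\Vert B_{k_1,k_2}(x_i,\theta_{1:k_2-1}) \right\Vert \leq \left(1 + (k_2-1)^{-1}\epsilon\right) \left\Vert B_{k_1,k_2-1}(x_i,\theta_{1:k_2-2}) \right\Vert + O\!\left((k_2-1)^{-1}\kappa_\phi^2 m^{-\frac{1}{2}}\right) \left\Vert B_{k_1,k_2-1}(x_i,\theta_{1:k_2-2}) \right\Vert_F
\]
for $k_1 < k_2$.

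Next I would prove the Frobenius bound by induction on $k_2 \geq k_1$, telescoping $X'_{k_1,k_2}(x_i,x_i,\cdot)$ as $X'_{k_1,k_1}(x_i,x_i,\cdot) \prod_{j=k_1+1}^{k_2} X'_{k_1,j}(x_i,x_i,\cdot)/X'_{k_1,j-1}(x_i,x_i,\cdot)$. Taking logarithms and using $\vert \log(1+x) \vert \leq 2\vert x \vert$ for small $x$, this gives $\vert \log X'_{k_1,k_2}(x_i,x_i,\cdot) \vert \leq 2\epsilon \sum_{m=k_1-1}^{k_2-1} m^{-1}$; since $\sum_{m=k_1-1}^{k_2-1} m^{-1} \leq (k_1-1)^{-1} + \log\frac{k_2-1}{k_1-1}$ and $(k_1-1)^{-1} \leq 2\log\frac{k_1}{k_1-1} \leq 2\log\frac{k_2}{k_1-1}$, this is $O(\log\frac{k_2}{k_1-1})\epsilon$, and exponentiating (again $\vert e^x-1 \vert \leq 2\vert x \vert$, valid as $\epsilon \log l \to 0$) yields $\vert \Vert B_{k_1,k_2}(x_i,\cdot) \Vert_F^2 - 1 \vert = \vert X'_{k_1,k_2}(x_i,x_i,\cdot) - 1 \vert \leq \log\frac{k_2}{k_1-1} \, O(\epsilon)$, the second displayed bound. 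In particular $\Vert B_{k_1,k_2-1}(x_i,\cdot) \Vert_F \leq O(1)$, which I then feed into the operator-norm recursion above, turning it into $u_{k_2} \leq (1 + (k_2-1)^{-1}\epsilon) u_{k_2-1} + C(k_2-1)^{-1}\kappa_\phi^2 m^{-\frac{1}{2}}$ where $u_j := \Vert B_{k_1,j}(x_i,\cdot) \Vert$, with the \emph{deterministic} base case $u_{k_1} = \Vert \sigma D_{x_{k_1}'(x_i,\cdot)} \Vert \leq \kappa_\phi m_{k_1-1}^{-\frac{1}{2}} = \kappa_\phi (k_1-1)^{-1} m^{-\frac{1}{2}}$. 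An induction on $j$ then gives $u_j \leq \kappa_\phi^2 m^{-\frac{1}{2}} \big((k_1-1)^{-1} + C\sum_{r=k_1+1}^j (r-1)^{-1}\big) \prod_{r=k_1+1}^j (1 + (r-1)^{-1}\epsilon)$; the product is $\leq \exp(\epsilon \sum_r (r-1)^{-1}) = \exp(O(\epsilon \log l)) = O(1)$, and the bracket is $\leq (k_1-1)^{-1} + C\log\frac{k_2-1}{k_1-1} = O(\log\frac{k_2-1}{k_1-1})$ (using $\sum_{m=k_1}^{k_2-1} m^{-1} \leq \log\frac{k_2-1}{k_1-1}$ and $(k_1-1)^{-1} \leq 2\log\frac{k_1}{k_1-1} \leq 2\log\frac{k_2-1}{k_1-1}$ since $k_2-1 \geq k_1$). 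Hence $\Vert B_{k_1,k_2}(x_i,\cdot) \Vert \leq \log\frac{k_2-1}{k_1-1} \, O(\kappa_\phi^2 m^{-\frac{1}{2}})$, the first displayed bound.

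The main obstacle will be the bookkeeping in these two coupled inductions: verifying that the choice $\gamma_k = k^2$ is precisely what makes the otherwise geometric recursions telescope into the logarithmic factors $\log\frac{k_2}{k_1-1}$ and $\log\frac{k_2-1}{k_1-1}$ (each $\gamma_k^{-1/2}$ contributing a term $\sim (k-1)^{-1}$ whose partial sums are logarithms), keeping the perturbative prefactor $\prod(1 + (r-1)^{-1}\epsilon)$ under control via $\epsilon \log l \to 0$, and correctly routing each inequality to the event it relies on (the operator-norm recursion genuinely needs the Frobenius bound established first). A minor point is the hypothesis $\Vert x_{k_2-1}(x_i,\cdot) \Vert > 0$ required to invoke Proposition~\ref{prop:bwd_expectation_diagonal}: this needs no extra event, since on the second conditioned event $X'_{k_1,k_1}(x_i,x_i,\cdot)$ is pinned near $1$, hence $B_{k_1,k_1}(x_i,\cdot) \neq 0$, and the recursive structure propagates nonvanishing forward.
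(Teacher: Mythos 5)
Your proposal follows essentially the same route as the paper's proof: the same three maximal inequalities over the $O(l^2 n)$ indices (operator-norm recursion from Proposition~\ref{prop:backprop_norm_concentration}, diagonal Frobenius concentration from Proposition~\ref{prop:bwd_concentration_1} with $\varrho'(1)=1$, and the ratio bound from Propositions~\ref{prop:bwd_concentration_2} and \ref{prop:bwd_expectation_diagonal}), a Fr\'echet bound giving $1-6e^{-m^p}$, a telescoping induction for the Frobenius norm, and then the operator-norm induction fed by the resulting $\Vert B\Vert_F \le O(1)$ bound with the deterministic base case $\kappa_\phi(k_1-1)^{-1}m^{-1/2}$. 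The only cosmetic deviations (citing the sub-gaussian maximal inequality throughout rather than the Bernstein-type one for the two $X'$ events, and making the telescoping explicit via logarithms) do not change the substance, so the proposal is correct and matches the paper.
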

\begin{proof}
Combining \citet[Lemma~2.2.2]{Vandervaartetal2023} with Proposition~\ref{prop:backprop_norm_concentration}, for all $t_1 \geq 0$ we have
\begin{multline*}
\Prob_{1:l-1}\left(
\max_{k_1 < k_2 \in [2:l], i \in [1:n]}\left\{ (k_2-1) \left( 
\vphantom{\frac{\Vert B_{k_1, k_2}(x_i, \theta_{1 : k_2-1}) \Vert - O\left( \kappa^2 (k_2-1)^{-1} m^{-\frac{1}{2}} \Vert B_{k_1, k_2-1}(x_i, \theta_{1 : k_2-2}) \Vert_F \right)}{\Vert B_{k_1, k_2-1}(x_i, \theta_{1 : k_2-2}) \Vert}} \right. \right. \right. \\ \left. \left. \left.
\frac{\Vert B_{k_1, k_2}(x_i, \theta_{1 : k_2-1}) \Vert - O\left( \kappa^2 (k_2-1)^{-1} m^{-\frac{1}{2}} \Vert B_{k_1, k_2-1}(x_i, \theta_{1 : k_2-2}) \Vert_F \right)}{\Vert B_{k_1, k_2-1}(x_i, \theta_{1 : k_2-2}) \Vert} - 1 \right)_+ \right\} \geq t_1
\right) \\
\leq 2e^{-\frac{t_1^2}{O\left( \sqrt{\log(ln)} \kappa_\phi^2 m^{-\frac{1}{2}} \right)^2}},
\end{multline*}
while combining \citet[Lemma~2.2.13]{Vandervaartetal2023} with Proposition~\ref{prop:bwd_concentration_1} and Proposition~\ref{prop:bwd_concentration_2}, for all $t_2 \geq 0$ we have
\begin{multline*}
\Prob_{1:l-1}\left( \max_{k \in [2:l], i \in [1:n]}\left\{ (k-1) \left\vert \Vert B_{k,k}(x_i, \theta_{1 : k-1}) \Vert_F^2 - 1 \right\vert \right\} 
\right. \\ \left.
\geq O\left( \sqrt{\log(ln)} \kappa_\phi^2 m^{-\frac{1}{2}} \right) t_1^{\frac{1}{2}} 
+ O\left( \log(ln) \kappa_\phi^2 m^{-1} \right) t_2 \right)
\leq 2e^{-t_2}
\end{multline*}
and
\begin{multline*}
\Prob_{1:l-1}\left( \max_{k_1 < k_2 \in [2:l], i \in [1:n]}\left\{ (k_2-1) \left\vert \frac{\Vert B_{k_1,k_2}(x_i,\theta_{1:k_2-1}) \Vert_F^2}{\Vert B_{k_1,k_2-1}(x_i,\theta_{1:k_2-2}) \Vert_F^2} - 1 \right\vert \right\} 
\right. \\ \left. \vphantom{\left\{ (k_2-1) \left\vert \frac{\Vert B_{k_1,k_2}(x_i,\theta_{1:k_2-1}) \Vert_F^2}{\Vert B_{k_1,k_2-1}(x_i,\theta_{1:k_2-2}) \Vert_F^2} - 1 \right\vert \right\}}
\geq O\left( \sqrt{\log(ln)} \kappa_\phi^2 m^{-\frac{1}{2}} \right) t_2^{\frac{1}{2}} 
+ O\left( \log(ln) \kappa_\phi^2 m^{-1} \right) t_2 \right)
\leq 2e^{-t_2}.
\end{multline*}

Let $t_1 = O(\sqrt{\log(ln)} \kappa_\phi^2 m^{-\frac{1}{2}}) m^{\frac{1}{2}p}$, $t_2 = m^p$ and condition on the opposites of the above events, happening with probability at least $1 - 6e^{-m^p}$. Denoting $\epsilon = O\left( \sqrt{\log(ln)} \kappa_\phi^2 m^{-\frac{1}{2}(1-p)} \right)$ (with the implicit constant changing from time to time), we then have
\begin{multline}\label{eq:limiting_backprop_norm_concentration}
\Vert B_{k_1, k_2}(x_i, \theta_{1 : k_2-1}) \Vert 
\leq O\left( \kappa_\phi^2 (k_2-1)^{-1} m^{-\frac{1}{2}} \Vert B_{k_1, k_2-1}(x_i, \theta_{1 : k_2-2}) \Vert_F \right) \\
+ \left( 1 + (k_2-1)^{-1} \epsilon \right) \Vert B_{k_1, k_2-1}(x_i, \theta_{1 : k_2-2}) \Vert
\end{multline}
for all $k_1 < k_2 \in [2:l]$ and $i \in [1:n]$, 
\begin{equation}\label{eq:limiting_backprop_fro_concentration_1}
\left\vert \Vert B_{k,k}(x_i, \theta_{1 : k-1}) \Vert_F^2 - 1 \right\vert
\leq (k-1)^{-1} \epsilon
\end{equation}
for all $k \in [2:l]$ and $i \in [1:n]$ and
\begin{equation}\label{eq:limiting_backprop_fro_concentration_2}
\left\vert \frac{\Vert B_{k_1,k_2}(x_i, \theta_{1 : k_2-1}) \Vert_F^2}{\Vert B_{k_1,k_2-1}(x_i, \theta_{1 : k_2-2}) \Vert_F^2} - 1 \right\vert
\leq (k_2-1)^{-1} \epsilon
\end{equation}
for all $k_1 < k_2 \in [2:l]$ and $i \in [1:n]$.

Applying \eqref{eq:limiting_backprop_fro_concentration_1} and \eqref{eq:limiting_backprop_fro_concentration_2} inductively, we have the bound
\[
\left\vert \Vert B_{k_1,k_2}(x_i, \theta_{1 : k_2-1}) \Vert_F^2 - 1 \right\vert
\leq \log\left( \frac{k_2}{k_1-1} \right) \epsilon
\]
for all $k_1 < k_2 \in [2:l]$ and $i \in [1:n]$, so that $\Vert B_{k_1,k_2}(x_i, \theta_{1 : k_2-1}) \Vert_F^2 \leq O(1)$.

For all $k \in [2:l]$ and $i \in [1:n]$, $\Vert B_{k,k}(x_i,\theta_{1:k-1}) \Vert = \Vert \sigma D_{x_k'(x_i,\theta_{1:k-1})} \Vert$ is bounded by $\sigma m_{k-1}^{-\frac{1}{2}} \Vert \phi \Vert_L = \kappa_\phi (k-1)^{-1} m^{-\frac{1}{2}}$. Assume now that $\Vert B_{k_1,k_2'}(x_i,\theta_{1:k_2'-1}) \Vert \leq ((k_1-1)^{-1} + O(\kappa_\phi \sum_{k=k_1}^{k_2'-1} k^{-1}) + \epsilon) \kappa_\phi m^{-\frac{1}{2}}$ for $k_2' \in [k_1:k_2-1]$ and $i \in [1:n]$ for some $k_1 \leq k_2 \in [2:l]$, which clearly holds if $k_1=k_2-1$. Then by \eqref{eq:limiting_backprop_norm_concentration}, $\Vert B_{k_1,k_2}(x_i,\theta_{1:k_2-1}) \Vert$ is at most
\begin{multline*}
O(\kappa_\phi^2 (k_2-1)^{-1} m^{-\frac{1}{2}}) + (1 + (k_2-1)^{-1} \epsilon) \left( (k_1-1)^{-1} + O\left( \kappa_\phi \sum_{k=k_1}^{k_2-2} k^{-1} \right) + \epsilon \right) \kappa_\phi m^{-\frac{1}{2}} \\
\leq \left( (k_1-1)^{-1} + O\left( \kappa_\phi \sum_{k=k_1}^{k_2-1} k^{-1} \right) + \epsilon \right) \kappa_\phi m^{-\frac{1}{2}},
\end{multline*}
completing the induction. Hence $\Vert B_{k_1,k_2}(x_i,\theta_{1:k_2-1}) \Vert$ is at most
\[
\left( (k_1-1)^{-1} + O\left( \kappa_\phi \sum_{k=k_1}^{k_2-1} k^{-1} \right) + \epsilon \right) \kappa_\phi m^{-\frac{1}{2}}
\leq O\left( \kappa_\phi^2 \log\left( \frac{k_2-1}{k_1-1} \right) m^{-\frac{1}{2}} \right)
\]
for all $k_1 \leq k_2 \in [2:l]$ and $i \in [1:n]$.
\end{proof}

\begin{proposition}[Limiting concentration of backpropagation inner products]\label{prop:limiting_backprop_inner_product_concentration}~\\
Given $p \in (0,1)$, a dataset $\{x_1,\cdots,x_n\} \subset \R^{m_0}$ of size $n \in \N+2$ with no parallel datapoints and setting \eqref{eq:optimal_qs} and \eqref{eq:optimal_gammas}, for the event $E_4 \in \mathcal{B}(\Theta_{1:l-1})$ defined by having $\theta_{1:l-1} \in E_4$ iff $\theta_{1:l-1} \in E_2 \cap E_3$ and
\begin{multline*}
\left\vert X'_{k_1,k_2}(x_{i_1},x_{i_2},\theta_{1:k_2-1}) - \prod_{k=k_1}^{k_2} \varrho'(\varrho^{\circ (k-2)}(\rho_1(x_{i_1},x_{i_2}))) \right\vert \\
\leq \log\left( \frac{k_2}{k_1-1} \right) O\left( \sqrt{\log(ln)} \kappa_\phi^2 m^{-\frac{1}{2}(1-p)} \right)
\end{multline*}
for all $i_1, i_2 \in [1:n]$ and $k_1 \leq k_2 \in [2:l]$, we have the bound $\Prob_{1:l-1}(E_4) \geq 1-14e^{-m^p}$.
\end{proposition}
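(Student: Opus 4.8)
The plan is to derive simultaneous one-step estimates for the backpropagation inner products from a maximal inequality, intersect the resulting event with $E_2$ and $E_3$, and then propagate the one-step estimates across depth by induction on $k_2$.

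First I would apply \citet[Lemma~2.2.13]{Vandervaartetal2023} to Proposition~\ref{prop:bwd_concentration_1} over all $k\in[2:l]$, $i_1,i_2\in[1:n]$ and to Proposition~\ref{prop:bwd_concentration_2} over all $k_1<k_2\in[2:l]$, $i_1,i_2\in[1:n]$, taking the deviation threshold to be $m^p$. This produces two events, each of probability at least $1-2e^{-m^p}$, on which (writing $\epsilon_0=O(\sqrt{\log(ln)}\kappa_\phi^2 m^{-\frac12(1-p)})$ and using $\gamma_{k-1}=(k-1)^2$ together with the identity $\E_{A_{k-1}}X'_{k,k}=\varrho'(\rho_{k-1})$) one has, for all indices, $|X'_{k,k}(x_{i_1},x_{i_2},\theta_{1:k-1})-\varrho'(\rho_{k-1}(x_{i_1},x_{i_2},\theta_{1:k-2}))|\le(k-1)^{-1}\epsilon_0$ and $|X'_{k_1,k_2}(x_{i_1},x_{i_2},\theta_{1:k_2-1})-\E_{A_{k_2-1}}X'_{k_1,k_2}(x_{i_1},x_{i_2},\theta_{1:k_2-1})|\le(k_2-1)^{-1}\epsilon_0\,\|B_{k_1,k_2-1}(x_{i_1},\theta_{1:k_2-2})\|_F\|B_{k_1,k_2-1}(x_{i_2},\theta_{1:k_2-2})\|_F$. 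Intersecting these with $E_2$ (probability $\ge 1-4e^{-m^p}$, Proposition~\ref{prop:limiting_cosine_concentration}) and $E_3$ (probability $\ge 1-6e^{-m^p}$, Proposition~\ref{prop:limiting_backprop_norm_concentration}) gives, by a union bound, an event of probability at least $1-14e^{-m^p}$, on which additionally $\|B_{k_1,k_2-1}(x_i,\theta_{1:k_2-2})\|_F^2\le O(1)$, $\|B_{k_1,k_2-1}(x_i,\theta_{1:k_2-2})\|\le\log(\tfrac{k_2-1}{k_1-1})O(\kappa_\phi^2 m^{-\frac12})$, $\rho_{k_2-1}(x_{i_1},x_{i_2},\cdot)\in(-1,1)$ (so Proposition~\ref{prop:bwd_expectation_offdiagonal} applies) and $|\rho_{k-1}(x_{i_1},x_{i_2},\cdot)-\varrho^{\circ(k-2)}(\rho_1(x_{i_1},x_{i_2}))|\le\Delta_\phi^{-2}(k-2)^{-2}\epsilon_0$. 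It remains to verify the stated bound deterministically on this event.

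The diagonal case $i_1=i_2=i$ is free: at the EOC $\varrho(1)=1$ and $\varrho'(1)=1$, and $\rho_1(x_i,x_i)=1$, so $\prod_{k=k_1}^{k_2}\varrho'(\varrho^{\circ(k-2)}(1))=1$, while $X'_{k_1,k_2}(x_i,x_i,\theta_{1:k_2-1})=\|B_{k_1,k_2}(x_i,\theta_{1:k_2-1})\|_F^2$, so the inequality is exactly the Frobenius-norm bound in $E_3$. For $i_1\ne i_2$ I would induct on $k_2\ge k_1$ with $P_{k_2}:=\prod_{k=k_1}^{k_2}\varrho'(\varrho^{\circ(k-2)}(\rho_1(x_{i_1},x_{i_2})))$. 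The base case $k_2=k_1$ follows from $|X'_{k_1,k_1}-\varrho'(\rho_{k_1-1})|\le(k_1-1)^{-1}\epsilon_0$ plus $|\varrho'(\rho_{k_1-1})-\varrho'(\varrho^{\circ(k_1-2)}(\rho_1))|\le O((k_1-1)^{-1}\epsilon_0)$; the latter is the mean value theorem applied to $\varrho'$, where, because the inverse cosine distances grow linearly by \citet[Proposition~13]{mlpsateoc1}, both arguments lie where $1-\rho\gtrsim(\Delta_\phi(k_1-1))^{-2}$, so $|\varrho''|=\Delta_\phi\tfrac2\pi(1-\rho^2)^{-\frac12}\le O(\Delta_\phi^2(k_1-1))$, and multiplying by $\Delta_\phi^{-2}(k_1-2)^{-2}\epsilon_0$ gives the bound (with $k_1=2$ exact since $\rho_1(x_{i_1},x_{i_2},\cdot)=\rho_1(x_{i_1},x_{i_2})$). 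For $k_1<k_2$ I would write
\[
X'_{k_1,k_2}-P_{k_2}=(X'_{k_1,k_2}-\E_{A_{k_2-1}}X'_{k_1,k_2})+(\E_{A_{k_2-1}}X'_{k_1,k_2}-\varrho'(\rho_{k_2-1})X'_{k_1,k_2-1})+(\varrho'(\rho_{k_2-1})X'_{k_1,k_2-1}-P_{k_2}).
\]
The first summand is $\le O((k_2-1)^{-1}\epsilon_0)$ by the one-step estimate and $\|B\|_F=O(1)$; the second is the gradient dependence error, which Proposition~\ref{prop:bwd_expectation_offdiagonal} bounds by $\Delta_\phi\tfrac8\pi\sqrt{\tfrac{1-\rho_{k_2-1}}{1+\rho_{k_2-1}}}\|B_{k_1,k_2-1}(x_{i_1})\|\|B_{k_1,k_2-1}(x_{i_2})\|$, and since $\rho_{k_2-1}$ is bounded away from $-1$ on $E_2$ and the operator norms are $O(\kappa_\phi^2\log(l)m^{-\frac12})$ on $E_3$, this is $O(\kappa_\phi^4\log^2(l)m^{-1})=o(\epsilon_0)$; the third, rewritten as $\varrho'(\rho_{k_2-1})(X'_{k_1,k_2-1}-P_{k_2-1})+(\varrho'(\rho_{k_2-1})-\varrho'(\varrho^{\circ(k_2-2)}(\rho_1)))P_{k_2-1}$, is bounded via $|\varrho'|\le1$, $|P_{k_2-1}|\le1$, the induction hypothesis $|X'_{k_1,k_2-1}-P_{k_2-1}|\le C\log(\tfrac{k_2-1}{k_1-1})\epsilon_0$, and the same $\varrho''$ estimate, giving $\le C\log(\tfrac{k_2-1}{k_1-1})\epsilon_0+O((k_2-1)^{-1}\epsilon_0)$. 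As $(k_2-1)^{-1}\le 2\log(\tfrac{k_2}{k_2-1})$ for $k_2\ge2$, choosing $C$ large enough closes the induction at $C\log(\tfrac{k_2}{k_1-1})\epsilon_0$, which is the claim.

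I expect the main obstacle to be the control of $|\varrho'(\rho_{k-1})-\varrho'(\varrho^{\circ(k-2)}(\rho_1))|$: the map $\varrho'$ fails to be Lipschitz because $\varrho''$ blows up precisely at the fixed point $\rho=1$ toward which all cosines converge. The resolution is to feed in the sharp linear growth of the inverse cosine distances from \citet[Proposition~13]{mlpsateoc1}, which localizes $\rho_{k-1}$ at distance $\Theta((\Delta_\phi k)^{-2})$ from $1$ and hence bounds $\varrho''$ by $O(\Delta_\phi^2 k)$; the $\Delta_\phi$-bookkeeping must be exact so that this $\Delta_\phi^2$ cancels the $\Delta_\phi^{-2}$ coming from the cosine concentration in $E_2$, leaving the per-layer error at the order $O((k-1)^{-1}\epsilon_0)$ for which the depthwise harmonic-sum telescoping yields the asserted $\log(\tfrac{k_2}{k_1-1})$ factor.
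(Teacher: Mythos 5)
Your overall route is the same as the paper's: apply the maximal inequality of \citet[Lemma~2.2.13]{Vandervaartetal2023} to Propositions~\ref{prop:bwd_concentration_1} and \ref{prop:bwd_concentration_2} with threshold $m^p$, intersect with $E_2$ and $E_3$ to get the $1-14e^{-m^p}$ budget, and then close a depthwise induction using the four-term decomposition (one-step concentration, gradient-dependence error from Proposition~\ref{prop:bwd_expectation_offdiagonal}, the $\varrho''$ estimate for $|\varrho'(\rho_{k_2-1})-\varrho'(\varrho^{\circ(k_2-2)}(\rho_1))|$, and the induction hypothesis), with the harmonic sums telescoping into $\log(\tfrac{k_2}{k_1-1})$. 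Your explicit treatment of the diagonal case via $E_3$ and $\varrho'(1)=1$, and your remark that the $k_1=2$ base case is exact, are correct and slightly cleaner than the paper, which leaves these implicit.

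There is, however, one concrete quantitative gap: your bound on the gradient-dependence term. You control $\Delta_\phi\tfrac{8}{\pi}\sqrt{\tfrac{1-\rho_{k_2-1}}{1+\rho_{k_2-1}}}$ by $O(\Delta_\phi)$ using only that $\rho_{k_2-1}$ is bounded away from $-1$, giving a per-step error $O(\kappa_\phi^4\log^2(l)m^{-1})$. But the increment available at step $k_2$ of your induction is $C\bigl(\log(\tfrac{k_2}{k_1-1})-\log(\tfrac{k_2-1}{k_1-1})\bigr)\epsilon_0 \asymp k_2^{-1}\epsilon_0$, and $\kappa_\phi^4\log^2(l)m^{-1}\leq C' k_2^{-1}\epsilon_0$ forces roughly $l\log^2(l)=O(\sqrt{m})$ --- a constraint relating depth and width that is not part of the statement (e.g.\ take $k_2=k_1+1$ with $k_1\asymp l$: one step of error $O(\kappa_\phi^4\log^2(l)m^{-1})$ must be absorbed into $\log(\tfrac{k_1+1}{k_1-1})\epsilon_0\asymp l^{-1}\epsilon_0$). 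The paper avoids this by noting that on $E_2$ the cosine $\rho_{k_2-1}$ sits at distance $\Theta((\Delta_\phi k_2)^{-2})$ from $+1$, so $\sqrt{\tfrac{1-\rho_{k_2-1}}{1+\rho_{k_2-1}}}=O(\Delta_\phi^{-1}(k_2-1)^{-1})$ and the gradient-dependence error per step is $O\bigl(\kappa_\phi^4\log(\tfrac{k_2-2}{k_1-1})^2(k_2-1)^{-1}m^{-1}\bigr)$, which fits the harmonic increment with only a logarithmic condition on $l$ versus $m$. You already invoke exactly this localization of $\rho_{k-1}$ near $1$ to control $\varrho''$; you need to apply it to the $\sqrt{\tfrac{1-\rho}{1+\rho}}$ factor as well. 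With that one-line repair your argument matches the paper's.
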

\begin{proof}
Combining \citet[Lemma~2.2.13]{Vandervaartetal2023} with Proposition~\ref{prop:bwd_concentration_1} and Proposition~\ref{prop:bwd_concentration_2}, for all $t \geq 0$ we have
\begin{multline*}
\Prob_{1:l-1}\left( \max_{k \in [2:l], i_1 \neq i_2 \in [1:n]}\left\{ (k-1) \left\vert X'_{k,k}(x_{i_1},x_{i_2},\theta_{1:k-1}) - \varrho'(\rho_{k-1}(x_{i_1},x_{i_2},\theta_{1:k-2})) \right\vert \right\} 
\right. \\ \left.
\geq O\left( \sqrt{\log(ln)} \kappa_\phi^2 m^{-\frac{1}{2}} \right) t^{\frac{1}{2}} 
+ O\left( \log(ln) \kappa_\phi^2 m^{-1} \right) t \right)
\leq 2e^{-t}
\end{multline*}
and
\begin{multline*}
\Prob_{1:l-1}\left( \max_{k_1 < k_2 \in [2:l], i_1 \neq i_2 \in [1:n]}\left\{ (k_2-1) 
\vphantom{\frac{\left\vert X'_{k_1,k_2}(x_{i_1},x_{i_2},\theta_{1:k_2-1}) - \E_{A_{k_2-1}} X'_{k_1,k_2}(x_{i_1},x_{i_2},\theta_{1:k_2-1}) \right\vert}{\sqrt{X'_{k_1,k_2-1}(x_{i_1},x_{i_1},\theta_{1:k_2-2})} \sqrt{X'_{k_1,k_2-1}(x_{i_2},x_{i_2},\theta_{1:k_2-2})}}} \right. \right. \\ \left. \left.
\frac{\left\vert X'_{k_1,k_2}(x_{i_1},x_{i_2},\theta_{1:k_2-1}) - \E_{A_{k_2-1}} X'_{k_1,k_2}(x_{i_1},x_{i_2},\theta_{1:k_2-1}) \right\vert}{\sqrt{X'_{k_1,k_2-1}(x_{i_1},x_{i_1},\theta_{1:k_2-2})} \sqrt{X'_{k_1,k_2-1}(x_{i_2},x_{i_2},\theta_{1:k_2-2})}} \right\} 
\right. \\ \left. 
\geq O\left( \sqrt{\log(ln)} \kappa_\phi^2 m^{-\frac{1}{2}} \right) t^{\frac{1}{2}} 
+ O\left( \log(ln) \kappa_\phi^2 m^{-1} \right) t \right)
\leq 2e^{-t}.
\end{multline*}

Let $t = m^p$ and condition on the opposites of these events and the events of Proposition~\ref{prop:limiting_cosine_concentration} and Proposition~\ref{prop:limiting_backprop_norm_concentration}, happening at the same time with probability at least $1 - 14e^{-m^p}$ via a Fr\'echet bound. Denoting $\epsilon = O\left( \sqrt{\log(ln)} \kappa_\phi^2 m^{-\frac{1}{2}(1-p)} \right)$ (with the implicit constant changing from time to time), we then have
\begin{equation}\label{eq:limiting_bwd_1_concentration}
\left\vert X'_{k,k}(x_{i_1},x_{i_2},\theta_{1:k-1}) - \varrho'(\rho_{k-1}(x_{i_1},x_{i_2},\theta_{1:k-2})) \right\vert
\leq (k-1)^{-1} \epsilon
\end{equation}
for all $k \in [2:l]$ and $i_1, i_2 \in [1:n]$ and
\begin{multline}\label{eq:limiting_bwd_2_concentration}
\left\vert X'_{k_1,k_2}(x_{i_1},x_{i_2},\theta_{1:k_2-1}) - \E_{A_{k_2-1}} X'_{k_1,k_2}(x_{i_1},x_{i_2},\theta_{1:k_2-1}) \right\vert \\
\leq \sqrt{X'_{k_1,k_2-1}(x_{i_1},x_{i_1},\theta_{1:k_2-2})} \sqrt{X'_{k_1,k_2-1}(x_{i_2},x_{i_2},\theta_{1:k_2-2})} (k_2-1)^{-1} \epsilon
\end{multline}
for $k_1 < k_2 \in [2:l]$ and $i_1, i_2 \in [1:n]$.

As $\varrho''(1-2w^{-2}) = -\frac{1}{2} \zeta''(w^{-2}) = \Delta_\phi \frac{2}{\pi} (1-(1-2w^{-2})^2)^{-\frac{1}{2}} = \Delta_\phi \frac{1}{\pi} (1-w^{-2})^{-\frac{1}{2}} w$ by \citet[Proposition~11]{mlpsateoc1}, by \eqref{eq:limiting_bwd_1_concentration} we have 
\begin{multline*}
\vert X'_{k,k}(x_{i_1},x_{i_2},\theta_{1:k-1}) - \varrho'(\varrho^{\circ (k-2)}(\rho_1(x_{i_1},x_{i_2}))) \vert \\
\leq \vert X'_{k,k}(x_{i_1},x_{i_2},\theta_{1:k-1}) - \varrho'(\rho_{k-1}(x_{i_1},x_{i_2},\theta_{1:k-2})) \vert \\
+ \vert \varrho'(\rho_{k-1}(x_{i_1},x_{i_2},\theta_{1:k-2})) - \varrho'(\varrho^{\circ (k-2)}(\rho_1(x_{i_1},x_{i_2}))) \vert \\
\leq (k-1)^{-1} \epsilon + O(\Delta_\phi^2 (k-2)) \Delta_\phi^{-2} (k-2)^{-2} \epsilon 
\leq (k-1)^{-1} \epsilon
\end{multline*}
for all $k \in [2:l]$ and $i_1 \neq i_2 \in [1:n]$.

By Proposition~\ref{prop:bwd_expectation_offdiagonal}, for $i_1 \neq i_2 \in [1:n]$ and $k_1 < k_2 \in [2:l]$ we then have
\begin{multline*}
\vert \E_{A_{k_2-1}} X'_{k_1,k_2}(x_{i_1},x_{i_2},\theta_{1:k_2-1}) - \varrho'(\rho_{k_2-1}(x_{i_1},x_{i_2},\theta_{1:k_2-2})) X'_{k_1,k_2-1}(x_{i_1},x_{i_2},\theta_{1:k_2-2}) \vert \\
\leq \Delta_\phi \frac{8}{\pi} \sqrt{\frac{1-\rho_{k_2-1}(x_{i_1},x_{i_2},\theta_{1:k_2-2})}{1+\rho_{k_2-1}(x_{i_1},x_{i_2},\theta_{1:k_2-2})}} \Vert B_{k_1,k_2-1}(x_{i_1},\theta_{1:k_2-2}) \Vert \Vert B_{k_1,k_2-1}(x_{i_2},\theta_{1:k_2-2}) \Vert \\
\leq O((k_2-1)^{-1}) O\left( \kappa_\phi^2 \log\left( \frac{k_2-2}{k_1-1} \right) m^{-\frac{1}{2}} \right)^2
\leq O\left( \kappa_\phi^4 \log\left( \frac{k_2-2}{k_1-1} \right)^2 (k_2-1)^{-1} m^{-1} \right).
\end{multline*}

For $k_1 < k_2 \in [2:l]$, assume that $\vert X'_{k_1,k_2'}(x_{i_1},x_{i_2},\theta_{1:k_2'-1}) - \prod_{k=k_1}^{k_2'} \varrho'(\varrho^{\circ (k-2)}(\rho_1(x_{i_1},x_{i_2}))) \vert$ is at most $(\sum_{k=k_1-1}^{k_2'-1} k^{-1}) \epsilon + O(\kappa_\phi^4 (\sum_{k=k_1}^{k_2'-1} \log(\frac{k-1}{k_1-1})^2 k^{-1}) m^{-1})$ for $k_2' \in [k_1:k_2-1]$, which clearly holds for $k_2 = k_1+1$. By the triangle inequality, we have
\begin{multline*}
\left\vert X'_{k_1,k_2}(x_{i_1},x_{i_2},\theta_{1:k_2-1}) - \prod_{k=k_1}^{k_2} \varrho'(\varrho^{\circ (k-2)}(\rho_1(x_{i_1},x_{i_2}))) \right\vert \\
\leq \left\vert X'_{k_1,k_2}(x_{i_1},x_{i_2},\theta_{1:k_2-1}) - \E_{A_{k_2-1}} X'_{k_1,k_2}(x_{i_1},x_{i_2},\theta_{1:k_2-1}) \right\vert \\
+ \left\vert \E_{A_{k_2-1}} X'_{k_1,k_2}(x_{i_1},x_{i_2},\theta_{1:k_2-1}) - \varrho'(\rho_{k_2-1}(x_{i_1},x_{i_2},\theta_{1:k_2-2})) X'_{k_1,k_2-1}(x_{i_1},x_{i_2},\theta_{1:k_2-2}) \right\vert \\
+ \left\vert \varrho'(\rho_{k_2-1}(x_{i_1},x_{i_2},\theta_{1:k_2-2})) X'_{k_1,k_2-1}(x_{i_1},x_{i_2},\theta_{1:k_2-2})
\right. \\ \left.
- \varrho'(\varrho^{\circ (k_2-2)}(\rho_1(x_{i_1},x_{i_2}))) X'_{k_1,k_2-1}(x_{i_1},x_{i_2},\theta_{1:k_2-2}) \right\vert \\
+ \left\vert \varrho'(\varrho^{\circ (k_2-2)}(\rho_1(x_{i_1},x_{i_2}))) X'_{k_1,k_2-1}(x_{i_1},x_{i_2},\theta_{1:k_2-2}) - \prod_{k=k_1}^{k_2} \varrho'(\varrho^{\circ (k-2)}(\rho_1(x_{i_1},x_{i_2}))) \right\vert,
\end{multline*}
which is at most
\begin{multline*}
(k_2-1)^{-1} \epsilon
+ O\left( \kappa_\phi^4 \log\left( \frac{k_2-2}{k_1-1} \right)^2 (k_2-1)^{-1} m^{-1} \right) 
+ O\left( \Delta_\phi^2 (k_2-2) \right) \Delta_\phi^{-2} (k_2-2)^{-2} \epsilon \\
+ \left( \sum_{k=k_1-1}^{k_2-2} k^{-1} \right) \epsilon
+ O\left( \kappa_\phi^4 \left( \sum_{k=k_1}^{k_2-2} \log\left( \frac{k-1}{k_1-1} \right)^2 k^{-1} \right) m^{-1} \right) \\
\leq \left( \sum_{k=k_1-1}^{k_2-1} k^{-1} \right) \epsilon
+ O\left( \kappa_\phi^4 \left( \sum_{k=k_1}^{k_2-1} \log\left( \frac{k-1}{k_1-1} \right)^2 k^{-1} \right) m^{-1} \right).
\end{multline*}
By induction, we therefore have $\vert X'_{k_1,k_2}(x_{i_1},x_{i_2},\theta_{1:k_2-1}) - \prod_{k=k_1}^{k_2} \varrho'(\varrho^{\circ (k-2)}(\rho_1(x_{i_1},x_{i_2}))) \vert \leq \log(\frac{k_2}{k_1-1}) \epsilon + O(\kappa^4 \log(\frac{k_2}{k_1-1})^3 m^{-1}) \leq \log(\frac{k_2}{k_1-1}) \epsilon$ for $k_1 < k_2 \in [2:l]$ and $i_1 \neq i_2 \in [1:n]$.
\end{proof}

\begin{theorem}[Limiting concentration of $K(\theta)$]\label{thm:ntk_concentration}~\\
Given $p \in (0,1)$, a dataset $\{x_1,\cdots,x_n\} \subset \R^{m_0}$ of size $n \in \N+2$ with no parallel data points and setting \eqref{eq:optimal_qs} and \eqref{eq:optimal_gammas}, we have that
\[
\Prob\left( \left\Vert K(\theta) - \limiting{K} \right\Vert \leq O\left( \overline{\tau}^2 \left( \Delta_\phi^{-2} + \left( \log(l) + m_l^{\frac{1}{2}} \right) l \right) \sqrt{\log(ln)} \kappa_\phi^2 m^{-\frac{1}{2}(1-p)} \right) \right)
\]
is at least $1 - 16e^{-m^p}$ with $\overline{\tau} = \max_{i \in [1:n]}\left\{ \Vert x_i \Vert \right\}$.
\end{theorem}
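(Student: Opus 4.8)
The plan is to pass from the spectral norm of the $nm_l\times nm_l$ matrix $K(\theta)-\limiting{K}$ to a maximum over its $m_l\times m_l$ blocks, and to control each block by peeling off the last-layer randomness. Since $K(\theta)-\limiting{K}$ is symmetric, the Gershgorin bound $\Vert\cdot\Vert\le\Vert\cdot\Vert_\infty$ recorded in the preliminaries gives
\[
\left\Vert K(\theta)-\limiting{K}\right\Vert \le \max_{i_1,i_2\in[1:n]} \left\Vert K_\theta(x_{i_1},x_{i_2})-\limiting{K}(x_{i_1},x_{i_2})\right\Vert_\infty,
\]
the $\tfrac1n$ normalization being exactly what cancels the block-row count $n$. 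For each pair I would then split $K_\theta(x_{i_1},x_{i_2})-\limiting{K}(x_{i_1},x_{i_2})$ by the triangle inequality into the \emph{readout fluctuation} $K_\theta(x_{i_1},x_{i_2})-\E_{A_l}K_\theta(x_{i_1},x_{i_2})$ and the \emph{mean error} $\E_{A_l}K_\theta(x_{i_1},x_{i_2})-\limiting{K}(x_{i_1},x_{i_2})$, the latter being a scalar multiple of $\Id_{m_l}$ by Proposition~\ref{prop:readout_expectation}, so that its $\Vert\cdot\Vert_\infty$ is just the absolute value of that scalar.

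For the mean error, using \eqref{eq:optimal_qs} to kill the $m^{q_k-q_l}$ factors, Proposition~\ref{prop:readout_expectation} identifies the scalar as $\sum_{k=1}^{l-1}X_k X'_{k+1,l}+X_l$, which I would compare termwise (via Proposition~\ref{prop:fwd_inner_product_expectation} and a reindexing) with the factors $\Vert x_{i_1}\Vert\Vert x_{i_2}\Vert\,\varrho^{\circ(k-1)}(\rho_1)\prod_{k'=k}^{l-1}\varrho'(\varrho^{\circ(k'-1)}(\rho_1))$ appearing in $\limiting{K}(x_{i_1},x_{i_2})$. Conditioning on the event $E_4$ of Proposition~\ref{prop:limiting_backprop_inner_product_concentration} (which is contained in $E_1\cap E_2\cap E_3$), the activation norms, activation cosines and backpropagation Frobenius inner products all lie within the stated errors of their limits, and $X_k=O(\overline{\tau}^2)$, $X'_{k+1,l}=O(1)$; a product-rule estimate then bounds the error of the $k$-th summand by $O(\overline{\tau}^2)$ times $(\log(k)+\Delta_\phi^{-2}(k-1)^{-2}+\log(l/k))\epsilon$ with $\epsilon=O(\sqrt{\log(ln)}\kappa_\phi^2 m^{-\frac12(1-p)})$. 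Summing over $k$, the $\Delta_\phi^{-2}(k-1)^{-2}$ contributions converge to $O(\Delta_\phi^{-2})$ while the logarithmic ones sum to $O(l\log l)$, yielding a mean-error bound $O(\overline{\tau}^2(\Delta_\phi^{-2}+l\log l)\epsilon)$ uniformly in the pair (the diagonal blocks being the easy case, with $\rho=1$, $\varrho(1)=\varrho'(1)=1$ and no $\Delta_\phi^{-2}$ term).

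For the readout fluctuation I would apply Proposition~\ref{prop:readout_concentration} with $J(x_{i_1},x_{i_2},\theta_{1:l-1})$, bounding its norms on $E_1\cap E_3$ (hence on $E_4$) by $\Vert J\Vert=O(\overline{\tau}^2\kappa_\phi^4 l\, m^{-1})$ and $\Vert J\Vert_F=O(\overline{\tau}^2\kappa_\phi^2 l\, m^{-\frac12})$, using $X_k=O(\overline{\tau}^2)$, the backpropagation bounds $\Vert B_{k+1,l}(x)\Vert=O(\kappa_\phi^2\log(\tfrac{l-1}{k})m^{-\frac12})$ and $\Vert B_{k+1,l}(x)\Vert_F=O(1)$, submultiplicativity of the operator and Frobenius norms, and $\sum_k\log(\tfrac{l-1}{k})=O(l)$, $\sum_k\log(\tfrac{l-1}{k})^2=O(l)$. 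A maximal Bernstein inequality over the $n^2$ pairs, combined with a Fubini argument that conditions on $E_4\in\mathcal{B}(\Theta_{1:l-1})$ and then integrates over $A_l$, gives $\max_{i_1,i_2}\Vert K_\theta(x_{i_1},x_{i_2})-\E_{A_l}K_\theta(x_{i_1},x_{i_2})\Vert=O(\overline{\tau}^2\kappa_\phi^2 l\, m^{-\frac12(1-p)})$ modulo logarithmic factors, on an event of $A_l$-probability $\ge 1-2e^{-m^p}$; converting this operator-norm control into the $\Vert\cdot\Vert_\infty$ required above costs a factor $\sqrt{m_l}$, which is precisely the source of the $m_l^{\frac12}l$ term. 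Adding the two contributions gives the stated bound on the intersection of $E_4$ with this readout event, so the bound fails with probability at most $14e^{-m^p}+2e^{-m^p}=16e^{-m^p}$.

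The substantive work is already done in the preceding propositions, so the main obstacle is the bookkeeping: checking that the layerwise errors assemble into the compact form $\Delta_\phi^{-2}+(\log(l)+m_l^{\frac12})l$ — in particular that the cosine errors, decaying like $(k-1)^{-2}$, sum to an $l$-independent multiple of $\Delta_\phi^{-2}$ while the norm and backpropagation-norm errors, decaying only like $(k-1)^{-1}$, sum to $O(l\log l)$ — and correctly combining the sub-gaussian tails carried by $E_1,\dots,E_4$ with the sub-exponential (Hanson--Wright) tail of the readout fluctuation while tracking which step is over $\theta_{1:l-1}$ and which is over $A_l$.
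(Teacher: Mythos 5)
Your overall architecture matches the paper's: condition on the event of Proposition~\ref{prop:limiting_backprop_inner_product_concentration} to control the mean error $\E_{A_l}K_\theta-\limiting{K}$ termwise (with the $(k-1)^{-2}$ cosine errors summing to $O(\Delta_\phi^{-2})$ and the logarithmic errors to $O(l\log l)$), then apply a maximal version of Proposition~\ref{prop:readout_concentration} over the $n^2$ pairs for the readout fluctuation, and add up $14e^{-m^p}+2e^{-m^p}$. The mean-error analysis and the probability accounting are sound.

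There is, however, a genuine gap in your reduction from the $nm_l\times nm_l$ matrix to its blocks, and it concerns exactly the provenance of the $m_l^{\frac12}$ factor. Proposition~\ref{prop:readout_concentration} gives a tail of the form $2e^{-t^2/(O(\Vert J\Vert_F\sqrt{m_l})^2+O(\Vert J\Vert m_l)t)}$ for the \emph{operator norm} of the fluctuation; the $\sqrt{m_l}$ there comes from the $9^{m_l}$-net union bound and cannot be removed, so with $t=m^p$ the operator-norm deviation of each block is already $O(\overline{\tau}^2\kappa_\phi^2\,l\,m_l^{\frac12}m^{-\frac12(1-p)})$ — not $O(\overline{\tau}^2\kappa_\phi^2\,l\,m^{-\frac12(1-p)})$ as you assert "modulo logarithmic factors." If you then pass to $\Vert\cdot\Vert_\infty$ via the scalar Gershgorin bound, the conversion $\Vert A\Vert_\infty\le\sqrt{m_l}\,\Vert A\Vert$ costs a \emph{second} factor of $\sqrt{m_l}$ on the (non-diagonal) fluctuation blocks, and your route, executed faithfully, yields $m_l\,l$ in place of $m_l^{\frac12}l$ — strictly weaker than the theorem whenever $m_l>1$. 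You arrive at the stated exponent only because the two bookkeeping slips cancel. The fix is to avoid the entrywise $\Vert\cdot\Vert_\infty$ altogether and use the block analogue of the Gershgorin/Schur bound (this is what the paper invokes from Tretter): for a symmetric block matrix, $\Vert K(\theta)-\limiting{K}\Vert\le\max_{i_1}\frac1n\sum_{i_2}\Vert K_\theta(x_{i_1},x_{i_2})-\limiting{K}(x_{i_1},x_{i_2})\Vert$ with \emph{operator} norms of the blocks on the right, so that the single $m_l^{\frac12}$ from Proposition~\ref{prop:readout_concentration} is all that survives. A minor additional remark: carrying $\overline{\tau}^2$ through as "$X_k=O(\overline{\tau}^2)$" is workable but slightly delicate, since the limiting cosine recursions are stated for normalized activations; the cleaner route (taken by the paper) is to prove the unit-norm case and rescale at the end via the homogeneity identity $\partial_\theta N(x,\theta)=\Vert x\Vert\,\partial_\theta N(x/\Vert x\Vert,\theta)$, which produces the $\overline{\tau}^2$ as $\Vert D_\tau\boxtimes\Id_{m_l}\Vert^2$.
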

\begin{proof}
First, assume that $\Vert x_i \Vert = 1$ for all $i \in [1:n]$.

Condition on the event of Proposition~\ref{prop:limiting_backprop_inner_product_concentration} happening with probability at least $1-14e^{-m^p}$ with respect to $\theta_{1:l-1}$ and denote $\epsilon = O\left( \sqrt{\log(ln)} \kappa_\phi^2 m^{-\frac{1}{2}(1-p)} \right)$ (with the implicit constant changing from time to time). Note that $\vert X_k(x_i,x_i,\theta_{1:k-1}) X'_{k+1,l}(x_i,x_i,\theta_{1 : l-1}) - 1 \vert \leq \log(k) \epsilon + \log(\frac{l}{k}) \epsilon \leq \log(l) \epsilon$, so that $\vert \sum_{k=1}^{l-1} X_k(x_i,x_i,\theta_{1:k-1}) X'_{k+1,l}(x_i,x_i,\theta_{1:l-1}) + X_l(x_i,x_i,\theta_{1:l-1}) - l \vert \leq \log(l) l \epsilon$. By Proposition~\ref{prop:readout_expectation}, this quantity also bounds $\Vert \E_{A_l} K_\theta(x_i,x_i) - l \Id_{m_l} \Vert$. Now also that for $k \in [1:l-1]$ we have
\begin{multline*}
\left\vert X_k(x_1,x_2,\theta_{1:k-1}) X'_{k+1,l}(x_1,x_2,\theta_{1:l-1}) - \varrho^{\circ (k-1)}(\rho_1(x_1,x_2)) \prod_{k'=k+1}^l \varrho'(\varrho^{\circ (k'-2)}(\rho_1(x_1,x_2))) \right\vert \\
\leq \log(k) \epsilon
+ \Delta_\phi^{-2} (k-1)^{-2} \epsilon 
+ \log\left( \frac{l}{k} \right) \epsilon 
\leq (\Delta_\phi^{-2} (k-1)^{-2} + \log(l)) \epsilon,
\end{multline*}
so that
\begin{multline*}
\left\vert \sum_{k=1}^{l-1} X_k(x_1,x_2,\theta_{1:k-1}) X'_{k+1,l}(x_1,x_2,\theta_{1:l-1}) + X_l(x_1,x_2,\theta_{1:l-1}) 
\right. \\ \left.
- \sum_{k=1}^l \varrho^{\circ (k-1)}(\rho_1(x_1,x_2)) \prod_{k'=k+1}^l \varrho'(\varrho^{\circ (k'-2)}(\rho_1(x_1,x_2))) \right\vert 
\leq (\Delta_\phi^{-2} + \log(l) l) \epsilon.
\end{multline*}

By Proposition~\ref{prop:readout_expectation}, this quantity also bounds
\begin{multline*}
\left\Vert \E_{A_l} K_\theta(x_{i_1},x_{i_2}) - \left( \sum_{k=1}^l \varrho^{\circ (k-1)}(\rho_1(x_1,x_2)) \prod_{k'=k+1}^l \varrho'(\varrho^{\circ (k'-2)}(\rho_1(x_1,x_2))) \right) \Id_{m_l} \right\Vert.
\end{multline*}

Combining \citet[Lemma~2.2.13]{Vandervaartetal2023} with Proposition~\ref{prop:readout_concentration}, for all $t \geq 0$ we have
\begin{multline*}
\Prob_l\left( \max_{i_1,i_2 \in [1:n]}\left\{ \Vert K_\theta(x_{i_1},x_{i_2}) - \E_{A_l} K_\theta(x_{i_1},x_{i_2}) \Vert \right\} 
\right. \\ \left.
\geq O\left( \sqrt{\log(n)} \max_{i_1,i_2 \in [1:n]}\left\{ \Vert J(x_1,x_2,\theta_{1 : l-1}) \Vert_F \right\} m_l^{\frac{1}{2}} \right) t^{\frac{1}{2}}
\right. \\ \left. 
+ O\left( \log(n) \max_{i_1,i_2 \in [1:n]}\left\{ \Vert J(x_1,x_2,\theta_{1 : l-1}) \Vert \right\} m_l \right) t \right)
\leq 2e^{-t}.
\end{multline*}
Condition on the opposite of this event happening as well with $t = m^p$, so that the full probability bound becomes $1-16e^{-m^p}$. Note that for all $i_1,i_2 \in [1:n]$ we have
\begin{multline*}
\Vert J(x_1,x_2,\theta_{1 : l-1}) \Vert_F
\leq \sum_{k=1}^{l-1} O(\Vert B_{k+1, l}(x_1, \theta_{1 : l-1}) B_{k+1, l}(x_2, \theta_{1 : l-1})^* \Vert_F) \\
\leq \sum_{k=1}^{l-1} O(\Vert B_{k+1, l}(x_1, \theta_{1 : l-1}) \Vert \Vert B_{k+1, l}(x_2, \theta_{1 : l-1}) \Vert_F) \\
\leq \sum_{k=1}^{l-1} O\left( \kappa_\phi^2 \log\left( \frac{l-1}{k} \right) m^{-\frac{1}{2}} \right)
\leq O\left( \kappa_\phi^2 (l-1) m^{-\frac{1}{2}} \right)
\end{multline*}
and
\begin{multline*}
\Vert J(x_1,x_2,\theta_{1 : l-1}) \Vert
\leq \sum_{k=1}^{l-1} O(\Vert B_{k+1, l}(x_1, \theta_{1 : l-1}) \Vert \Vert B_{k+1, l}(x_2, \theta_{1 : l-1}) \Vert) \\
\leq \sum_{k=1}^{l-1} O\left( \kappa_\phi^2 \log\left( \frac{l-1}{k} \right) m^{-\frac{1}{2}} \right)^2
\leq O\left( \kappa_\phi^4 (l-1) m^{-1} \right),
\end{multline*}
so that we have
\begin{equation}\label{eq:limiting_readout_concentration}
\Vert K_\theta(x_{i_1},x_{i_2}) - \E_{A_l} K_\theta(x_{i_1},x_{i_2}) \Vert
\leq (l-1) l^{\frac{1}{2}} \epsilon
\end{equation}
for $i_1,i_2 \in [1:n]$. By the triangle inequality, we then have $\Vert K_\theta(x_i,x_i) - l \Id_{m_l} \Vert \leq \log(l) l \epsilon + (l-1) m_l^{\frac{1}{2}} \epsilon = (\log(l) + m_l^{\frac{1}{2}}) l \epsilon$ and 
\begin{multline*}
\left\Vert K_\theta(x_{i_1},x_{i_2}) - \left( \sum_{k=1}^l \varrho^{\circ (k-1)}(\rho_1(x_1,x_2)) \prod_{k'=k+1}^l \varrho'(\varrho^{\circ (k'-2)}(\rho_1(x_1,x_2))) \right) \Id_{m_l} \right\Vert \\
\leq (\Delta_\phi^{-2} + \log(l) l) \epsilon
+ (l-1) m_l^{\frac{1}{2}} \epsilon
= (\Delta_\phi^{-2} + (\log(l) + m_l^{\frac{1}{2}}) l) \epsilon.
\end{multline*}

By \citet[Theorem~1.13.1]{Tretter2008} and \citet[Remark~1.13.2]{Tretter2008}, we then have
\[
\left\Vert K(\theta) - \limiting{K} \right\Vert
\leq \max_{i_1 \in [1:n]}\left\{ \frac{1}{n} \sum_{i_2=1}^n (\Delta_\phi^{-2} + (\log(l) + m_l^{\frac{1}{2}}) l) \epsilon \right\}
\leq (\Delta_\phi^{-2} + (\log(l) + m_l^{\frac{1}{2}}) l) \epsilon.
\]

So far, we have assumed that $\Vert x_i \Vert = 1$ for all $i \in [1:n]$. Note now that by the homogeneity of $\phi$, we have $N(x,\theta) = \Vert x \Vert N(\frac{x}{\Vert x \Vert}, \theta)$ for all $x \in \R^{m_0}$ and $\theta \in \Theta$, so that $\partial_\theta N(x,\theta) = \Vert x \Vert \partial_\theta N(\frac{x}{\Vert x \Vert}, \theta)$ as well. Denoting by $\hat{K}(\theta)$ the NTK matrix over the normalized dataset $\{ \frac{x_i}{\Vert x_i \Vert} : i \in [1:n] \}$, the corresponding limit by $\limiting{\hat{K}}$ and the vector of norms $\tau = [ \Vert x_i \Vert : i \in [1:n] ]$, we then have $K(\theta) = (D_\tau \boxtimes \Id_{m_l}) \hat{K}(\theta) (D_\tau \boxtimes \Id_{m_l})$ and $\limiting{K} = (D_\tau \boxtimes \Id_{m_l}) \limiting{\hat{K}} (D_\tau \boxtimes \Id_{m_l})$, so that we get the claim as
\[
\left\Vert K(\theta) - \limiting{K} \right\Vert
= \left\Vert (D_\tau \boxtimes \Id_{m_l}) (\hat{K}(\theta) - \limiting{\hat{K}}) (D_\tau \boxtimes \Id_{m_l}) \right\Vert
\leq \Vert \tau \Vert_\infty^2 \left\Vert \hat{K}(\theta) - \limiting{\hat{K}} \right\Vert.
\]
\end{proof}

\begin{remark}[Optimal $p$]
Setting
\begin{equation}\label{eq:optimal_p}
p=\log_m(\log(m)),
\end{equation}
we have $m^p = \log(m)$, so that Theorem~\ref{thm:ntk_concentration} gives that
\[
\Prob\left( \left\Vert K(\theta) - \limiting{K} \right\Vert \leq O\left( \overline{\tau}^2 \left( \Delta_\phi^{-2} + \left( \log(l) + m_l^{\frac{1}{2}} \right) l \right) \sqrt{\log(ln) \log(m)} \kappa_\phi^2 m^{-\frac{1}{2}} \right) \right)
\]
is at least $1-O(m^{-1})$.
\end{remark}

\section{Limitations and Future Directions} \label{limitations}
The main limitation of our theory is that even though our MLP parameterization is quite flexible and covers both the kernel and rich regimes, it is still just an MLP, a basic neural network architecture with a narrow range of practical applicability in real-world problems. One future direction is to extend our results to other architectures such as convolutional neural networks and transformers. Another limitation of our work is that even though we proposed a number of hyperparameter settings that are in some sense optimal, we did not provide experimental evaluation of the possible empirical benefits during training. We intend to keep this paper focused on initialization and explore the practical implications in a followup paper. On the purely theoretical side, while our result can readily be applied to study the training of MLPs in the kernel regime by exploiting the lazy training phenomenon, we believe the most important future direction to be the study of the behavior of the NTK matrix during training in the rich regime, where lazy training is absent and the NTK matrix evolves in a nontrivial manner.

\section*{Acknowledgements}
D\'avid Terj\'ek and Diego Gonz\'alez-S\'anchez were supported by the Ministry of Innovation and Technology NRDI Office within the framework of the Artificial Intelligence National Laboratory (RRF-2.3.1-21-2022-00004).

\newpage

\bibliography{jmlr_submission}

\end{document}